%File: formatting-instructions-latex-2025.tex
%release 2025.0
\documentclass[letterpaper]{article} % DO NOT CHANGE THIS
\usepackage[colorlinks=true,citecolor=brown]{hyperref}
\usepackage{aaai25}  % DO NOT CHANGE THIS
\usepackage{times}  % DO NOT CHANGE THIS
\usepackage{helvet}  % DO NOT CHANGE THIS
\usepackage{courier}  % DO NOT CHANGE THIS
\usepackage{graphicx} % DO NOT CHANGE THIS
\urlstyle{rm} % DO NOT CHANGE THIS
  % DO NOT CHANGE THIS
\usepackage{natbib}  % DO NOT CHANGE THIS AND DO NOT ADD ANY OPTIONS TO IT
\usepackage{caption} % DO NOT CHANGE THIS AND DO NOT ADD ANY OPTIONS TO IT
\frenchspacing  % DO NOT CHANGE THIS
\setlength{\pdfpagewidth}{8.5in}  % DO NOT CHANGE THIS
\setlength{\pdfpageheight}{11in}  % DO NOT CHANGE THIS
%
% These are recommended to typeset algorithms but not required. See the subsubsection on algorithms. Remove them if you don't have algorithms in your paper.
\usepackage{algorithm}
\usepackage{algorithmic}

% add by Frankie
\usepackage{amsmath}
\usepackage{amssymb}
\usepackage{mathtools}
\usepackage{amsthm}

%%%%%%%%%%%%%%%%%%%%%%%%%%%%%%%%
% THEOREMS
%%%%%%%%%%%%%%%%%%%%%%%%%%%%%%%%
\theoremstyle{plain}
\newtheorem{theorem}{Theorem}[section]

\newtheorem{lemma}[theorem]{Lemma}

\theoremstyle{definition}

\theoremstyle{remark}

\usepackage[textsize=tiny]{todonotes}
\usepackage{xspace}

\newcommand{\ourmethod}{RAT\xspace}

\newcommand{\MDP}{RSA-MDP\xspace}
\graphicspath{{figs/}}

%%%%% NEW MATH DEFINITIONS %%%%%

\usepackage{amsmath,amsfonts,bm}

% Mark sections of captions for referring to divisions of figures

% Highlight a newly defined term

% Figure reference, lower-case.

% Figure reference, capital. For start of sentence

% Section reference, lower-case.

% Section reference, capital.

% Reference to two sections.

% Reference to three sections.

% Reference to an equation, lower-case.
\def\eqref#1{equation~\ref{#1}}
% Reference to an equation, upper case

% A raw reference to an equation---avoid using if possible

% Reference to a chapter, lower-case.

% Reference to an equation, upper case.

% Reference to a range of chapters

% Reference to an algorithm, lower-case.

% Reference to an algorithm, upper case.

% Reference to a part, lower case

% Reference to a part, upper case

\def\1{\bm{1}}

% Random variables

% rm is already a command, just don't name any random variables m

% Random vectors

\def\rva{{\mathbf{a}}}

\def\rvs{{\mathbf{s}}}

% Elements of random vectors

% Random matrices

% Elements of random matrices

% Vectors

% Elements of vectors

% Matrix

% Tensor
\DeclareMathAlphabet{\mathsfit}{\encodingdefault}{\sfdefault}{m}{sl}
\SetMathAlphabet{\mathsfit}{bold}{\encodingdefault}{\sfdefault}{bx}{n}

% Graph
\def\gA{{\mathcal{A}}}
\def\gB{{\mathcal{B}}}

\def\gL{{\mathcal{L}}}
\def\gM{{\mathcal{M}}}

\def\gO{{\mathcal{O}}}
\def\gP{{\mathcal{P}}}

\def\gR{{\mathcal{R}}}
\def\gS{{\mathcal{S}}}

% Sets

% Don't use a set called E, because this would be the same as our symbol
% for expectation.

% Entries of a matrix

% entries of a tensor
% Same font as tensor, without \bm wrapper

% The true underlying data generating distribution

% The empirical distribution defined by the training set

% The model distribution

% Stochastic autoencoder distributions

 % Laplace distribution

\newcommand{\E}{\mathbb{E}}

\newcommand{\KL}{D_{\mathrm{KL}}}

% Wolfram Mathworld says $L^2$ is for function spaces and $\ell^2$ is for vectors
% But then they seem to use $L^2$ for vectors throughout the site, and so does
% wikipedia.

\newcommand{\normmax}{L^\infty}

 % See usage in notation.tex. Chosen to match Daphne's book.

% \newcommand{\at}{\alpha_{t}}
% \newcommand{\atplusone}{\alpha_{t+1}}
% \newcommand{\hatat}{\hat{\alpha}_{t}}
% \newcommand{\hatatplusone}{\hat{\alpha}_{t+1}}
% \newcommand{\wt}{\omega_{t}}
% \newcommand{\wtplusone}{\omega_{t+1}}
% \newcommand{\wtplustwo}{\omega_{t+2}}
% \newcommand{\outerloss}{\gL_{\text{perf}}}
% \newcommand{\innerloss}{\gL_{\text{att}}}
% \newcommand{\outerlr}{\eta_{\omega}}
% \newcommand{\innerlr}{\eta_{\alpha}}
% \newcommand{\gradhata}{\nabla_{\hat{\alpha}}}
% \newcommand{\grada}{\nabla_{\alpha}}
% \newcommand{\gradw}{\nabla_{\omega}}

\newcommand{\I}{\mathbb{I}}
\newcommand{\at}{\alpha_{t}}
\newcommand{\atnext}{\alpha_{t+1}}

\newcommand{\hata}{\hat{\alpha}}
\newcommand{\hatat}{\hat{\alpha}_{t}}
\newcommand{\hatatnext}{\hat{\alpha}_{t+1}}
\newcommand{\hatatnextnext}{\hat{\alpha}_{t+2}}
\newcommand{\wt}{\omega_{t}}
\newcommand{\wtnext}{\omega_{t+1}}
\newcommand{\outerloss}{J_{\pi}}
\newcommand{\innerloss}{\gL}
\newcommand{\perturbedpi}{\pi_{\nu \circ \alpha}}

\newcommand{\outerlrt}{\beta_{t}}
\newcommand{\innerlrt}{\eta_{t}}

\newcommand{\innerlrtnext}{\eta_{t+1}}
\newcommand{\gradhata}{\nabla_{\hat{\alpha}}}
\newcommand{\grada}{\nabla_{\alpha}}
\newcommand{\gradw}{\nabla_{\omega}}

\newcommand{\expectation}[1]{\mathbb{E}\left[ #1 \right]}
\newcommand{\Bigexpectation}[1]{\mathbb{E}\Big[ #1 \Big]}
\newcommand{\ltwonorm}[1]{\left\| #1 \right\|}

\newcommand{\squaredltwonorm}[1]{\left\| #1 \right\|^2}

\newcommand{\specialcell}[2][c]{\begin{tabular}[#1]{@{}c@{}}#2\end{tabular}}

\usepackage{amsfonts}       % blackboard math symbols
\usepackage{nicefrac}       % compact symbols for 1/2, etc.
\usepackage{microtype}      % microtypography
\usepackage{xcolor}         % colors
\usepackage{enumitem} % [leftmargin=20pt]
\usepackage{subfig} % subfloat
\usepackage{multirow}
\usepackage{makecell} % divide row in table
\usepackage{array}
\usepackage{booktabs}

% [Fengshuo Bai] define for table 
\newcommand{\stdv}[1]{\scalebox{.70}{~$\pm$~#1}}

\newcommand{\pix}{\kern 0.1em}
\newcommand{\pmm}{\kern 0.35em$\pm$\kern 0.35em}
%
% These are are recommended to typeset listings but not required. See the subsubsection on listing. Remove this block if you don't have listings in your paper.
\usepackage{newfloat}
\usepackage{listings}
\DeclareCaptionStyle{ruled}{labelfont=normalfont,labelsep=colon,strut=off} % DO NOT CHANGE THIS
\lstset{%
	basicstyle={\footnotesize\ttfamily},% footnotesize acceptable for monospace
	numbers=left,numberstyle=\footnotesize,xleftmargin=2em,% show line numbers, remove this entire line if you don't want the numbers.
	aboveskip=0pt,belowskip=0pt,%
	showstringspaces=false,tabsize=2,breaklines=true}
\floatstyle{ruled}
\newfloat{listing}{tb}{lst}{}
\floatname{listing}{Listing}
%
% These are are recommended to typeset listings but not required. See the subsubsection on listing. Remove this block if you don't have listings in your paper.
\usepackage{newfloat}
\usepackage{listings}
\DeclareCaptionStyle{ruled}{labelfont=normalfont,labelsep=colon,strut=off} % DO NOT CHANGE THIS
\lstset{%
	basicstyle={\footnotesize\ttfamily},% footnotesize acceptable for monospace
	numbers=left,numberstyle=\footnotesize,xleftmargin=2em,% show line numbers, remove this entire line if you don't want the numbers.
	aboveskip=0pt,belowskip=0pt,%
	showstringspaces=false,tabsize=2,breaklines=true}
\floatstyle{ruled}
\newfloat{listing}{tb}{lst}{}
\floatname{listing}{Listing}
%
% Keep the \pdfinfo as shown here. There's no need
% for you to add the /Title and /Author tags.
\pdfinfo{
/TemplateVersion (2025.1)
}
\makeatletter
\renewcommand{\@fnsymbol}[1]{\textasteriskcentered}
\makeatother

\setcounter{secnumdepth}{2} %May be changed to 1 or 2 if section numbers are desired.

% The file aaai25.sty is the style file for AAAI Press
% proceedings, working notes, and technical reports.
%

% Title

% Your title must be in mixed case, not sentence case.
% That means all verbs (including short verbs like be, is, using,and go),
% nouns, adverbs, adjectives should be capitalized, including both words in hyphenated terms, while
% articles, conjunctions, and prepositions are lower case unless they
% directly follow a colon or long dash
\title{RAT: Adversarial Attacks on Deep Reinforcement Agents for Targeted Behaviors}
\author{
    %Authors
    % All authors must be in the same font size and format.
    Fengshuo Bai\textsuperscript{\rm 1, \rm 2},
    Runze Liu\textsuperscript{\rm 6},\\
    Yali Du\textsuperscript{\rm 3},
    Ying Wen\textsuperscript{\rm 1,}\footnotemark[1], 
    Yaodong Yang\textsuperscript{\rm 4, \rm 5,}\footnotemark[1], 
}
\affiliations{
    %Afiliations
    \textsuperscript{\rm 1}Shanghai Jiao Tong University
    \textsuperscript{\rm 2}Zhongguancun Academy
    \textsuperscript{\rm 3}King’s College London \\
    \textsuperscript{\rm 4}Center for AI Safety and Governance, Institute for AI, Peking University \\
    \textsuperscript{\rm 5}State Key Laboratory of General Artificial Intelligence, Peking University\\
    \textsuperscript{\rm 6}Tsinghua Shenzhen International Graduate School, Tsinghua University

    % If you have multiple authors and multiple affiliations
    % use superscripts in text and roman font to identify them.
    % For example,

    % Sunil Issar\textsuperscript{\rm 2}, 
    % J. Scott Penberthy\textsuperscript{\rm 3}, 
    % George Ferguson\textsuperscript{\rm 4},
    % Hans Guesgen\textsuperscript{\rm 5}
    % Note that the comma should be placed after the superscript

    % 1101 Pennsylvania Ave, NW Suite 300\\
    % Washington, DC 20004 USA\\
    % email address must be in roman text type, not monospace or sans serif
    % proceedings-questions@aaai.org
%
% See more examples next
}

%Example, Single Author, ->> remove \iffalse,\fi and place them surrounding AAAI title to use it
\iffalse
\title{My Publication Title --- Single Author}
\author {
    Author Name
}
\affiliations{
    Affiliation\\
    Affiliation Line 2\\
    name@example.com
}
\fi

\iffalse
%Example, Multiple Authors, ->> remove \iffalse,\fi and place them surrounding AAAI title to use it
\title{My Publication Title --- Multiple Authors}
\author {
    % Authors
    First Author Name\textsuperscript{\rm 1,\rm 2},
    Second Author Name\textsuperscript{\rm 2},
    Third Author Name\textsuperscript{\rm 1}
}
\affiliations {
    % Affiliations
    \textsuperscript{\rm 1}Affiliation 1\\
    \textsuperscript{\rm 2}Affiliation 2\\
    firstAuthor@affiliation1.com, secondAuthor@affilation2.com, thirdAuthor@affiliation1.com
}
\fi

% REMOVE THIS: bibentry
% This is only needed to show inline citations in the guidelines document. You should not need it and can safely delete it.
\usepackage{bibentry}
% END REMOVE bibentry

\begin{document}

\maketitle

\renewcommand{\thefootnote}{\fnsymbol{footnote}}
\footnotetext[1]{Corresponding authors. Email: yaodong.yang@pku.edu.cn, ying.wen@sjtu.edu.cn}

\begin{abstract}
Evaluating deep reinforcement learning (DRL) agents against targeted behavior attacks is critical for assessing their robustness. These attacks aim to manipulate the victim into specific behaviors that align with the attacker’s objectives, often bypassing traditional reward-based defenses. Prior methods have primarily focused on reducing cumulative rewards; however, rewards are typically too generic to capture complex safety requirements effectively. As a result, focusing solely on reward reduction can lead to suboptimal attack strategies, particularly in safety-critical scenarios where more precise behavior manipulation is needed. To address these challenges, we propose RAT, a method designed for universal, targeted behavior attacks. RAT trains an intention policy that is explicitly aligned with human preferences, serving as a precise behavioral target for the adversary. Concurrently, an adversary manipulates the victim's policy to follow this target behavior. To enhance the effectiveness of these attacks, RAT dynamically adjusts the state occupancy measure within the replay buffer, allowing for more controlled and effective behavior manipulation. Our empirical results on robotic simulation tasks demonstrate that RAT outperforms existing adversarial attack algorithms in inducing specific behaviors. Additionally, RAT shows promise in improving agent robustness, leading to more resilient policies. We further validate RAT by guiding Decision Transformer agents to adopt behaviors aligned with human preferences in various MuJoCo tasks, demonstrating its effectiveness across diverse tasks. The supplementary videos are available at \url{https://sites.google.com/view/jj9uxjgmba5lr3g}.
\end{abstract}

% Uncomment the following to link to your code, datasets, an extended version or similar.
%
% \begin{links}
%     \link{Code}{https://aaai.org/example/code}
%     \link{Datasets}{https://aaai.org/example/datasets}
%     \link{Extended version}{https://aaai.org/example/extended-version}
% \end{links}

\section{Introduction}
Reinforcement learning (RL)~\citep{sutton2018reinforcement} combined with deep neural networks (DNN)~\citep{lecun2015deep} shows extraordinary capabilities of allowing agents to master complex behaviors in various domains, including robotic manipulation \cite{wang2023order, picor2023bai}, video games~\cite{zhang2023replay, zhang2024exploiting, wang2023quantifying, wen2024reinforcing}, industrial applications~\citep{xu2023drl, shi2024autonomous, jia2024bench2drive}. However, recent findings~\citep{HuangPGDA17, 3237383.3238064, zhang2020robust, zhang2024a} show that even well-trained DRL agents suffer from vulnerability against test-time attacks, raising concerns in high-risk or safety-critical situations. To understand adversarial attacks on learning algorithms and enhance the robustness of DRL agents, it is crucial to evaluate the performance of the agents under any potential adversarial attacks with certain constraints. In other words, identifying a universal and strong adversary is essential.

\begin{figure}[t!]
    \centering
    \includegraphics[width=0.37\textwidth]{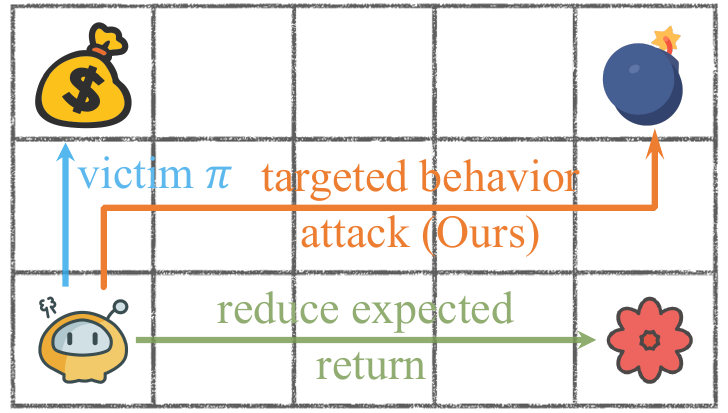}
    \caption{An example illustrating the distinction between our approach and generic attacks.}
    \label{fig:case}
\end{figure}

Existing methods pay little attention to devising universal, efficient, targeted behavior attacks. Firstly, several methods primarily focused on reducing the cumulative reward often lack specified attack targets. Prior research~\citep{zhang2020robust, zhang2021robust, sun2022who} considers training strong adversaries by perturbing state observations of victims to achieve the worst-case expected return. However, rewards lack the expressiveness to adequately encode complex safety requirements~\cite{vamplew2022scalar, 978-3-030-57628-8_1}. Additionally, requiring the victim's training rewards to craft such attacks is generally impractical. Therefore, only quantifying the decrease in cumulative reward can be too generic and result in suboptimal attack performance, particularly when adversaries are intended to execute specific safety-related attacks. Consider the scenario depicted in Figure~\ref{fig:case}, where a robot's objective is to collect coins. Previous attack methods aim at inducing the robot away from the coins by minimizing its expected return. However, this approach overlooks specific unsafe behaviors, such as manipulating the robot to collide with a bomb. Secondly, the previous targeted attack only considered predefined targets, which resulted in rigidity and inefficiency. ~\citep{abs-1905-12282, ijcai2017p525} mainly focuses on misleading the agent towards a predetermined state or target policy, overlooking specific behaviors. Additionally, the difficulty of providing a well-designed targeted policy makes these methods hard to apply. In a broader context, these adversarial attacks are incapable of controlling the behaviors of agents as a form of universal attack.

In this paper, we present a novel adversarial attack method, \ourmethod, which focuses on Adve\textbf{R}sarial \textbf{A}ttacks against deep reinforcement learning agents for \textbf{T}argeted behavior. \ourmethod consists of three core components: an intention policy, an adversary, and a weighting function, all trained simultaneously. Unlike previous methods that rely on predefined target policies, \ourmethod dynamically trains an intention policy that aligns with human preferences, providing a flexible and adaptive behavioral target for the adversary. By leveraging advances in preference-based reinforcement learning (PbRL)~\citep{lee2021pebble, park2022surf, liu2022metarewardnet, bai2024efficient}, the intention policy efficiently captures human intent during the training process. \ourmethod employs the adversary to perturb the victim agent’s observations, guiding the agent towards the behaviors specified by the intention policy. To further enhance attack effectiveness, we introduce a weighting function that adjusts the state occupancy measure, optimizing the distribution of states visited during training. This adjustment improves both the performance and efficiency of the attack. Through iterative refinement, \ourmethod steers the victim agent toward specific human-desired behaviors with greater precision than existing adversarial attack methods.

Our contributions are summarized as follows:
\textbf{(1)} We propose a universal targeted behavior attack method against DRL agents, designed to induce specific behaviors in a victim agent across a wide range of tasks.
\textbf{(2)} We provide a theoretical analysis of \ourmethod, offering a convergence guarantee under clearly defined conditions, which enhances the understanding of its effectiveness.
\textbf{(3)} Through extensive experiments across various domains, we demonstrate that \ourmethod significantly outperforms existing adversarial attack methods, showing that both online and offline RL agents, including Decision Transformer, are susceptible to our approach.
\textbf{(4)} We introduce two variants, RAT-ATLA and RAT-WocaR, which demonstrate how \ourmethod can be effectively employed to enhance the robustness of DRL agents through adversarial training, showing its versatility in both attack and defense.

\section{Related Work}
\noindent \textbf{Adversarial Attacks on State Observations in DRL.}
\citet{HuangPGDA17} applies the Fast Gradient Sign Method (FGSM)~\citep{GoodfellowSS14} to compute adversarial perturbations, directing the victim policy towards suboptimal actions. ~\citet{3237383.3238064} introduces a strategy to make the victim choose the worst action based on its Q-function. ~\citet{Gleave2020Adversarial} focuses on adversarial attacks within the context of a two-player Markov game rather than altering the agent's observation. ~\citet{zhang2020robust} proposes the state-adversarial MDP (SA-MDP) and develops two adversarial attack methods, Robust Sarsa (RS) and Maximal Action Difference (MAD). SA-RL~\citep{zhang2021robust} optimizes an adversary to perturb states using end-to-end RL. PA-AD~\citep{sun2022who} utilizes an RL-based "director" to determine the best policy perturbation direction and an optimization-based "actor" to generate perturbed states accordingly. Another line of work focuses on steering DRL agents toward specific states or policies. ~\citet{DBLP:conf/ijcai/LinHLSLS17, 9811574} propose targeted adversarial attack methods against DRL agents, aimed at directing the agent to a specific state. ~\citet{DBLP:journals/corr/abs-1905-12282} offer a novel approach by attacking the agent to mimic a target policy. However, these methods often require access to the victim's training reward or a predetermined target state or policy, which may be impractical. Our method differs from these methods by emphasizing the manipulation of the victim's behaviors without needing access to the victim's training reward or a pre-defined target state or policy.

\noindent \textbf{Robustness for State Observations in DRL.}
Training DRL agents with perturbed state observations from adversaries has been explored in various studies. \citet{pmlr-v119-shen20b, NEURIPS2021_dbb42293} focus on a strategy, ensuring that the policy produces similar outputs for similar inputs, which has demonstrated certifiable performance in video games. Another research direction, as presented in \citet{pmlr-v70-pinto17a, 8206245, 3237383.3238064}, aims to enhance an agent's robustness by training it under adversarial attacks. \citet{zhang2021robust} proposes ATLA, a method that alternates between training an RL agent and an RL adversary, significantly enhancing policy robustness. Building on this concept, ~\citet{sun2022who} proposed PA-ATLT, which employs a similar approach but utilizes a more advanced RL attacker. And several methods proposed by \citet{fischer2019online,pmlr-v100-lutjens20a}, concentrate on the lower bounds of the Q-function to certify an agent's robustness at every step. WocaR-RL~\citep{NEURIPS2022_8d6b1d77} is an efficient method that directly estimates and optimizes the worst-case reward of a policy under attacks without requiring extra samples for learning an attacker.

\noindent \textbf{Preference-based RL.} 
PbRL provides an effective way to incorporate human preferences into agent learning. ~\citet{christiano2017deep} proposes a foundational framework for PbRL. ~\citet{NEURIPS2018_8cbe9ce2} utilizes expert demonstrations to initialize the policy, besides learning the reward model from human preferences. Nonetheless, these earlier methods often require extensive human feedback, which is typically not feasible in practical scenarios. Recent studies have addressed this limitation: ~\citet{lee2021pebble} develops a feedback-efficient PbRL algorithm, leveraging unsupervised exploration and reward relabeling. ~\citet{park2022surf} furthers feedback efficiency through semi-supervised reward learning and data augmentation. Meanwhile, ~\citet{liang2021reward} proposes an intrinsic reward to enhance exploration. Continuing this trend, ~\citet{liu2022metarewardnet} improves feedback efficiency by aligning the Q-function with human preferences. Additionally, several works~\citep{bai2024efficient, liu2024pearl} have been dedicated to improving feedback efficiency by providing diverse insights. In our research, we employ PbRL to capture human intent and train an intention policy, which serves as the learning target for training adversaries.

\section{Problem Setup and Notations}
\noindent \textbf{The Victim Policy}. In RL, agent learning can be modeled as a finite-horizon Markov Decision Process (MDP) defined as a tuple $(\gS, \gA, \gR, \gP, \gamma)$. $\gS$ and $\gA$ denote state and action space, respectively. $\gR: \gS \times \gA \times \gS \rightarrow \mathbb{R}$ is the reward function, and $\gamma \in (0,1)$ is the discount factor. $\gP:\gS \times \gA \times \gS \rightarrow [0, 1]$ denotes the transition dynamics, which determines the probability of transferring to $\rvs^\prime$ given state $\rvs$ and action $\rva$. We denote the stationary policy $\pi_\nu: \gS \rightarrow \gP(\gA)$, where $\nu$ are parameters of the victim. We suppose the victim policy is fixed and uses the approximator.

\noindent \textbf{Threat Model.} To study targeted  behavior attack with human preferences, we formulate it as rewarded state-adversarial Markov Decision Process (\MDP). Formally, a \MDP is a tuple $(\gS, \gA, \gB, \widehat{\gR}, \gP, \gamma)$. 
The adversary $\pi_\alpha: \gS \rightarrow \gP(\gS)$ perturbs the states before the victim observes them, where $\alpha$ are parameters of the adversary. The adversary perturbs the state $\rvs$ into $\tilde{\rvs}$ restricted by $\gB(\rvs)$ (i.e., $\tilde{\rvs} \in \gB(\rvs)$). $\gB(\rvs)$ is defined as a small set $\{\tilde{\rvs} \in \gS: \parallel \rvs-\tilde{\rvs} \parallel_p \le \epsilon\}$, which limits the attack power of the adversary, and $\epsilon$ is the attack budget. 
Since directly generating $\tilde{\rvs} \in \gB(\rvs)$ is hard, the adversary learns to produce a Gaussian noise $\Delta$ with $\ell_\infty(\Delta)$ less than 1, and we obtain the perturbed state through $\tilde{\rvs} = \rvs + \Delta * \epsilon$.
The victim takes action according to the observed $\tilde{\rvs}$, while true states in the environment are not changed. Recall that $\pi_{\nu \circ \alpha}$ denotes the perturbed policy caused by adversary $\pi_\alpha$, i.e., $\pi_{\nu \circ \alpha}(\cdot|\rvs) =\pi_{\nu}\left( \cdot|\pi_{\alpha}(\rvs) \right), \forall \rvs \in \gS$.

Unlike SA-MDP~\citep{zhang2020robust}, \MDP introduces $\widehat{\gR}$, which learns from human preferences. The target of \MDP is to solve the optimal adversary $\pi_\alpha^*$, which enables the victim to achieve the maximum cumulative reward (i.e., from $\widehat{\gR}$) over all states. Lemma~\ref{lem:equivalence} shows that solving the optimal adversary in \MDP is equivalent to finding the optimal policy in MDP $\hat{\gM} = (\gS, \hat{\gA}, \widehat{\gR}, \widehat{\gP}, \gamma)$, where $\hat{\gA} = \gS$ and $\widehat{\gP}$ is the transition dynamics of the adversary.

\begin{lemma} 
\label{lem:equivalence}
Given a \MDP $\gM = (\gS,\gA, \gB, \widehat{\gR}, \gP, \gamma)$ and a fixed victim policy $\pi_\nu$, there exists a MDP $\hat{\gM} = (\gS, \hat{\gA}, \widehat{\gR}, \widehat{\gP}, \gamma)$ such that the optimal policy of $\hat{\gM}$ is equivalent to the optimal adversary $\pi_\alpha$ in \MDP given a fixed victim, where $\widehat{\gA}=\gS$ and
\begin{equation*}
    \widehat{\gP}(\rvs^\prime|\rvs,\rva) = \sum_{\rva \in \gA}{\pi_\nu(\rva|\widehat{\rva})} \gP(\rvs^\prime|\rvs,\rva) \quad \text{for} \ \rvs,\rvs^\prime \in \gS \ \textrm{and} \ \widehat{\rva} \in \widehat{\gA}.
\end{equation*}
\end{lemma}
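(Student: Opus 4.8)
The plan is to prove the equivalence by exhibiting an explicit identification between adversaries $\pi_\alpha$ in the \MDP $\gM$ and stationary policies $\hat{\pi}$ in the surrogate MDP $\hat{\gM}$, and then showing that corresponding objects induce identical value functions. Since the value functions agree pointwise, their maximizers coincide, which is exactly the claimed equivalence of the optimal adversary with the optimal policy of $\hat{\gM}$. This mirrors the reduction strategy used for SA-MDP in \citet{zhang2020robust}, adapted here to the learned reward $\widehat{\gR}$.

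First I would set up the correspondence. Both an adversary $\pi_\alpha:\gS\to\gP(\gS)$ and a policy $\hat{\pi}:\gS\to\gP(\widehat{\gA})$ with $\widehat{\gA}=\gS$ are maps from states to distributions over $\gS$, so I take the identification to be the identity, with the admissible ``actions'' at state $\rvs$ restricted to the perturbation set $\gB(\rvs)$ (i.e.\ $\widehat{\gA}(\rvs)=\gB(\rvs)$). Under this identification the adversary's chosen perturbed state $\tilde{\rvs}$ plays the role of the action $\widehat{\rva}$ in $\hat{\gM}$. Next I would derive the effective dynamics: in $\gM$, once the adversary outputs $\widehat{\rva}=\tilde{\rvs}$, the victim draws $\rva\sim\pi_\nu(\cdot\mid\widehat{\rva})$ and the true environment transitions via $\gP(\cdot\mid\rvs,\rva)$ from the \emph{unperturbed} state $\rvs$. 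Marginalizing over the victim's action gives
\[
\Pr[\rvs'\mid \rvs,\widehat{\rva}] \;=\; \sum_{\rva\in\gA}\pi_\nu(\rva\mid\widehat{\rva})\,\gP(\rvs'\mid\rvs,\rva),
\]
which is precisely $\widehat{\gP}$. The same marginalization applied to $\widehat{\gR}(\rvs,\rva,\rvs')$ yields the effective reward of $\hat{\gM}$; I would note that the shared symbol $\widehat{\gR}$ in the statement is to be read as this marginalized reward, since the action argument changes from $\gA$ to $\widehat{\gA}$.

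Then I would prove value-function equality. For any adversary $\pi_\alpha$ and its identified policy $\hat{\pi}$, the discounted return of $\perturbedpi$ in $\gM$ and of $\hat{\pi}$ in $\hat{\gM}$ are accumulated against the same kernel $\widehat{\gP}$ and the same reward, started from the same state, so $V^{\perturbedpi}_{\gM}(\rvs)=V^{\hat{\pi}}_{\hat{\gM}}(\rvs)$ for all $\rvs\in\gS$. I would establish this either by observing that the two induce the identical distribution over trajectories $(\rvs_0,\rvs_1,\dots)$ and accumulated reward, or by a short induction showing both value functions satisfy the same Bellman equation, whose operator is a $\gamma$-contraction and hence has a unique fixed point. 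Taking the supremum over adversaries on the left and over policies on the right, and using that the identification is a bijection onto the respective feasible sets, I conclude that $\pi_\alpha^\star$ is optimal in $\gM$ if and only if the corresponding $\hat{\pi}$ is optimal in $\hat{\gM}$.

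The main obstacle is bookkeeping rather than conceptual: I must handle the type mismatch cleanly — the victim's action and $\widehat{\gR}$ live over $\gA$ while the surrogate action lives over $\widehat{\gA}=\gS$ — and make the marginalization over $\pi_\nu$ precise, in particular carrying the state-dependent admissible set $\gB(\rvs)$ through the optimization so that optimality in $\hat{\gM}$ continues to respect the attack budget $\epsilon$. Once the identification and the identity $\widehat{\gP}(\rvs'\mid\rvs,\widehat{\rva})=\sum_{\rva}\pi_\nu(\rva\mid\widehat{\rva})\gP(\rvs'\mid\rvs,\rva)$ are pinned down, the value-function equality and the transfer of optimality are routine.
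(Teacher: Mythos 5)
Your proposal is sound, and there is an important contextual fact: the paper never actually proves Lemma~\ref{lem:equivalence} --- the appendix section ``Connection between \MDP and MDP'' merely restates it, implicitly deferring to the SA-MDP line of work (\citet{zhang2020robust}; \citet{zhang2021robust}), and the reduction you write out is precisely that intended argument. Your route --- identify each adversary $\pi_\alpha$ with a surrogate policy via the identity map, marginalize the victim's action to obtain $\widehat{\gP}(\rvs^\prime\mid\rvs,\widehat{\rva})=\sum_{\rva\in\gA}\pi_\nu(\rva\mid\widehat{\rva})\,\gP(\rvs^\prime\mid\rvs,\rva)$, show the induced state-trajectory distributions (equivalently, the Bellman fixed points) coincide so that $V^{\perturbedpi}_{\gM}=V^{\hat{\pi}}_{\hat{\gM}}$ pointwise, then transfer optimality through the bijection --- is the standard and correct one. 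Beyond supplying the missing proof, your bookkeeping fixes two genuine imprecisions in the statement itself: first, the displayed kernel as printed conditions on $\rva$ while also summing over $\rva$; your version, conditioning on $\widehat{\rva}$, is the correct reading. Second, the lemma declares $\widehat{\gA}=\gS$ with no mention of the attack budget, whereas the adversary in the \MDP is constrained to $\gB(\rvs)$; your state-dependent admissible set $\widehat{\gA}(\rvs)=\gB(\rvs)$ is not optional polish but necessary for the feasible sets to be in bijection --- without it the optimal policy of $\hat{\gM}$ could select perturbations violating the $\epsilon$-budget and the claimed equivalence would fail. Likewise your note that $\widehat{\gR}$ must be re-typed by the same marginalization is needed, since its action argument changes from $\gA$ to $\widehat{\gA}$. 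The only step worth one extra line to make airtight is the phrase ``maximizers coincide'': optimality ``over all states'' requires the existence of a uniformly optimal stationary policy in $\hat{\gM}$, which holds for finite discounted (or finite-horizon) MDPs by standard dynamic-programming arguments; citing that fact closes the argument completely.
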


\section{Method}\label{sec:method}
In this section, we introduce \ourmethod, a generic framework adaptable to any RL algorithm for conducting targeted behavior attack against DRL learners. \ourmethod is composed of three integral components: an intention policy $\pi_\theta$, the adversary $\pi_\alpha$, and the weighting function $h_\omega$, all of which are trained in tandem.
The fundamental concept behind \ourmethod is twofold:
\textbf{(1)} It develops an intention policy to serve as the learning objective for the adversary.
\textbf{(2)} A weighting function is trained to adjust the state occupancy measure of replay buffer, and the training of $\pi_\alpha$ and $h_\omega$ is formulated as a bi-level optimization problem. The framework of \ourmethod is depicted in Figure~\ref{fig:framework}, with a comprehensive procedure outlined in Appendix~\ref{appendix:pseudo_code}.

\begin{figure}[!h]
\centering
\includegraphics[width=0.44\textwidth, height=0.24\textwidth]{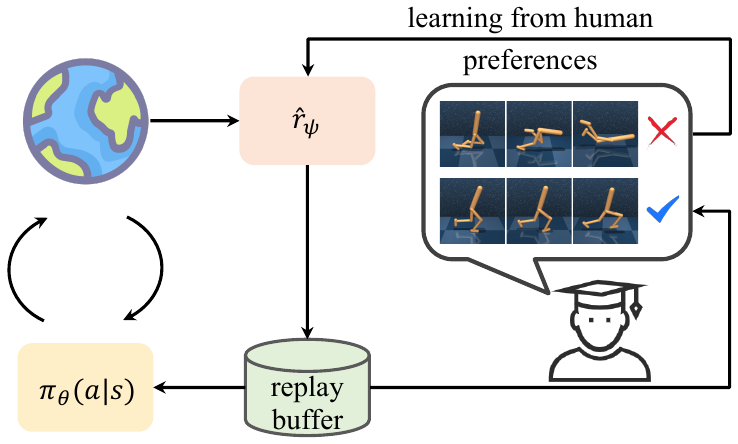}
\caption{\textbf{Diagram of PbRL.} The reward model $\widehat{r}_\psi$ is trained to align with human intention, providing estimations of rewards for policy learning. The policy is optimized by using transitions relabeled by the up-to-date reward model.}
\label{fig:pbrl}
\end{figure}

\begin{figure*}[!ht]
\centering
\includegraphics[width=0.98\linewidth]{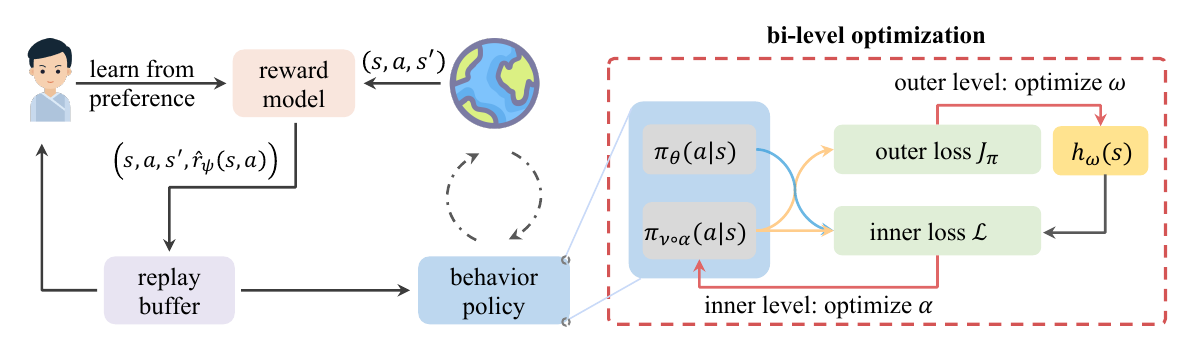}
\caption{\textbf{Overview of \ourmethod.} During training, it learns the intention policy $\pi_\theta$ and the reward model $\widehat{r}_\psi$, following the principles of PbRL. Simultaneously, it trains an adversary $\pi_\alpha$ and a weighting function $h_\omega$ within a bi-level optimization framework.
\textbf{In the inner-level}, the adversary is optimized such that the perturbed policy aligns with the intention policy. A validation loss $\outerloss$ is introduced, serving as a metric to assess the adversary's performance. \textbf{In the outer-level}, the weighting function is updated to improve the performance of the adversary by minimizing the outer loss $\outerloss$. 
}
\label{fig:framework}
\end{figure*}

\subsection{Learning Intention Policy}\label{subsec:intention_policy}
\ourmethod is designed to find an optimal adversary capable of manipulating the victim's behaviors in alignment with human intentions. To achieve this, we consider capturing human intentions and training an intention policy $\pi_\theta$, which translates these abstract intentions into action-level behaviors. A practical approach to realizing this concept is through PbRL, a method that aligns the intention policy with human intent without the need for reward engineering. As depicted in Figure~\ref{fig:pbrl}, within the PbRL framework, the agent does not rely on a ground-truth reward function. Instead, humans provide preference labels comparing two agent trajectories, and the reward model $\widehat{r}_\psi$ is trained to match human preferences~\citep{christiano2017deep}.

Formally, we denote a state-action sequence of length $k$, $\{\rvs_{t+1}, \rva_{t+1}, \cdots, \rvs_{t+k}, \rva_{t+k}\}$ as a segment $\sigma$. 
Given a pair of segments $(\sigma^0, \sigma^1)$, humans provide a preference label $y$ indicating which segment is preferred. Here, $y$ represents a distribution, specifically $y \in \{(0,1), (1,0), (0.5,0.5)\}$. In accordance with the Bradley-Terry model~\citep{pref-model-orig}, we construct a preference predictor as shown in~(\ref{eq:preference_r}):
\begin{equation}
    P_{\psi}[\sigma^0\succ\sigma^1] = \frac{\exp\sum_{t}\widehat{r}_{\psi}(\rvs_t^0, \rva_t^0)}{\sum_{i\in\{0,1\}}\exp\sum_{t}\widehat{r}_{\psi}(\rvs_t^i, \rva_t^i)},
\label{eq:preference_r}
\end{equation}
where $\sigma^0\succ\sigma^1$ indicates a preference for $\sigma^0$ over $\sigma^1$. This predictor determines the probability of a segment being preferred, proportional to its exponential return. 

The reward model is optimized to align the predicted preference labels with human preferences using a cross-entropy loss, as expressed in the following equation:
\begin{equation}
\begin{aligned}
   \mathcal{L}(\psi) = -\underset{(\sigma^0,\sigma^1,y)\sim \mathcal{D}}{\mathbb{E}}
   \Big[ \sum_{i=0}^{1} y(i)\log P_\psi[\sigma^i\succ\sigma^{1-i}] \Big],
\label{eq:reward_loss} 
\end{aligned}
\end{equation}
where $\mathcal{D}$ represents a dataset of triplets $(\sigma^0, \sigma^1, y)$ that consist of segment pairs and corresponding human preference labels. By minimizing the cross-entropy loss as defined in~(\ref{eq:reward_loss}), we derive an estimated reward function $\widehat{r}_\psi$. This function is then utilized to provide reward estimations for policy learning using any RL algorithm. Following PEBBLE~\citep{lee2021pebble}, we employ the Soft Actor-Critic (SAC)~\citep{haarnoja2018soft} algorithm to train the intention policy $\pi_\theta$. The Q-function $Q_\phi$ is optimized by reducing the Bellman residual, as defined below:
\begin{equation}
    J_Q(\phi) = \mathop{\mathbb{E}}\limits_{\tau_t \sim \gB} 
    \left[\left(Q_\phi(\rvs_t, \rva_t) - \widehat{r}_t - \gamma {\bar V}(\rvs_{t+1}) \right)^2 \right],
\label{eq:q_loss}
\end{equation}
where ${\bar V}(\rvs_t) = \mathbb{E}_{\rva_t\sim \pi_\theta} \big[ Q_{\bar \phi}(\rvs_t, \rva_t) - \mu \log \pi_{\theta}(\rva_t| \rvs_t) \big]$, $\tau_t = (\rvs_t, \rva_t, \widehat{r}_t, \rvs_{t+1})$ represents the transition at time $t$, with $\bar{\phi}$ being the parameter of the target soft Q-function.
The intention policy $\pi_\theta$ is updated to minimize the following loss:
\begin{equation}
    J_\pi(\theta) = \mathbb{E}_{\rvs_t\sim \mathcal{B}, \rva_t\sim \pi_\theta} \Big[ \mu \log \pi_\theta (\rva_t| \rvs_t) - Q_\phi (\rvs_t, \rva_t) \Big],
\label{eq:pi_loss}
\end{equation}
where $\mu$ is the temperature parameter. 

In this way, \ourmethod effectively captures human intent via the reward model $\widehat{r}_\psi$ and leverages $\pi_\theta$ to provide behavior-level guidance for the training of the adversary.

\subsection{Learning Adversary and Weighting Function}\label{subsec:adversary_policy}
To steer the victim policy towards behaviors desired by humans, \ourmethod trains the adversary by minimizing the Kullback-Leibler (KL) divergence between the perturbed policy $\perturbedpi$ and the intention policy $\pi_\theta$. Additionally, certain pivotal moments during adversary training can significantly influence the success rate of attacks. To ensure a stable training process and enhance the adversary's performance, a weighting function $h_\omega$ is introduced to re-weight the state occupancy measure of dataset.

Formally, our method is formulated as a bi-level optimization algorithm. It alternates between updating the adversary $\pi_\alpha$ and the weighting function $h_\omega$ through inner and outer optimization processes. In the inner level, the adversary's parameters $\alpha$ are optimized by minimizing the re-weighted KL divergence between $\perturbedpi$ and $\pi_\theta$, as specified in~(\ref{eq:kl_divergence}). At the outer level, the weighting function is developed to identify crucial states and improve the adversary's performance, as guided by a performance metric of the adversary. This metric is represented as a meta-level loss $\outerloss$, detailed in~(\ref{eq:out_loss}). The whole objective of \ourmethod is formulated as:
\begin{equation}
\begin{aligned}
    \min_\omega \quad & \outerloss(\alpha(\omega)), \\
    \text{s.t.} \quad & \alpha(\omega) = \arg\min_\alpha \innerloss(\alpha; \omega, \theta).
\end{aligned}
\label{eq:objective}
\end{equation}

\noindent \textbf{Inner Loop: Training Adversary $\pi_\alpha$.}
In inner-level optimization, with the given intention policy $\pi_\theta$ and the weighting function $h_\omega$, the goal is to identify the optimal adversary. This is achieved by minimizing the re-weighted KL divergence between $\perturbedpi$ and $\pi_\theta$, as shown in equation~(\ref{eq:kl_divergence}):
\begin{equation}
    \innerloss(\alpha;\omega,\theta) = \mathop{\mathbb{E}}\limits_{\rvs\sim\mathcal{B}}
    \Big[ h_\omega(\rvs) \KL\big( \perturbedpi(\cdot|\rvs) \parallel \pi_\theta(\cdot|\rvs) \big) \Big],
\label{eq:kl_divergence}
\end{equation}
where $h_\omega(\rvs)$ represents the importance weights determined by the weighting function $h_\omega$.

Intuitively, the adversary is optimized to ensure that the perturbed policy $\perturbedpi$ aligns behaviorally with the intention policy. Concurrently, $h_\omega$ allocates varying weights to states, reflecting their differing levels of importance. Through the synergistic effort of the intention policy and the weighting function, our method effectively trains an optimal adversary.

\noindent \textbf{Outer Loop: Training Weighting Function $h_\omega$.}
In outer-level optimization, the goal is to develop a precise weighting function that can identify significant moments and refine the state occupancy measure of the replay buffer to enhance adversary learning. As the intention policy is the target for the perturbed policy, it becomes simpler to establish a validation loss. This loss measures the perturbed policy's performance and simultaneously reflects the adversary's effectiveness. Consequently, the weighting function is trained to differentiate the importance of states by optimizing this validation loss. The perturbed policy $\perturbedpi$ is assessed using a policy loss in~(\ref{eq:out_loss}), adapted from the policy loss in~(\ref{eq:pi_loss}):
\begin{equation}
    \outerloss(\alpha(\omega)) =  \mathop{\mathbb{E}}\limits_{\rvs_t\sim\gB,\atop \rva_t\sim\pi_{\nu \circ \alpha(\omega)}}
    \big[ \mu \log\pi_{\nu \circ \alpha(\omega)}( \rva_t|\rvs_t) - Q_\phi(\rvs_t, \rva_t)\big],
\label{eq:out_loss}
\end{equation}
where $\alpha(\omega)$ denotes $\alpha$ implicitly depends on $\omega$. The optimization process involves calculating the implicit derivative of $\outerloss(\alpha(\omega))$ with respect to $\omega$ and finding the optimal $\omega^*$ through optimization.

\noindent \textbf{Practical Implementation.}
A one-step gradient update is used to approximate $\arg\min_\alpha$, as shown in~(\ref{eq:approximation}), thus establishing a connection between $\alpha$ and $\omega$:
\begin{equation}
    \hata(\omega) \approx \at - \innerlrt \left. \grada \innerloss(\alpha;\omega,\theta) \right|_{\at}.
\label{eq:approximation}
\end{equation}
The gradient of the outer loss with respect to $\omega$ is then determined using the chain rule:
\begin{equation}
    \begin{aligned}
        \left. \gradw \outerloss(\alpha(\omega)) \right|_{\wt}
        &= \left. \gradhata \outerloss(\hata(\omega)) \right|_{\hatat}
        \left. \gradw \hatat(\omega) \right|_{\wt} \\
        &= \sum_{\rvs} f(\rvs) \cdot \left. \gradw h(\rvs) \right|_{\wt},\\
    \end{aligned}
\end{equation}
where $f(\rvs) = -\innerlrt \cdot ( \gradhata \outerloss(\alpha(\omega)) )^\top \grada \KL( \pi_{\nu \circ \alpha}(\cdot|\rvs) \parallel \pi_\theta(\cdot|\rvs) )$. The detailed derivation is provided in Appendix~\ref{appendix:derivation}. The essence of this step is to establish and compute the relationship between $\alpha$ and $\omega$. By obtaining the implicit derivative, \ourmethod updates the parameters of the weighting function using gradient descent with an outer learning rate.

\subsection{Theoretical Analysis}
We provide convergence guarantee of \ourmethod. In Theorem~\ref{th:outer_loss_convergence_rate}, we demonstrate the convergence rate of the outer loss. We demonstrate that the gradient of the outer loss with respect to $\omega$ will converge to zero. Consequently, \ourmethod learns a more effective adversary by leveraging the importance of the weights generated by the optimal weighting function. Theorem~\ref{th:inner_loss_convergence} addresses the convergence of the inner loss. We prove that the inner loss of \ourmethod algorithm converges to critical points under certain reasonable conditions, thereby ensuring that the parameters of the adversary can converge towards the optimal parameters. Detailed theorems and their proofs are available in Appendix~\ref{appendix:proofs}.

\begin{theorem}
    Suppose $\outerloss$ is Lipschitz-smooth with constant L, the gradient of $\outerloss$ and $\innerloss$ is bounded by $\rho$. Let the training iterations be $T$, the inner-level optimization learning rate $\innerlrt=\min\{1, \frac{c_1}{T}\}$ for some constant $c_1 > 0$ where $\frac{c_1}{T} < 1$. Let the outer-level optimization learning rate $\outerlrt=\min\{\frac{1}{L}, \frac{c_2}{\sqrt{T}}\}$ for some constant $c_2 > 0$ where $c_2 \le \frac{\sqrt{T}}{L}$, and $\sum_{t=1}^\infty\outerlrt \le \infty, \sum_{t=1}^\infty\outerlrt^2 \le \infty$. The convergence rate of $\outerloss$ achieves
    \begin{equation}
        \min_{1 \le t \le T} \mathbb{E}\left[ \left\| \nabla_\omega \outerloss(\atnext(\wt)) \right\|^2 \right] \le \mathcal{O}\left(\frac{1}{\sqrt{T}}\right).
    \end{equation}
    \label{th:outer_loss_convergence_rate}
\end{theorem}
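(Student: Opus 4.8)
The plan is to read the outer update as inexact, stochastic gradient descent on the nonconvex objective $\omega \mapsto \outerloss(\atnext(\omega))$, where at step $t$ the direction $g_t$ is the mini-batch estimate of the meta-gradient $\gradw\outerloss(\atnext(\wt)) = \sum_{\rvs} f(\rvs)\,\gradw h(\rvs)$ obtained by the chain rule, with $\atnext(\omega)$ the one-step-updated adversary from~(\ref{eq:approximation}). I would set up the potential $\Phi_t := \outerloss(\atnext(\wt))$ and run the classical ``descent-lemma plus telescoping'' argument. Since $\outerloss$ is $L$-smooth, the descent lemma applied along the update $\wtnext = \wt - \outerlrt\, g_t$ yields
\begin{equation*}
\Phi_{t+1} \le \Phi_t + \big\langle \gradw\outerloss(\atnext(\wt)),\, \wtnext - \wt \big\rangle + \frac{L}{2}\squaredltwonorm{\wtnext - \wt},
\end{equation*}
where the left-hand potential $\Phi_{t+1}=\outerloss(\atnextnext(\wtnext))$ chains into the next step so that the $\Phi_t$ terms telescope.

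Next I would take the expectation conditioned on $\wt$ and handle the two variable terms. For the inner product, I would use that $g_t$ is (conditionally) unbiased for $\gradw\outerloss(\atnext(\wt))$ over the sampling of $\rvs\sim\gB$, so that $\expectation{\langle \gradw\outerloss(\atnext(\wt)), -\outerlrt g_t\rangle} = -\outerlrt\,\squaredltwonorm{\gradw\outerloss(\atnext(\wt))}$ up to a bias term coming from the single inner gradient step. For the quadratic term I would invoke the assumption that the gradients of $\outerloss$ and $\innerloss$ are bounded by $\rho$; since $g_t$ is built from $\gradhata\outerloss$ and $\grada\innerloss$ through the factor $f(\rvs)=-\innerlrt(\gradhata\outerloss)^\top\grada\KL(\cdot)$, its second moment is controlled, giving $\tfrac{L}{2}\outerlrt^2\expectation{\squaredltwonorm{g_t}} \le \tfrac{L}{2}\outerlrt^2\rho^2$. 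Rearranging produces the one-step inequality
\begin{equation*}
\Big(\outerlrt - \tfrac{L}{2}\outerlrt^2\Big)\,\expectation{\squaredltwonorm{\gradw\outerloss(\atnext(\wt))}} \le \Phi_t - \expectation{\Phi_{t+1}} + \tfrac{L}{2}\outerlrt^2\rho^2 + \text{(bias)},
\end{equation*}
where $\outerlrt\le \tfrac1L$ guarantees the left coefficient is at least $\tfrac12\outerlrt$.

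I would then sum from $t=1$ to $T$, telescope the $\Phi_t$ differences (bounded because $\outerloss$ is bounded below), and divide by $\sum_{t}\outerlrt$. With the stated schedule $\outerlrt=\min\{1/L,\,c_2/\sqrt{T}\}$ one has $\sum_t\outerlrt=\Theta(\sqrt{T})$ while the variance term sums to $\Theta(T\,\outerlrt^2)=\Theta(1)$, so both the telescoped gap and the variance contribution give a ratio of order $1/\sqrt{T}$; since $\min_t$ is bounded by the average, the claimed $\mathcal{O}(1/\sqrt{T})$ rate follows. The main obstacle is controlling the \emph{bias} term: because $\atnext(\omega)$ replaces the true inner minimizer $\alpha(\omega)$ by one gradient step, the estimator $g_t$ is only approximately aligned with the gradient of the idealized meta-objective, and the adversary parameters themselves move between consecutive potentials. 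I expect this to be where the schedule $\innerlrt=\min\{1,c_1/T\}$ is essential: the composite map $\omega\mapsto\atnext(\omega)$ has Jacobian of size $\mathcal{O}(\innerlrt)$, so the bias scales with $\innerlrt$ and, after summation against $\outerlrt$, contributes $\sum_t\outerlrt\innerlrt/\sum_t\outerlrt=\mathcal{O}(1/T)$, a strictly lower-order term. Making this bias bound rigorous — quantifying the smoothness of the one-step composite and verifying the bias is genuinely $\mathcal{O}(\innerlrt)$ uniformly in $t$ — is the technical crux, after which the remainder is routine stochastic-SGD bookkeeping.
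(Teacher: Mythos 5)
Your skeleton --- the potential $\Phi_t=\outerloss(\hatatnext(\wt))$, the descent lemma, telescoping, division by $\sum_t\outerlrt$, and substitution of the step-size schedules --- is exactly the paper's argument, but there are two divergences, one of which is a genuine gap in your plan. First, the paper's proof is entirely deterministic: it never assumes the update direction is an unbiased mini-batch estimator and never invokes a variance bound, neither of which is granted by the theorem's hypotheses. The bounded-gradient assumption $\ltwonorm{\gradhata\outerloss}\le\rho$, $\ltwonorm{\grada\innerloss}\le\rho$ makes every term worst-case, so your unbiasedness/variance machinery is both unsupported by the stated assumptions and unnecessary. Second, and more substantively, the ``bias'' you defer as the technical crux is dispatched in the paper by an exact two-bracket decomposition rather than by any Jacobian or uniform-bias estimate: it writes $\outerloss(\hatatnextnext(\wtnext))-\outerloss(\hatatnext(\wt))$ as the change in $\hata$ at frozen $\wtnext$ plus the change in $\omega$ at frozen $\hatatnext$. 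The first bracket is bounded along the inner update $\hatatnextnext(\wtnext)-\hatatnext(\wtnext)=-\innerlrtnext\gradhata\innerloss(\hatatnext)$ via the descent lemma and Cauchy--Schwarz by $\innerlrtnext\rho^2+\frac{L}{2}\innerlrtnext^2\rho^2$; the second is the standard $-(\outerlrt-\frac{L}{2}\outerlrt^2)\squaredltwonorm{\gradw\outerloss(\hatatnext(\wt))}$. This split is precisely what repairs your single descent inequality, which is not valid as written: your $\Phi_{t+1}$ involves $\hatatnextnext$, while the right-hand side of your inequality only accounts for the move in $\omega$, so the inequality silently drops the inner-step drift that the paper's first bracket makes explicit.

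Your order bookkeeping for that drift is also off in a way worth noting. Because the inner-step term enters the one-step inequality \emph{additively} (it is not multiplied by $\outerlrt$), after dividing by $\sum_{t=1}^T\outerlrt$ it contributes $\Theta(\innerlrt/\outerlrt)=\Theta\bigl(\frac{c_1/T}{c_2/\sqrt{T}}\bigr)=\Theta(1/\sqrt{T})$ --- the same leading order as the telescoped initial-value term, not the strictly lower-order $\mathcal{O}(1/T)$ you predicted. This is exactly why the hypothesis $\innerlrt=\min\{1,c_1/T\}$ is needed at that strength: a slower inner decay such as $\innerlrt\sim 1/\sqrt{T}$ would leave the drift at $\Theta(1)$ after normalization and destroy the rate. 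Once you replace your deferred bias argument with the paper's two-bracket decomposition, the remainder of your plan (telescoping, using $\outerlrt\le 1/L$ so the descent coefficient is at least $\outerlrt/2$, then substituting the schedules) coincides with the paper's computation and yields the claimed $\mathcal{O}(1/\sqrt{T})$ bound.
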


\begin{theorem}
    Suppose $\outerloss$ is Lipschitz-smooth with constant L, the gradient of $\outerloss$ and $\innerloss$ is bounded by $\rho$. Let the training iterations be $T$, the inner-level optimization learning rate $\innerlrt=\min\{1, \frac{c_1}{T}\}$ for some constant $c_1 > 0$ where $\frac{c_1}{T} < 1$. Let the outer-level optimization learning rate $\outerlrt=\min\{\frac{1}{L}, \frac{c_2}{\sqrt{T}}\}$ for some constant $c_2 > 0$ where $c_2 \le \frac{\sqrt{T}}{L}$, and $\sum_{t=1}^\infty\outerlrt \le \infty, \sum_{t=1}^\infty\outerlrt^2 \le \infty$. $\innerloss$ achieves
    \begin{equation}
        \lim_{t \to \infty} \E \left[ \left\| \grada \innerloss(\at; \wt) \right\|^2 \right] = 0.
    \end{equation}
    \label{th:inner_loss_convergence}
\end{theorem}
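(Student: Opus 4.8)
\textbf{Proof proposal for Theorem~\ref{th:inner_loss_convergence}.}
The plan is to track the inner loss $\innerloss(\at;\wt)$ along the jointly–updated trajectory $(\at,\wt)$ and show that it obeys a perturbed gradient–descent recursion whose perturbation is of higher order. First I would record the two coupled updates explicitly: the inner step $\atnext = \at - \innerlrt\,\grada\innerloss(\at;\wt)$ from~(\ref{eq:approximation}), and the outer step $\wtnext = \wt - \outerlrt\,\gradw\outerloss(\at(\wt))$. The key preliminary observation is that $\innerloss \ge 0$, since it is an expectation of nonnegative weights $h_\omega(\rvs)$ times a KL divergence; this lower bound is exactly what makes a telescoping argument possible.

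Next I would decompose the one-step change of the inner loss into an $\alpha$-descent part and an $\omega$-drift part:
\[
\innerloss(\atnext;\wtnext) - \innerloss(\at;\wt)
= \underbrace{\big[\innerloss(\atnext;\wtnext)-\innerloss(\atnext;\wt)\big]}_{\text{$\omega$-drift}}
+ \underbrace{\big[\innerloss(\atnext;\wt)-\innerloss(\at;\wt)\big]}_{\text{$\alpha$-descent}}.
\]
For the descent part I would apply the smoothness/descent lemma together with the inner update, obtaining the standard bound $-\big(\innerlrt-\tfrac{L}{2}\innerlrt^2\big)\,\|\grada\innerloss(\at;\wt)\|^2$; since $\innerlrt=\min\{1,c_1/T\}$ is small for large $T$, this coefficient is positive. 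For the drift part I would use boundedness of the gradient of $\innerloss$ in $\omega$ with $\|\wtnext-\wt\|\le \outerlrt\,\|\gradw\outerloss\|$, and crucially exploit the fact that the outer gradient computed in Section~\ref{sec:method} carries an explicit factor $\innerlrt$ through $f(\rvs)$, so the drift is of order $\outerlrt\,\innerlrt\,\rho^2$ — strictly smaller than the descent gain.

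I would then take expectations, sum from $t=1$ to $T$, and telescope, using $\innerloss\ge 0$ to bound the telescoped difference by the finite initial value, yielding
\[
\sum_{t=1}^{T}\big(\innerlrt-\tfrac{L}{2}\innerlrt^2\big)\,\E\big[\|\grada\innerloss(\at;\wt)\|^2\big]
\;\le\; \innerloss(\alpha_1;\omega_1) + \sum_{t=1}^{T}\mathcal{O}(\outerlrt\innerlrt).
\]
The step-size conditions make the right-hand side finite while $\sum_t\innerlrt$ diverges, which forces $\E[\|\grada\innerloss(\at;\wt)\|^2]\to 0$ along the sequence, matching the claimed limit.

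The main obstacle will be controlling the $\omega$-drift term cleanly: because $\alpha$ and $\omega$ are updated simultaneously, the inner loss is evaluated at a moving second argument, and I must show this coupling contributes only a higher-order perturbation rather than overwhelming the descent. This hinges on two facts working in tandem — that the outer gradient itself scales with $\innerlrt$ (so the two learning rates \emph{multiply}), and that the schedules $\innerlrt=\min\{1,c_1/T\}$ and $\outerlrt=\min\{1/L,c_2/\sqrt{T}\}$ satisfy the summability requirements — so it is the interplay between the bi-level rates, not either rate alone, that must be leveraged. A secondary technical point is that the stated hypotheses give Lipschitz-smoothness only for $\outerloss$; I would need an analogous Lipschitz-gradient property for $\innerloss$ in both $\alpha$ and $\omega$, which I would either promote to a standing assumption in the appendix or derive from boundedness of the second-order terms of the re-weighted KL objective.
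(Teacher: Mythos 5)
Your skeleton is essentially the paper's: the same two-term decomposition (with the $\omega$-drift evaluated at $\atnext$ and the $\alpha$-descent at frozen $\wt$), the descent lemma (Lemma~\ref{lem:lipschitz_upper_bound}) applied to each piece, expectations, and a telescoping sum using the lower-boundedness of the KL-based loss to arrive at $\sum_{t=1}^{\infty} \innerlrt\,\E\left[\left\|\grada\innerloss(\at;\wt)\right\|^2\right] < \infty$. The one stylistic difference is how the drift is controlled: you invoke the explicit $\innerlrt$ factor inside $\gradw\outerloss$ (via $f(\rvs)$) to make the drift $\mathcal{O}(\outerlrt\innerlrt)$, whereas the paper keeps the cross term $-\outerlrt\,\gradw\innerloss(\atnext;\wt)^\top\gradw\outerloss(\atnext(\wt))$ and disposes of it using the stated summability hypothesis $\sum_t \outerlrt < \infty$ together with the gradient bound $\rho$; both routes render the perturbation summable. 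Your flag about the missing smoothness hypothesis for $\innerloss$ is also accurate — the paper's proof silently applies the Lipschitz-smoothness lemma to $\innerloss$ in both arguments with the same constant $L$, so promoting it to an explicit assumption is the right repair.

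However, there is a genuine gap at your final step. From $\sum_t a_t b_t < \infty$ and $\sum_t a_t = \infty$ (with $a_t = \innerlrt - \tfrac{L}{2}\innerlrt^2$ and $b_t = \E[\|\grada\innerloss(\at;\wt)\|^2]$) you may conclude only $\liminf_{t\to\infty} b_t = 0$, not $\lim_{t\to\infty} b_t = 0$: the weighted series being finite permits $b_t$ to spike repeatedly on a sparse set of iterations. Since the theorem asserts the full limit, an additional ingredient is required, and this is precisely where the paper spends its remaining effort. It controls the oscillation of the sequence, proving $\left| \E\left[\left\|\grada\innerloss(\atnext;\wtnext)\right\|^2\right] - \E\left[\left\|\grada\innerloss(\at;\wt)\right\|^2\right] \right| \le 2\sqrt{2}L\rho^2\innerlrt\outerlrt \le C\,\innerlrt$, using the identity $\left|(\|a\|+\|b\|)(\|a\|-\|b\|)\right| \le \|a+b\|\,\|a-b\|$, the Lipschitz continuity of the gradient, and the boundedness of both updates; it then invokes the sequence-convergence lemma (Lemma~\ref{lem:sequence_convergence}, i.e., Lemma A.5 of Mairal 2013): if $\sum a_t$ diverges, $\sum a_t b_t$ converges, and $|b_{t+1}-b_t| \le C a_t$, then $b_t \to 0$. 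Your proposal contains no analogue of this increment-control step, so the sentence ``which forces $\E[\|\grada\innerloss(\at;\wt)\|^2]\to 0$'' is an invalid inference as written; adding the Lipschitz bound on successive gradient-norm differences and citing such a lemma would complete the argument.
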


\section{Experiments}
In this section, we evaluate our method using a range of robotic simulation manipulation tasks from Meta-world~\citep{yu2020meta} and continuous locomotion tasks from MuJoCo~\citep{6386109}. Our objective is to address the following key questions:
\textbf{(1)} Does our method have the capacity to implement universal targeted  behavior attack against DRL learners?
\textbf{(2)} Can our approach successfully deceive a commonly used offline RL method, such as the Decision Transformer~\citep{chen2021decision}, to execute specific behaviors?
\textbf{(3)} Does our method contribute to enhancing an agent's \textbf{robustness} through adversarial training?
\textbf{(4)} Are the individual components within our approach \textbf{effective}?
The responses to problems $(1)-(4)$ are addressed in Sections~\ref{subsec:case_one} through \ref{subsec:ablation_study}, respectively. A detailed description of the experimental setup is available in Appendix~\ref{appendix:exp_detail}.

% The table contains the results for the Case I: Manipulation on DRL Agents
\begin{table*}[!ht]
\centering
\caption{The average attack success rate, along with the standard deviation, is calculated for various evasion attacks against victim agents in both scenarios. The results are averaged over 30 episodes. Full results are available at Appendix~\ref{appendix:full_experiment_results}.}
\label{tab:case_one_results}
\resizebox{0.99\textwidth}{!}{
\begin{tabular}{cl|ccccc|c}
\toprule
& \multicolumn{1}{c|}{Task} & PA-AD (oracle) & PA-AD & SA-RL (oracle) & SA-RL & Random & \ourmethod (ours) \\
\midrule
\parbox[t]{3mm}{\multirow{8}{*}{\pix\rotatebox[origin=c]{90}{\footnotesize{Manipulation}}}} & ~~
Door Lock & 4.50 \stdv{4.00} & 3.50 \stdv{6.63} & 76.50 \stdv{14.97} & 39.50 \stdv{30.48} & 0.00 \stdv{0.00} & 87.00 \stdv{10.00} \\ & ~~ 
Door Unlock & 0.00 \stdv{0.00} & 0.00 \stdv{0.00} & 11.11 \stdv{13.43} & 0.56 \stdv{0.00} & 0.00 \stdv{0.00} & 97.00 \stdv{6.63} \\ & ~~ 
Window Open & 0.00 \stdv{0.00} & 0.00 \stdv{0.00} & 30.00 \stdv{21.19} & 8.00 \stdv{15.13} & 0.00 \stdv{0.00} & 72.50 \stdv{19.62} \\ & ~~ 
Window Close & 0.00 \stdv{0.00} & 0.50 \stdv{0.00} & 99.00 \stdv{3.00} & 23.50 \stdv{37.22} & 0.00 \stdv{0.00} & 72.50 \stdv{40.01} \\ & ~~ 
Drawer Open & 0.00 \stdv{0.00} & 0.00 \stdv{0.00} & 100.00 \stdv{0.00} & 26.00 \stdv{27.28} & 0.00 \stdv{0.00} & 97.50 \stdv{4.00} \\ & ~~ 
Drawer Close & 0.00 \stdv{0.00} & 0.00 \stdv{0.00} & 57.50 \stdv{18.00} & 4.00 \stdv{8.00} & 0.00 \stdv{0.00} & 76.00 \stdv{24.98} \\ & ~~ 
Faucet Open & 1.50 \stdv{4.00} & 2.50 \stdv{4.00} & 63.50 \stdv{20.52} & 0.00 \stdv{0.00} & 0.00 \stdv{0.00} & 84.00 \stdv{21.19} \\ & ~~ 
Faucet Close & 0.00 \stdv{0.00} & 0.00 \stdv{0.00} & 66.50 \stdv{16.85} & 4.50 \stdv{9.22} & 0.00 \stdv{0.00} & 91.00 \stdv{6.71} \\
\midrule
\parbox[t]{2mm}{\multirow{8}{*}{\pix\rotatebox[origin=c]{90}{Opposite}}} & ~~ 
Door Lock & 9.50 \stdv{7.48} & 10.00 \stdv{9.17} & 8.00 \stdv{13.42} & 2.00 \stdv{0.00} & 1.00 \stdv{3.00} & 99.00 \stdv{3.00} \\ & ~~ 
Door Unlock & 3.00 \stdv{5.00} & 4.00 \stdv{4.58} & 8.00 \stdv{18.33} & 6.00 \stdv{12.00} & 0.00 \stdv{0.00} & 98.50 \stdv{4.00} \\ & ~~ 
Window Open & 15.50 \stdv{12.21} & 17.00 \stdv{11.14} & 15.00 \stdv{16.61} & 7.00 \stdv{16.12} & 1.00 \stdv{3.00} & 77.50 \stdv{33.41} \\ & ~~ 
Window Close & 38.50 \stdv{23.69} & 55.00 \stdv{14.70} & 63.00 \stdv{34.70} & 20.00 \stdv{39.80} & 5.50 \stdv{5.00} & 99.00 \stdv{0.00} \\ & ~~ 
Drawer Open & 1.50 \stdv{4.00} & 0.50 \stdv{3.00} & 1.11 \stdv{0.00} & 3.00 \stdv{0.00} & 0.00 \stdv{0.00} & 85.50 \stdv{29.34} \\ & ~~ 
Drawer Close & 88.50 \stdv{7.81} & 79.00 \stdv{18.44} & 81.00 \stdv{20.88} & 63.00 \stdv{32.50} & 0.00 \stdv{0.00} & 92.00 \stdv{17.32} \\ & ~~ 
Faucet Open & 6.50 \stdv{9.00} & 10.00 \stdv{13.75} & 6.00 \stdv{18.00} & 0.00 \stdv{0.00} & 0.00 \stdv{0.00} & 81.50 \stdv{29.68} \\ & ~~ 
Faucet Close & 19.00 \stdv{13.27} & 32.00 \stdv{11.00} & 7.00 \stdv{12.81} & 8.00 \stdv{16.00} & 0.50 \stdv{0.00} & 96.00 \stdv{12.81} \\
\bottomrule
\end{tabular}
} % resize the talbe
\end{table*}

\subsection{Setup}
\noindent \textbf{Compared Methods.}
We compare our algorithm with Random attack and two state-of-the-art evasion attack methods, including
(1) \emph{Random}: a basic baseline that samples random perturbed observations via a uniform distribution.
(2) \emph{SA-RL}~\citep{zhang2021robust}: learning an adversary in the form of end-to-end RL formulation.
(3) \emph{PA-AD}~\citep{sun2022who}: combining RL-based “director” and non-RL “actor” to find state perturbations.
(4) \emph{\ourmethod}:  our proposed method, which collaboratively learns adversarial policy and weighting function with the guidance of intention policy.

% The figure contains the results for the Case II: Manipulation on Sequence Model Agents
\begin{figure*}[!ht]
\centering
\begin{tabular}{ccc}
\hspace*{-0.7em} \subfloat[Cheetah-Run Backwards]{\includegraphics[width=0.47\linewidth]{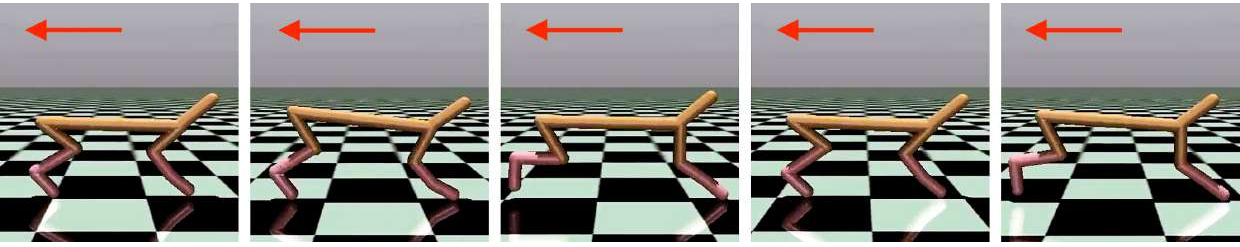}\label{fig:dt_a}}
& \hspace*{-1.1em} \subfloat[Walker-Stand on One Foot]{\includegraphics[width=0.47\linewidth]{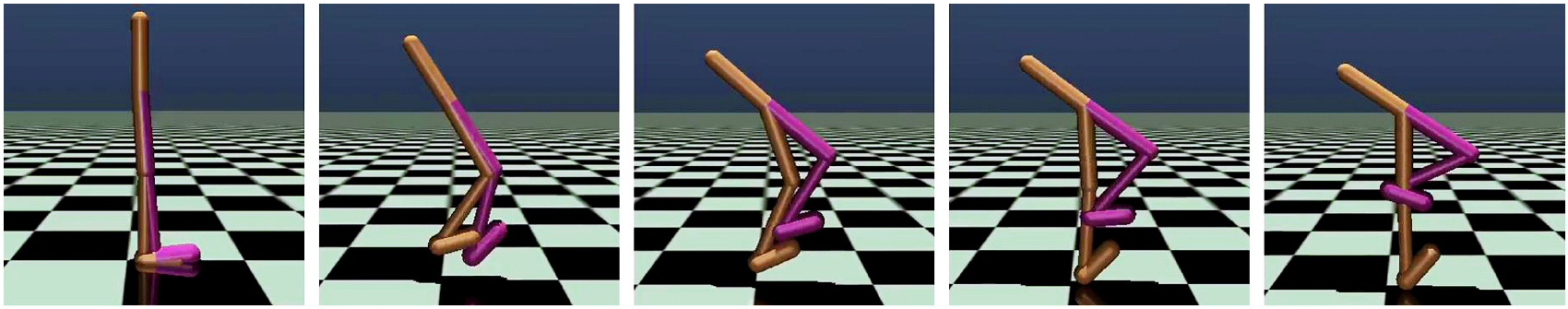}\label{fig:dt_b}}
\\
\hspace*{-0.7em} \subfloat[Cheetah-90 Degree Push-up]{\includegraphics[width=0.47\linewidth]{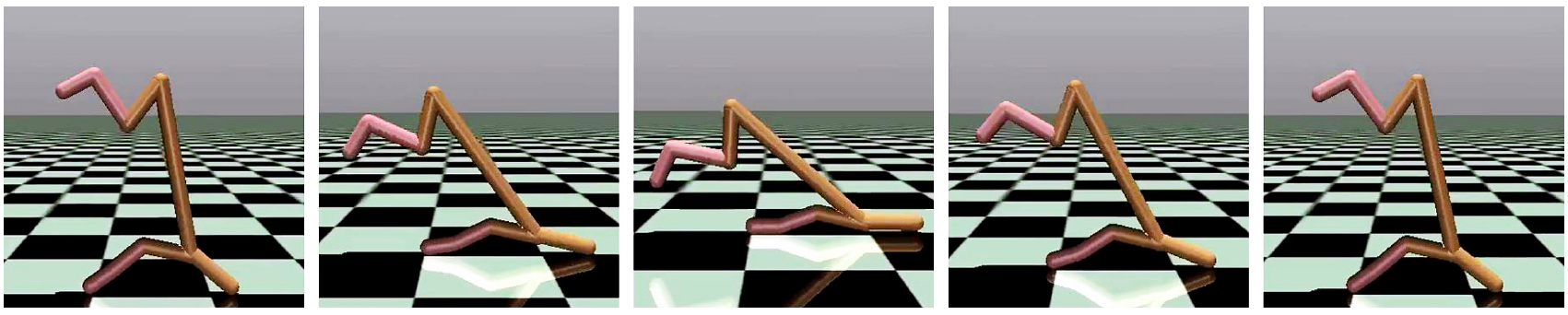}\label{fig:dt_c}}
& \hspace*{-1.1em} \subfloat[Walker-Dance]{\includegraphics[width=0.47\linewidth]{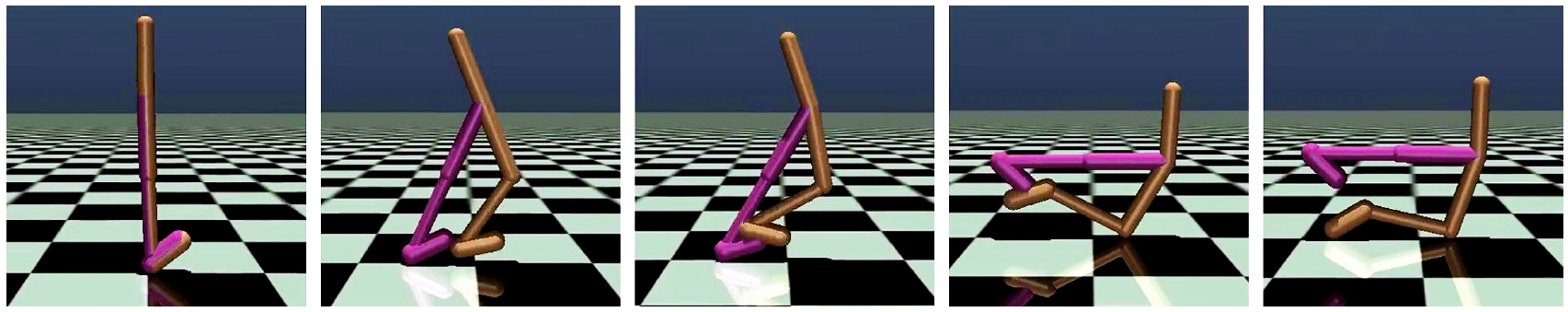}\label{fig:dt_d}}
\end{tabular}
\caption{Human desired behaviors behaved by the Decision Transformer under the attack of \ourmethod.}
\label{fig:fool_dt}
\end{figure*}

\noindent \textbf{Implementation Settings.}
In our experiments, all methods follow PEBBLE~\citep{lee2021pebble} to learn the reward model using the same number of preference labels. The key modification in employing PbRL is that the rewards in transitions are derived from the reward model $\widehat{r}_\psi$, rather than ground-truth rewards, and this model is trained by minimizing \eqref{eq:reward_loss}. Specifically, in the original versions of SA-RL~\citep{zhang2021robust} and PA-AD~\citep{sun2022who}, the negative value of the reward obtained by the victim is used to train adversaries. We adapt this by using estimated rewards from $\widehat{r}_\psi$. To evaluate performance effectively and expedite the training, we follow the foundational settings in PbRL~\citep{lee2021pebble, park2022surf, liu2022metarewardnet}, considering the use of a scripted teacher that always provides accurate preference labels. For the manipulation scenario, we employ 9000 labels across all tasks. In the opposite behavior scenario, the label usage varies: 1000 for Window Close, 3000 for Drawer Close, 5000 for Faucet Open, Faucet Close, and Window Open, and 7000 for Drawer Open, Door Lock, and Door Unlock. More information about the scripted teacher and preference collection is detailed in Appendix~\ref{appendix:pbrl_detail}. Moreover, to minimize the influence of PbRL, we include oracle versions of SA-RL and PA-AD, which utilize the ground-truth rewards of the targeted task. For implementing SA-RL\footnote{\url{https://github.com/huanzhang12/ATLA_robust_RL}} and PA-AD\footnote{\url{https://github.com/umd-huang-lab/paad_adv_rl}}, the official repositories are employed. As in most existing research~\citep{zhang2020robust, zhang2021robust, sun2022who}, we also use state attacks with $\normmax$ norm in our experiments.

To ensure a fair comparison, identical experimental settings (including hyper-parameters and neural networks) for reward learning are applied across all methods. We conduct a quantitative evaluation of all methods by comparing their average attack success rates. Comprehensive details on hyper-parameter settings, implementation details, and scenario designs are available in Appendix~\ref{appendix:exp_detail}.

\noindent \textbf{Evaluation Metrics.}\label{par:evaluation_metrics}
The success metric for adversarial attacks revolves around the proximity between the task-relevant object and its final goal position, denoted as $\I_{\|o - t\|_2 < \epsilon}$, where $\epsilon$ is a minimal distance threshold. In the manipulation scenario, we set $\epsilon=0.05$ (5cm). For the opposite behaviors scenario, we apply the success metrics and thresholds specified for each task by Meta-world~\citep{yu2020meta}. We summarize the all success metrics in our experiments in the Table~\ref{tab:task_metrics} in the Appendix~\ref{appendix:scenario_design}.

%%%%%%%%%% Case I: Manipulation on DRL Agents %%%%%%%%%% 
\subsection{Case I: Manipulation on DRL Agents}\label{subsec:case_one}
We first conduct an evaluation of our method and various other adversarial attack algorithms across two different scenarios, applying them to a range of simulated robotic manipulation tasks. Each victim agent is a well-trained SAC~\citep{haarnoja2018soft} agent, specialized for a specific manipulation task and trained for ${10}^6$ timesteps using the open-source code \footnote{\url{https://github.com/denisyarats/pytorch_sac}} available. Details on hyperparameter settings are provided in Appendix~\ref{appendix:victim_agents_settings}.

\noindent \textbf{Scenarios on Manipulation.}
In this scenario, our objective was to manipulate the victim (robotic arm) to grasp objects at locations distant from the originally intended target, rather than completing its initial task. Table~\ref{tab:case_one_results} presents the average attack success rates of both baseline methods and our approach across four manipulation tasks. The results indicate that the performance of \ourmethod significantly exceeds that of the baselines by a large margin. To reduce the influence of PbRL and further highlight the advantages of \ourmethod, we also trained baseline methods using the ground-truth reward function, labeling these as “oracle” versions. Notably, the performance of SA-RL (oracle) shows considerable improvement on several tasks compared to its preference-based counterpart. Nonetheless, \ourmethod still outperformed SA-RL with oracle rewards in most scenarios. These findings underscore the ability of \ourmethod to enable agents to effectively learn adversary based on human preferences. Additionally, it was observed that PA-AD struggles to perform effectively in manipulation tasks, even when trained with ground-truth rewards.

\noindent \textbf{Scenarios on Opposite Behaviors.}
Robotic manipulation holds significant practical value in real-world applications. Therefore, we craft this scenario to quantitatively assess the vulnerability of agents proficient in various manipulation skills. In this setup, each victim agent is expected to perform the opposite of its mastered task when subjected to the manipulator's targeted attack. For instance, a victim trained to open windows would be manipulated to close them instead. As demonstrated in Table~\ref{tab:case_one_results}, \ourmethod exhibits exceptional performance, consistently demonstrating clear advantages over the baseline methods across all tasks. This outcome reaffirms that \ourmethod is not only effective across a broad spectrum of tasks but also capable of efficiently learning adversaries aligned with human preferences. 

We observe that SA-RL and PA-AD exhibit relatively low attack success rates across numerous tasks, which can be attributed to the issue of distribution drift. This drift arises due to discrepancies between the data distribution sampled by the perturbed policy and the distribution corresponding to human-desired behaviors, leading to suboptimal performance.

%%%%%%%%%% Case II: Manipulation on Sequence Model Agents %%%%%%%%%% 
\subsection{Case II: Manipulation on Sequence Model Agents}\label{subsec:case_two}
In this experiment, we show the vulnerability of offline RL agents and the capability of \ourmethod to fool them into acting human desired behaviors. As for the implementation, we choose some online models~\footnote{\url{https://huggingface.co/edbeeching}} as victims, which are well-trained by official implementation with D4RL. We choose two tasks, Cheetah and Walker, using expert-level Decision Transformer agents as the victims. As illustrated in Figure \ref{fig:fool_dt}, Decision Transformer reveals weaknesses that can be exploited, leading it to execute human-preferred behaviors rather than its intended tasks. Under adversarial manipulation, the Cheetah agent is shown to run backwards rapidly in Figure \ref{fig:dt_a} and perform a 90-degree push-up in Figure \ref{fig:dt_c}. Meanwhile, the Walker agent maintains superior balance on one foot in Figure \ref{fig:dt_b} and appears to dance with one leg raised in Figure \ref{fig:dt_d}. These outcomes indicate that \ourmethod is effective in manipulating these victim agents towards behaviors consistent with human preferences, highlighting the significant vulnerability of embodied agents to strong adversaries. This experiment is expected to spur further research into improving the robustness of offline RL agents and embodied AI systems.

%%%%%%%%%% Robust Agents Training and Evaluation %%%%%%%%%% 
\subsection{Robust Agents Training and Evaluation}\label{subsec:robust_agent}
A practical application of \ourmethod is in assessing the robustness of established models or in enhancing an agent's robustness via adversarial training. ATLA-PPO~\citep{zhang2021robust} presents a generic training framework aimed at improving robustness, which involves alternating training between an agent and an SA-RL attacker. PA-ATLA~\cite{sun2022who} follows a similar approach but employs a more advanced RL attacker, PA-AD. Drawing inspiration from previous works~\cite{zhang2021robust, NEURIPS2022_8d6b1d77}, we introduce two novel robust training methods: RAT-ATLA and RAT-WocaR. RAT-ATLA's central strategy is to alternately train an agent and a \ourmethod attacker, whereas RAT-WocaR focuses on directly estimating and minimizing the reward of the intention policy, obviating the need for extra samples to learn an attacker. Table~\ref{tab:robust2_single} compares the effectiveness of RAT-ATLA and RAT-WocaR for SAC agents on robotic simulation manipulation tasks against leading robust training methods. The experimental findings highlight two key points: first, RAT-ATLA and RAT-WocaR substantially improve agent robustness; and second, \ourmethod is capable of executing stronger attacks on robust agents, showcasing its effectiveness in challenging environments.

% The table contains the results for Robust Agents Training and Evaluation
\begin{table}[!ht]
\centering
\caption{Average episode rewards $\pm $ standard deviation of robust agents under different attack methods, and results are averaged across 100 episodes.}
\resizebox{1.01\linewidth}{!}{
\begin{tabular}{ccccccc}
    \toprule
    \textbf{Task} & \textbf{Model} & \textbf{\ourmethod} & \textbf{PA-AD} & \textbf{SA-RL} & \textbf{Avg R}  \\
    \midrule
    % \multirow{4}{*}{Door Lock} 
    \parbox[t]{4mm}{\multirow{4}{*}{\pix\rotatebox[origin=c]{90}{Door Lock}}}
    & RAT-ATLA  & 874\stdv{444}  & 628\stdv{486}  & 503\stdv{120}  & \textbf{668} \\
    & RAT-WocaR & 774\stdv{241}  & 527\stdv{512}  & 520\stdv{236}  & 607 \\
    & PA-ATLA  & 491\stdv{133}  & 483\stdv{15 }  & 517\stdv{129}  & 497 \\
    & ATLA-PPO & 469\stdv{11 }  & 629\stdv{455}  & 583\stdv{173}  & 545 \\
    \midrule
    % \multirow{4}{*}{Door Unlock} 
    \parbox[t]{4mm}{\multirow{4}{*}{\pix\rotatebox[origin=c]{90}{\footnotesize{Door Unlock}}}}
    & RAT-ATLA  & 477\stdv{203} & 745\stdv{75}  & 623\stdv{60}  & \textbf{615} \\
    & RAT-WocaR & 525\stdv{78}  & 647\stdv{502} & 506\stdv{39}  & 559 \\
    & PA-ATLA  & 398\stdv{12}  & 381\stdv{11}  & 398\stdv{79}  & 389 \\
    & ATLA-PPO & 393\stdv{36}  & 377\stdv{8}   & 385\stdv{26}  & 385 \\
    \midrule
    % \multirow{4}{*}{Faucet Open} 
    \parbox[t]{4mm}{\multirow{4}{*}{\pix\rotatebox[origin=c]{90}{\footnotesize{Faucet Open}}}}
    & RAT-ATLA  & 442\stdv{167} & 451\stdv{96}    & 504\stdv{55}   & 465 \\
    & RAT-WocaR & 1223\stdv{102}& 1824\stdv{413}  & 1575\stdv{389} & \textbf{1541}\\
    & PA-ATLA  & 438\stdv{53}  & 588\stdv{222}   & 373\stdv{32}   & 466 \\
    & ATLA-PPO & 610\stdv{293} & 523\stdv{137}   & 495\stdv{305}  & 522 \\
    \midrule
    % \multirow{4}{*}{Faucet Close} 
    \parbox[t]{4mm}{\multirow{4}{*}{\pix\rotatebox[origin=c]{90}{\footnotesize{Faucet Close}}}}
    & RAT-ATLA  & 1048\stdv{343} & 1223\stdv{348} & 570\stdv{453}  & 947 \\
    & RAT-WocaR & 1369\stdv{158} & 1416\stdv{208} & 3372\stdv{1311}& \textbf{2052}\\
    & PA-ATLA  & 661\stdv{279}  & 371\stdv{65}   & 704\stdv{239}  & 538 \\
    & ATLA-PPO & 1362\stdv{149} & 688\stdv{196}  & 426\stdv{120}  & 825 \\
    \bottomrule
\end{tabular}
}
\label{tab:robust2_single}
\end{table}

\subsection{Ablation Studies}\label{subsec:ablation_study}
\noindent \textbf{Contribution of Each Component.}
In our further experiments, we investigate the effect of each component in \ourmethod on Drawer Open and Drawer Close for the manipulation scenario and on Faucet Open, Faucet Close for the opposite behavior scenario. \ourmethod incorporates three essential components or techniques: the intention policy $\pi_\theta$, the weighting function $h_\omega$ and the combined behavior policy. As detailed in Table~\ref{tab:abla_component}, $\pi_\theta$ emerges as a pivotal component in \ourmethod, significantly boosting the attack success rate. This enhancement is largely due to its capability to mitigate distribution drift between the victim's behavior and the desired behavior. 

% The figure contains the results for Effects of the Weighting Function.
\begin{figure}[!ht]
\centering
\begin{tabular}{cc}
\hspace*{-1.0em} \subfloat[t-SNE Visualization]{\includegraphics[width=0.49\linewidth]{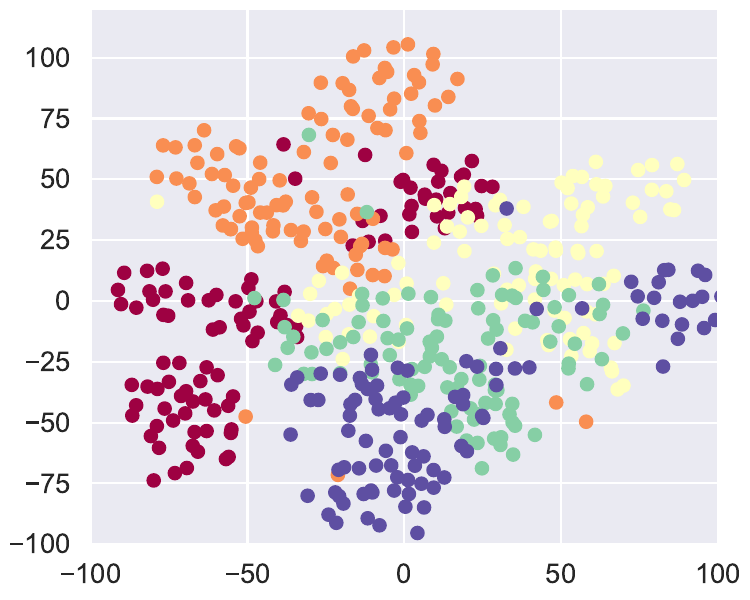}\label{fig:weight_tsne}}
& \hspace*{-0.7em} \subfloat[Weight Visualization]{\includegraphics[width=0.49\linewidth]{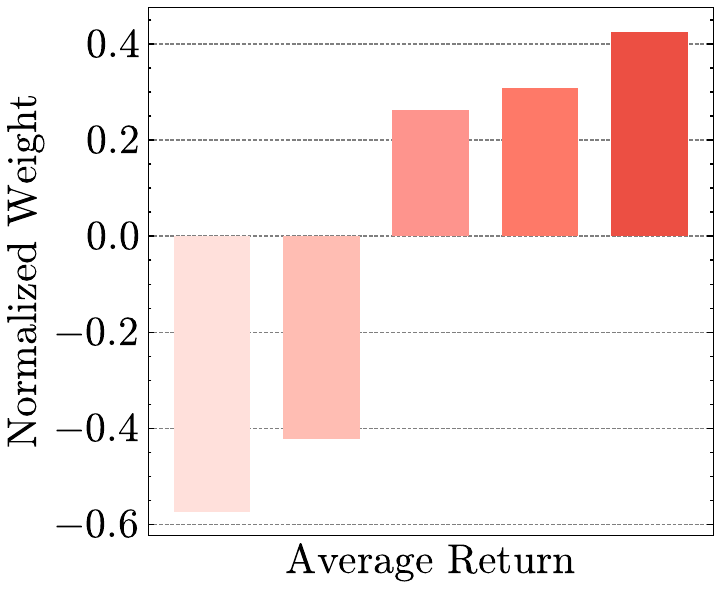}\label{fig:weight_bar}}
\end{tabular}
\caption{\textbf{Effects of the Weighting Function.} (a) Trajectory weights generated by the weighting function from various policies are visualized with t-SNE. (b) A visualization of the weights of trajectories of different qualities by five different policies.  
}
\label{fig:abla_weight}
\end{figure}

% The table contains the results for Contribution of Each Component.
\begin{table*}[t]
\centering
\caption{Effects of each component in \ourmethod is evaluated based on the average attack success rate on four simulated robotic manipulation tasks. These results represent the mean success rate across five runs.}
\begin{tabular}{cccccc}
    \toprule
    \textbf{Scenario} & \textbf{Task} & \textbf{\ourmethod} & \textbf{\ourmethod w/o $h_\omega$} & \textbf{\ourmethod w/o $\pi_\theta$} & \textbf{\ourmethod w/o combined policy}  \\
    \midrule
    \multirow{2}{*}{Manipulation} 
    & Drawer Open & \textbf{99.1\%} & 91.3\% & 21.7\% & 68.0\% \\
    & Drawer Close & \textbf{80.9\%} & 70.2\% & 8.0\% & 26.0\% \\
    \midrule
    \multirow{2}{*}{Opposite} 
    & Faucet Open   & 84.4\% & \textbf{89.8}\% & 0.0\%  & 57.0\% \\
    & Faucet Close    & \textbf{95.1\%} & 94.1\% & 13.0\%  & 59.1\% \\
    \bottomrule
\end{tabular}
\label{tab:abla_component}
\end{table*}

\noindent \textbf{Effects of the Weighting Function.}
To further understand the weighting function proposed in Section~\ref{subsec:adversary_policy}, we conduct comprehensive experimental data analysis and visualization from multiple perspectives. We sample five perturbed policies uniformly, each representing a progressive stage of performance improvement before the convergence of \ourmethod. For each of these policies, 100 trajectories were rolled out, and their corresponding trajectory weight vectors were obtained via the weighting function. Utilizing t-SNE~\citep{vandermaaten08a} for visualization, Figure~\ref{fig:weight_tsne} showcases the weight vectors of different policies. This illustration reveals distinct boundaries between the trajectory weights of various policies, indicating the weighting function's ability to differentiate trajectories based on their quality. In Figure~\ref{fig:weight_bar}, trajectories with higher success rates in manipulation are represented in darker colors. The visualization suggests that the weighting function assigns higher weights to more successful trajectories, thereby facilitating the improvement of the adversary's performance. 
% Additionally, a heat map of the weight distribution in two-dimensional coordinates is presented, with annotations on select trajectories from the perturbed policy, as shown in Figure~\ref{fig:weight_heatmap}. This heat map indicates that the weighting function tends to assign higher scores to the surrounding states in trajectories from the perturbed policy, particularly in the initial stages before reaching the target point.

To thoroughly assess the impact of feedback amounts and attack budgets on the performance of \ourmethod, as well as the quality of the learned reward functions, we conducted extensive experiments. The detailed analyses and discussions of these aspects are provided in Appendix~\ref{appendix:extensive_exp}.

\section{Conclusion}
% \vspace{-0.3em}
In this paper, we propose \ourmethod, a targeted behavior attack approach against DRL learners, which manipulates the victim to perform human-desired behaviors. \ourmethod involves an adversary adding imperceptible perturbations on the observations of the victim, an intention policy learned through PbRL as a flexible behavior target, and a weighting function to identify essential states for the efficient adversarial attack. We analyze the convergence of \ourmethod and prove that \ourmethod converges to critical points under some mild conditions. Empirically, we design two scenarios on several manipulation tasks in Meta-world, and the results demonstrate that \ourmethod outperforms the baselines in the targeted adversarial setting. 
Additionally, \ourmethod can enhance the robustness of agents via adversarial training. We further show embodied agents' vulnerability by attacking Decision Transformer on some MuJoCo tasks.

% \clearpage
\bibliography{aaai25}
\newpage
\appendix
\onecolumn
\setcounter{theorem}{0}

% ==============================================================
\section{Full Procedure of \ourmethod}\label{appendix:pseudo_code}
In this section, we provide a detailed explanation of our method, including the pseudo code, the design of the Combined Behavior Policy, and the basic setup for preference-based reinforcement learning (PbRL). The procedures for our method are fully outlined in Algorithm~\ref{alg:RAT}, which describes the reward learning process and adversary updates in \ourmethod.
\begin{algorithm}[]
    \caption{\ourmethod}
    \label{alg:RAT}
    \begin{algorithmic}[1]
    \REQUIRE a fixed victim policy $\pi_\nu$, frequency of human feedback $K$, outer loss updating frequency $M$, task horizon $H$
    \STATE Initialize parameters of $Q_\phi$, $\pi_\theta$, $\widehat{r}_\psi$, $\pi_\alpha$ and $h_\omega$ % intention policy,  attacker and reward model
    \STATE Initialize $\mathcal{B}$ and $\pi_\theta$ with unsupervised exploration
    \STATE Initialize preference data set $\mathcal{D} \leftarrow \emptyset$
    \FOR{each iteration}  
        % collect transition to replay buffer B
        \STATE \small{\textcolor[HTML]{ef5767}{\texttt{// Construct the combined behavior policy $\pi$}}} \\
        \IF{episode is done}
            \STATE $h\sim U(0,H)$
            \STATE $\pi^{1:h} = \pi_{\nu \circ \alpha}^{1:h}$ and $\pi^{h+1:H}=\pi_{\theta}^{h+1:H}$
        \ENDIF
        \STATE Take action $a_t \sim \pi$ and collect $s_{t+1}$
        \STATE Store transition into dataset $\mathcal{B} \leftarrow \mathcal{B} \cup \{ (s_t,a_t,\widehat{r}_\psi(s_t),s_{t+1}) \}$
        % update reward model
        \STATE \small{\textcolor[HTML]{ef5767}{\texttt{// Query preference and Reward learning}}}
        \IF{iteration \% $K == 0$}
            % \FOR{each query step}
            \STATE Sample pair of trajectories $(\sigma^0,\sigma^1)$
            \STATE Query preference $y$ from manipulator
            \STATE Store preference data into dataset $\mathcal{D} \leftarrow \mathcal{D} \cup \{ (\sigma^0,\sigma^1,y) \}$
            % \ENDFOR
            % \FOR{each gradient step}
            \STATE Sample batch $\{(\sigma^0,\sigma^1,y)_i\}^n_{i=1}$ from $\mathcal{D}$
            \STATE Optimize~(\ref{eq:reward_loss}) to update $\widehat{r}_\psi$
            % \ENDFOR
        \ENDIF
        % update attacker policy
        \STATE \small{\textcolor[HTML]{ef5767}{\texttt{// Inner loss optimization}}}
        \FOR{each gradient step}
            \STATE Sample random mini-batch transitions from $\mathcal{B}$
            \STATE Optimize $\pi_\alpha$: minimize~(\ref{eq:kl_divergence}) with respect to $\alpha$ 
        \ENDFOR
        % update intention policy
        \STATE \small{\textcolor[HTML]{ef5767}{\texttt{// Outer loss optimization}}}
        \IF{iteration \% $M == 0$}
            \STATE Sample random mini-batch transitions from $\mathcal{B}$
            \STATE Optimize $h_\omega$: minimize~(\ref{eq:out_loss}) with respect to $\omega$ 
        \ENDIF
        \STATE \small{\textcolor[HTML]{ef5767}{\texttt{// Intention policy learning}}}
        \STATE Update $Q_\phi$ and $\pi_\theta$ according to~(\ref{eq:q_loss}) and~(\ref{eq:pi_loss}), respectively.
    \ENDFOR
    \ENSURE adversary $\pi_\alpha$
\end{algorithmic} 
\end{algorithm}

\subsection{The Combined Behavior Policy} 
To address the inefficiencies caused by the distribution discrepancy between the learned policy $\pi_\theta$ and the perturbed policy $\pi_{\nu \circ \alpha}$, we developed a behavior policy $\pi$ for data collection inspired by Branched Rollout~\citep{NEURIPS2019_5faf461e}. Our approach combines the intention policy $\pi_\theta$ with the perturbed policy $\perturbedpi$ to balance exploration and exploitation during data collection. Specifically, we define the behavior policy $\pi$ as a combination of $\perturbedpi$ and $\pi_\theta$, where $\pi^{1:h} = \perturbedpi^{1:h}$ and $\pi^{h+1:H} = \pi_\theta^{h+1:H}$. Here, $h$ is sampled from a uniform distribution $U(0,H)$, where $H$ represents the task horizon. This combined policy is used to collect data, which is then stored in the replay buffer for training. By varying the point $h$ at which the policy switches from $\perturbedpi$ to $\pi_\theta$, we maintain a balance between exploration of new behaviors and reinforcement of learned behaviors, effectively mitigating the distribution discrepancy.

\subsection{Details of PbRL}\label{appendix:pbrl_detail}
In this section, we present details of the scripted teacher and the preference collection process, both of which are crucial components of PbRL. All methods in our paper follow the reward learning settings outlined in \citet{lee2021pebble}.

\noindent \textbf{Scripted Teacher.} To systematically evaluate the performance of our methods, we utilize a scripted teacher that provides preferences between pairs of trajectory segments based on the oracle reward function for online settings, like prior methods \citep{lee2021pebble, park2022surf, liu2022metarewardnet}. While leveraging human preference labels would be ideal, it is often impractical for quick and quantitative evaluations. The scripted teacher approximates human intentions by mapping states $\rvs$ and actions $\rva$ to ground truth rewards, providing immediate feedback. This function is designed to simulate the decision-making process of a human teacher by approximating their preferences based on cumulative rewards.

\noindent \textbf{Preference Collection.} During training, we query the scripted teacher for preference labels at regular intervals. A batch of segment pairs is sampled, and the cumulative rewards for each segment are calculated based on the rewards provided by the scripted teacher. The segment with the higher cumulative reward is assigned a label of 1, while the other is labeled 0. The computational cost of this process is proportional to the number of preference labels $M$ and the segment length $N$, resulting in a time complexity of $\gO(MN)$. However, this cost is negligible compared to adversary training, which involves more computationally expensive gradient calculations.

% ==============================================================
\section{Derivation of the Gradient of the Outer-level Loss} \label{appendix:derivation}
In this section, we present detailed derivation of the gradient of the outer loss $\outerloss$ with respect to the parameters of the weighting function $\omega$. According to the chain rule, we can derive that
\begin{equation}
\begin{aligned}
     & \left. \gradw \outerloss(\hata(\omega)) \right|_{\wt} \\
    =& \frac{\partial \outerloss(\hata(\omega))}{\partial \hata(\omega)}\Big|_{\hatat} \frac{\partial \hatat(\omega)}{\partial \omega} \Big|_{\wt} \\
    =& \frac{\partial \outerloss(\hata(\omega))}{\partial \hata(\omega)}\Big|_{\hatat} 
    \frac{\partial \hatat(\omega)}{\partial h(\rvs;\omega)} \Big|_{\wt} 
    \frac{\partial h(\rvs;\omega)}{\partial \omega} \Big|_{\wt} \\
    =& -\innerlrt \frac{\partial \outerloss(\hata(\omega))}{\partial \hata(\omega)}\Big|_{\hatat}
    \sum_{\rvs\sim \gB}{\frac{\partial \KL\left( \pi_{\nu \circ \alpha}(\rvs) \parallel \pi_\theta(\rvs) \right)}{\partial \alpha}} \Big|_{\at}
    \frac{\partial h(\rvs;\omega)}{\partial \omega} \Big|_{\wt} \\
    =& -\innerlrt \sum_{\rvs\sim \gB}
    \left(
    \frac{\partial \outerloss(\hata(\omega))}{\partial \hata(\omega)}\Big|_{\hatat}^\top {\frac{\partial \KL\left( \pi_{\nu \circ \alpha}(\rvs) \parallel \pi_\theta(\rvs) \right)}{\partial \alpha}} \Big|_{\at}
    \right)
    \frac{\partial h(\rvs;\omega)}{\partial \omega} \Big|_{\wt}.
\end{aligned}
\end{equation}
For brevity of expression, we let:
\begin{equation}
    f(\rvs) = \frac{\partial \outerloss(\hata(\omega))}{\partial \hata(\omega)}\Big|_{\hatat}^\top {\frac{\partial \KL\left( \pi_{\nu \circ \alpha}(\rvs) \parallel \pi_\theta(\rvs) \right)}{\partial \hata}} \Big|_{\at}.
\end{equation}
The gradient of outer-level optimization loss with respect to parameters $\omega$ is:
\begin{equation}
    \left. \gradw \outerloss(\hata(\omega))\right|_{\omega_t} =
    -\innerlrt \sum_{\rvs\sim \gB}{f(\rvs)\cdot \frac{\partial h(\rvs;\omega)}{\partial \omega} \Big|_{\omega_t}}.
\end{equation}

% ==============================================================
\section{Connection between \MDP and MDP}
% \label{appendix:equivalence}
\begin{lemma} \label{lem:equivalence}
    Given a \MDP $\gM = (\gS,\gA, \gB, \widehat{\gR}, \gP, \gamma)$ and a fixed victim policy $\pi_\nu$, there exists a MDP $\hat{\gM} = (\gS, \hat{\gA}, \widehat{\gR}, \widehat{\gP}, \gamma)$ such that the optimal policy of $\hat{\gM}$ is equivalent to the optimal adversary $\pi_\alpha$ in \MDP given a fixed victim, where $\widehat{\gA}=\gS$ and
    \begin{equation*}
        \widehat{\gP}(\rvs^\prime|\rvs,\rva) = \sum_{\rva \in \gA}{\pi_\nu(\rva|\widehat{\rva})} \gP(\rvs^\prime|\rvs,\rva) \quad \text{for} \ \rvs,\rvs^\prime \in \gS \ \text{and} \ \widehat{\rva} \in \widehat{\gA}.
    \end{equation*}
\end{lemma}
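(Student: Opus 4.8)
The plan is to construct the equivalent MDP $\hat{\gM}$ explicitly and then show that the value functions (and hence optimality) of an adversary in the \MDP coincide with those of the corresponding policy in $\hat{\gM}$. First I would set up the identification of objects: in the \MDP, the adversary $\pi_\alpha: \gS \to \gP(\gS)$ outputs a perturbed state $\widehat{\rva} = \tilde{\rvs}$, so it is natural to treat the adversary's output as the action of a new MDP with action space $\hat{\gA} = \gS$. With this identification, a policy in $\hat{\gM}$ that selects $\widehat{\rva}$ in state $\rvs$ \emph{is} precisely an adversary $\pi_\alpha(\widehat{\rva}\mid\rvs)$.

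The key step is to verify that the one-step dynamics induced in the true environment by the composition of adversary and victim agree with $\widehat{\gP}$. Concretely, when the true state is $\rvs$ and the adversary picks $\widehat{\rva}$, the victim observes $\widehat{\rva}$ and samples $\rva \sim \pi_\nu(\cdot\mid\widehat{\rva})$, after which the environment transitions by $\gP(\rvs'\mid\rvs,\rva)$. Marginalizing over the victim's action gives exactly
\begin{equation*}
    \Pr[\rvs'\mid\rvs,\widehat{\rva}] = \sum_{\rva\in\gA}\pi_\nu(\rva\mid\widehat{\rva})\,\gP(\rvs'\mid\rvs,\rva) = \widehat{\gP}(\rvs'\mid\rvs,\widehat{\rva}),
\end{equation*}
which is the claimed transition kernel. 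Since the reward $\widehat{\gR}$ and discount $\gamma$ are carried over unchanged, and since the reward in the \MDP is a function of the true state/action pair whose distribution is governed by the same composition, the expected per-step reward collected under adversary $\pi_\alpha$ in the \MDP matches the expected reward of the corresponding policy in $\hat{\gM}$.

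From here I would argue the equivalence at the level of value functions. Writing the Bellman equation for any adversary in the \MDP and substituting the marginalized dynamics, I would show that it is formally identical to the Bellman equation of the corresponding policy in $\hat{\gM}$; hence the two problems have the same value function for every policy/adversary under the bijective correspondence $\pi_\alpha \leftrightarrow \pi_{\hat{\gM}}$. Because the objective of the \MDP is to maximize cumulative $\widehat{\gR}$ over the fixed victim, the optimal adversary maximizes the same quantity as the optimal policy of $\hat{\gM}$, so $\arg\max$ sets coincide and the optimal adversary is the optimal policy of $\hat{\gM}$.

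The main obstacle I anticipate is bookkeeping around the two-stage stochasticity rather than anything deep: one must be careful that the adversary acts on the \emph{true} state $\rvs$ (not the perturbed one) when generating $\widehat{\rva}$, that the victim's stochastic response is correctly marginalized, and that the constraint $\tilde{\rvs}\in\gB(\rvs)$ is faithfully encoded — either by restricting the effective action set of $\hat{\gM}$ at each state to $\gB(\rvs)$ or by treating it as a feasibility constraint inherited by the adversary. Making this constraint handling precise, and confirming that the reward's dependence on the unperturbed transition is consistent on both sides, is the delicate part; the transition computation and the Bellman comparison themselves are routine once the correspondence is stated carefully.
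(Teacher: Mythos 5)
Your proposal is correct, and it in fact supplies something the paper itself omits: Lemma~\ref{lem:equivalence} is stated in the main text and restated in the appendix \emph{without any proof}, the authors implicitly leaning on the analogous reduction for SA-MDP in \citet{zhang2020robust}. Your route --- identify the adversary's output $\widehat{\rva}=\tilde{\rvs}$ with the action of a new MDP on $\hat{\gA}=\gS$, marginalize the victim's stochastic response to obtain the induced kernel, and then match Bellman equations policy-by-policy so that values and hence $\argmax$ sets coincide under the bijection $\pi_\alpha \leftrightarrow \pi_{\hat{\gM}}$ --- is exactly the standard argument this lemma rests on, and your computation also silently repairs a typo in the statement: the kernel should be written $\widehat{\gP}(\rvs'\mid\rvs,\widehat{\rva})$, since the $\rva$ on the left of the paper's display collides with the summation dummy.

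The two ``bookkeeping'' points you flag at the end are the right ones to worry about, and both are genuine imprecisions of the paper's statement rather than of your argument, so you should make them explicit rather than leave them as anticipated obstacles. First, the budget constraint $\tilde{\rvs}\in\gB(\rvs)$ means $\hat{\gM}$ is an MDP with \emph{state-dependent admissible action sets} $\gB(\rvs)\subseteq\hat{\gA}$; without that restriction the optimal policy of $\hat{\gM}$ could select perturbations outside the $\epsilon$-ball, and the claimed equivalence would fail. Second, ``carrying $\widehat{\gR}$ over unchanged'' is only literally valid when the reward depends on the true state (or on $(\rvs,\rvs')$) alone; since the learned reward here has the form $\widehat{r}_\psi(\rvs,\rva)$ and depends on the victim's realized action, the reward of $\hat{\gM}$ must itself be marginalized,
\begin{equation*}
    \widehat{\gR}(\rvs,\widehat{\rva}) = \sum_{\rva\in\gA}\pi_\nu(\rva\mid\widehat{\rva})\,\widehat{r}_\psi(\rvs,\rva),
\end{equation*}
so that the expected one-step reward agrees on both sides; with that definition your Bellman comparison goes through verbatim, and since $\hat{\gM}$ is a bona fide finite-horizon discounted MDP, an optimal policy exists and corresponds to the optimal adversary. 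With these two clarifications written in, your proof is complete and would stand as the proof the paper should have included.
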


% ==============================================================
\section{Theoretical Analysis and Proofs}\label{appendix:proofs}
\subsection{Theorem 1: Convergence Rate of the Outer Loss}
% Lemma 2
\begin{lemma} \label{lem:lipschitz_upper_bound} (Lemma 1.2.3 in~\citet{nesterov1998introductory})
    If function $f(x)$ is Lipschitz smooth on $\mathbb{R}^n$ with constant $L$, then $\forall x, y \in \mathbb{R}^n$, we have
    \begin{equation}
        \left| f(y) - f(x) - f'(x)^\top (y - x) \right| \le \frac{L}{2} \left\| y - x \right\|^2.
    \end{equation}
\end{lemma}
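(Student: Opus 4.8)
The plan is to prove this standard descent lemma directly from the hypothesis, interpreting ``Lipschitz smooth with constant $L$'' in the usual sense that the gradient map $f'$ is $L$-Lipschitz, i.e. $\|f'(u) - f'(v)\| \le L\|u - v\|$ for all $u, v \in \mathbb{R}^n$. The central idea is to rewrite the left-hand side as an integral along the line segment joining $x$ and $y$, so that the Lipschitz bound on $f'$ can be applied pointwise inside the integral and then integrated out.

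First I would parametrize the segment by $\gamma(t) = x + t(y - x)$ for $t \in [0,1]$ and apply the fundamental theorem of calculus to the scalar function $g(t) = f(\gamma(t))$, whose derivative is $g'(t) = f'(x + t(y - x))^\top (y - x)$. Since $g(1) - g(0) = f(y) - f(x)$, this yields the integral representation
\[
f(y) - f(x) = \int_0^1 f'\bigl(x + t(y - x)\bigr)^\top (y - x)\, dt .
\]

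Next I would subtract the constant $f'(x)^\top (y - x) = \int_0^1 f'(x)^\top (y - x)\, dt$ from both sides to obtain
\[
f(y) - f(x) - f'(x)^\top (y - x) = \int_0^1 \left[ f'\bigl(x + t(y - x)\bigr) - f'(x) \right]^\top (y - x)\, dt .
\]
Then I take absolute values, move the absolute value inside via the triangle inequality for integrals, apply Cauchy--Schwarz to each integrand to bound it by $\|f'(x + t(y - x)) - f'(x)\|\,\|y - x\|$, and finally invoke the $L$-Lipschitz property of $f'$ to replace the first factor with $L\,\|t(y - x)\| = Lt\,\|y - x\|$. Pulling constants out and evaluating $\int_0^1 t\, dt = \tfrac{1}{2}$ produces the claimed bound $\tfrac{L}{2}\|y - x\|^2$.

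The argument is essentially routine, so there is no deep obstacle; the only points requiring care are justifying the integral representation (which needs $f$ to be continuously differentiable along the segment, a consequence of Lipschitz smoothness) and correctly tracking that the displacement of $\gamma(t)$ from $x$ is $t(y - x)$ rather than $(y - x)$, since it is exactly this factor of $t$ that yields the constant $\tfrac{1}{2}$ rather than $1$.
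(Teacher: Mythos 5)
Your proposal is correct and follows essentially the same argument as the paper's proof: the integral representation of $f(y)-f(x)$ along the segment, subtraction of $f'(x)^\top(y-x)$, then the triangle inequality for integrals, Cauchy--Schwarz, and the $L$-Lipschitz bound on the gradient producing the factor $\tau$ (your $t$) whose integral gives $\tfrac{1}{2}$. No substantive differences to report.
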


\begin{proof}
    $\forall x, y \in \mathbb{R}^n$, we have
    \begin{equation}
    \begin{aligned}
        f(y) &= f(x) + \int_0^1 f'(x + \tau (y - x))^\top (y - x) d\tau \\
             &= f(x) + f'(x)^\top (y - x) + \int_0^1 [f'(x + \tau (y - x)) - f'(x)]^\top (y - x) d\tau.
    \end{aligned}
    \end{equation}
    Then we can derive that
    \begin{equation}
    \begin{aligned}
        \left| f(y) - f(x) - f'(x)^\top (y - x) \right|
        &= \left| \int_0^1 [f'(x + \tau (y - x)) - f'(x)]^\top (y - x) d\tau \right| \\
        & \le \int_0^1 \bigg| [f'(x + \tau (y - x)) - f'(x)]^\top (y - x) \bigg| d\tau \\
        & \le \int_0^1 \left\| f'(x + \tau(y - x)) - f'(x) \right\| \cdot \left\| y - x \right\| d\tau \\
        & \le \int_0^1 \tau L \left\| y - x \right\|^2 d\tau = \frac{L}{2} \left\| y - x \right\|^2,
    \end{aligned}
    \end{equation}
    where the first inequality holds for $\left| \int_a^b f(x) dx \right| \le \int_a^b \left| f(x) \right| dx$, the second inequality holds for Cauchy-Schwarz inequality, and the last inequality holds for the definition of Lipschitz smoothness.
\end{proof}

% Theorem 1: Outer Loss Convergence Rate
\begin{theorem}
    Suppose $\outerloss$ is Lipschitz-smooth with constant L, the gradient of $\outerloss$ and $\innerloss$ is bounded by $\rho$. Let the training iterations be $T$, the inner-level optimization learning rate $\innerlrt=\min\{1, \frac{c_1}{T}\}$ for some constant $c_1 > 0$ where $\frac{c_1}{T} < 1$. Let the outer-level optimization learning rate $\outerlrt=\min\{\frac{1}{L}, \frac{c_2}{\sqrt{T}}\}$ for some constant $c_2 > 0$ where $c_2 \le \frac{\sqrt{T}}{L}$, and $\sum_{t=1}^\infty\outerlrt \le \infty, \sum_{t=1}^\infty\outerlrt^2 \le \infty$. The convergence rate of $\outerloss$ achieves
    \begin{equation}
        \min_{1 \le t \le T} \mathbb{E}\left[ \left\| \nabla_\omega \outerloss(\atnext(\wt)) \right\|^2 \right] \le \mathcal{O}\left(\frac{1}{\sqrt{T}}\right).
    \end{equation}
    % \label{th:outer_loss_convergence_rate}
\end{theorem}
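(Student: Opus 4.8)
The plan is to treat this as a standard stochastic first-order convergence argument for a nonconvex objective, where the outer variable $\omega$ is updated by (mini-batch) SGD on the composite objective $\outerloss(\atnext(\omega))$. First I would fix the bookkeeping for one iteration: starting from the running pair $(\wt,\at)$, the one-step inner update produces $\atnext(\wt)=\at-\innerlrt\grada\innerloss(\at;\wt,\theta)$ as in~\eqref{eq:approximation}, and the outer step is $\wtnext=\wt-\outerlrt\widehat{g}_t$, where $\widehat{g}_t$ is the mini-batch estimate of the true meta-gradient $g_t:=\gradw\outerloss(\atnext(\wt))$. I would model $\widehat{g}_t$ as an unbiased estimator of $g_t$ (conditioned on the history) with bounded second moment $\E[\squaredltwonorm{\widehat{g}_t}]\le\rho^2$; this is exactly where the gradient-boundedness hypothesis on $\outerloss$ and $\innerloss$ enters.

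The first key step is the descent inequality. I would apply the Lipschitz-smoothness lemma (Lemma~\ref{lem:lipschitz_upper_bound}) to the map $\omega\mapsto\outerloss(\cdot(\omega))$ evaluated at $\wt$ and $\wtnext$, giving
\[
\outerloss(\atnextnext(\wtnext))\le\outerloss(\atnext(\wt))+\langle g_t,\wtnext-\wt\rangle+\tfrac{L}{2}\squaredltwonorm{\wtnext-\wt}.
\]
Substituting $\wtnext-\wt=-\outerlrt\widehat{g}_t$, taking the conditional expectation, using $\E[\widehat{g}_t]=g_t$ to turn the inner-product term into $-\outerlrt\squaredltwonorm{g_t}$, and bounding the quadratic term by $\tfrac{L\rho^2}{2}\outerlrt^2$, I obtain the per-step decrease
\[
\E[\outerloss(\atnextnext(\wtnext))]-\outerloss(\atnext(\wt))\le-\outerlrt\squaredltwonorm{g_t}+\tfrac{L\rho^2}{2}\outerlrt^2.
\]

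Next I would telescope. Summing from $t=1$ to $T$, taking total expectation, and using that $\outerloss$ is bounded below (so the telescoped difference is at most a constant $C_0$), I get $\sum_{t=1}^T\outerlrt\,\E[\squaredltwonorm{g_t}]\le C_0+\tfrac{L\rho^2}{2}\sum_{t=1}^T\outerlrt^2$. Lower-bounding the left side by $\big(\min_{1\le t\le T}\E[\squaredltwonorm{g_t}]\big)\sum_t\outerlrt$ and dividing yields
\[
\min_{1\le t\le T}\E[\squaredltwonorm{g_t}]\le\frac{C_0+\tfrac{L\rho^2}{2}\sum_t\outerlrt^2}{\sum_t\outerlrt}.
\]
Finally I would insert the schedule $\outerlrt=\min\{1/L,c_2/\sqrt{T}\}$: for $T$ large enough that $c_2/\sqrt{T}\le 1/L$, one has $\sum_t\outerlrt=c_2\sqrt{T}$ and $\sum_t\outerlrt^2=c_2^2$, so the numerator is $\mathcal{O}(1)$ while the denominator is $\Theta(\sqrt{T})$, delivering the claimed $\mathcal{O}(1/\sqrt{T})$ rate.

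The main obstacle I expect is justifying the descent inequality for the \emph{composite} objective: the effective smoothness constant of $\omega\mapsto\outerloss(\atnext(\omega))$ is not literally the $L$-smoothness of $\outerloss$ alone, since the chain rule pulls in the Jacobian of the one-step inner map $\atnext(\omega)=\at-\innerlrt\grada\innerloss(\at;\omega,\theta)$, which involves the Hessian of $\innerloss$. I would control this by noting that with $\innerlrt=\min\{1,c_1/T\}$ the inner map is a vanishing perturbation of the identity in $\omega$, so its Jacobian stays bounded and the composite remains Lipschitz-smooth with a constant absorbable into $L$ (alternatively, one posits a standard smoothness assumption on $\innerloss$). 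The remaining ingredients—unbiasedness and bounded second moment of the stochastic meta-gradient, the telescoping, and the rate arithmetic—are then routine.
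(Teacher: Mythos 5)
Your overall scaffolding (smoothness-based descent, telescoping, learning-rate arithmetic) matches the paper's, but your single descent inequality has a genuine gap: the composite objective you telescope is \emph{time-varying}, and your inequality silently conflates two different maps. At iteration $t$ the relevant map is $\omega \mapsto \outerloss(\at - \innerlrt\grada\innerloss(\at;\omega,\theta))$, whose value at $\wtnext$ is $\outerloss(\atnext(\wtnext))$; smoothness in $\omega$ therefore controls $\outerloss(\atnext(\wtnext)) - \outerloss(\atnext(\wt))$, \emph{not} $\outerloss(\atnextnext(\wtnext)) - \outerloss(\atnext(\wt))$, because $\atnextnext$ is built from the updated base point $\atnext$, i.e., from a different inner map. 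The paper handles exactly this by splitting
\begin{equation*}
\outerloss(\hatatnextnext(\wtnext)) - \outerloss(\hatatnext(\wt)) = \left\{ \outerloss(\hatatnextnext(\wtnext)) - \outerloss(\hatatnext(\wtnext)) \right\} + \left\{ \outerloss(\hatatnext(\wtnext)) - \outerloss(\hatatnext(\wt)) \right\},
\end{equation*}
bounding the first (drift) term via smoothness in $\alpha$ and the gradient bound $\rho$ by $\innerlrtnext\rho^2 + \frac{L}{2}\innerlrtnext^2\rho^2$, and only the second term by the $\omega$-descent step you wrote. Your proof as stated never produces this drift term, and---tellingly---never uses the hypothesis $\innerlrt = \min\{1, \frac{c_1}{T}\}$ in the main argument. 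That hypothesis is not decorative: the per-step drift summed over $T$ iterations contributes $O(T \cdot \frac{c_1}{T}) = O(1)$ to the numerator precisely because $\innerlrt = O(1/T)$; with, say, a constant inner learning rate the accumulated drift would be $\Theta(T)$ and the $\mathcal{O}(1/\sqrt{T})$ rate would fail. Once the drift term is restored, your telescoping and schedule arithmetic ($\sum_{t=1}^T\outerlrt = c_2\sqrt{T}$ and $\sum_{t=1}^T\outerlrt^2 = c_2^2$ in the regime $\frac{c_2}{\sqrt{T}} \le \frac{1}{L}$) deliver the claimed bound essentially as in the paper's final chain of inequalities~(\ref{eq:th1-7}).

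Two smaller mismatches are worth noting but are not the source of the gap. First, your stochastic framing (an unbiased mini-batch estimator $\widehat{g}_t$ with bounded second moment) generalizes the paper, whose update $\wtnext - \wt = -\outerlrt \gradw \outerloss(\hatatnext(\wt))$ uses the exact gradient and whose expectations are essentially cosmetic; you are importing an unbiasedness assumption not among the theorem's hypotheses. Second, where you invoke boundedness below of $\outerloss$ with constant $C_0$, the paper simply drops $-\outerloss(\hat{\alpha}_{T+2}(\omega_{T+1}))$ after telescoping, implicitly assuming nonnegativity of $\outerloss$; your version is the more careful one. Finally, your closing paragraph does flag the composite-smoothness issue, but frames it as Jacobian/Hessian boundedness of a \emph{fixed} composite map---a concern the paper sidesteps by directly assuming $L$-smoothness of $\outerloss$ as a function of $\omega$. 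The missing idea is the iteration-to-iteration movement of $\at$, which forces the two-term decomposition above, not the value of the smoothness constant.
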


\begin{proof}
    First,
    \begin{equation}
    \begin{aligned}
          & \outerloss(\hatatnextnext(\wtnext)) - \outerloss(\hatatnext(\wt)) \\
        = & \left\{ \outerloss(\hatatnextnext(\wtnext)) - \outerloss(\hatatnext(\wtnext)) \right\} + \left\{ \outerloss(\hatatnext(\wtnext)) - \outerloss(\hatatnext(\wt)) \right\}.
    \end{aligned}
    \label{eq:th1-1}
    \end{equation}
    Then we separately derive the two terms of~(\ref{eq:th1-1}). For the first term,
    \begin{equation}
    \begin{aligned}
            & \outerloss(\hatatnextnext(\wtnext)) - \outerloss(\hatatnext(\wtnext)) \\
        \le & \gradhata \outerloss(\hatatnext(\wtnext))^\top (\hatatnextnext(\wtnext) - \hatatnext(\wtnext)) + \frac{L}{2} \left\| \hatatnextnext(\wtnext) - \hatatnext(\wtnext) \right\|^2 \\
        \le & \left\| \gradhata \outerloss(\hatatnext(\wtnext)) \right\| \cdot \left\| \hatatnextnext(\wtnext) - \hatatnext(\wtnext) \right\| + \frac{L}{2} \left\| \hatatnextnext(\wtnext) - \hatatnext(\wtnext) \right\|^2 \\
        \le & \rho \cdot \left\| -\innerlrtnext\gradhata\innerloss(\hatatnext) \right\| + \frac{L}{2} \left\| -\innerlrtnext\gradhata\innerloss(\hatatnext) \right\|^2 \\
        \le & \innerlrtnext \rho^2 + \frac{L}{2} \innerlrtnext^2 \rho^2,
    \end{aligned}
    \label{eq:th1-2}
    \end{equation}
    where $\hatatnextnext(\wtnext) - \hatatnext(\wtnext) = -\innerlrtnext\gradhata\innerloss(\hatatnext)$, the first inequality holds for Lemma~\ref{lem:lipschitz_upper_bound}, the second inequality holds for Cauchy-Schwarz inequality, the third inequality holds for $\left\| \gradhata\outerloss(\hatatnext(\wtnext)) \right\| \le \rho$, and the last inequality holds for $\left\| \gradhata\innerloss(\hatatnext) \right\| \le \rho$.
    It can be proved that the gradient of $\omega$ with respect to $\outerloss$ is Lipschitz continuous and we assume the Lipschitz constant is $L$.
    Therefore, for the second term,
    \begin{equation}
    \begin{aligned}
            & \outerloss(\hatatnext(\wtnext)) - \outerloss(\hatatnext(\wt)) \\
        \le & \gradw\outerloss(\hatatnext(\wt))^\top (\wtnext - \wt) + \frac{L}{2} \left\| \wtnext - \wt \right\|^2 \\
        =   & -\outerlrt \gradw \outerloss(\hatatnext(\wt))^\top \gradw \outerloss(\hatatnext(\wt)) + \frac{L\outerlrt^2}{2} \left\| \gradw \outerloss(\hatatnext(\wt)) \right\|^2 \\
        =   & -(\outerlrt - \frac{L\outerlrt^2}{2}) \left\| \gradw \outerloss(\hatatnext(\wt)) \right\|^2,
    \end{aligned}
    \label{eq:th1-3}
    \end{equation}
    where $\wtnext - \wt = -\outerlrt \gradw \outerloss(\hatatnext(\wt))$, and the first inequality holds for Lemma~\ref{lem:lipschitz_upper_bound}. Therefore,~(\ref{eq:th1-1}) becomes
    \begin{equation}
    \begin{aligned}
        \outerloss(\hatatnextnext(\wtnext)) - \outerloss(\hatatnext(\wt)) \le \innerlrtnext \rho^2 + \frac{L}{2} \innerlrtnext^2 \rho^2 - (\outerlrt - \frac{L\outerlrt^2}{2}) \left\| \gradw \outerloss(\hatatnext(\wt)) \right\|^2.
    \end{aligned}
    \label{eq:th1-4}
    \end{equation}
    Rearranging the terms of~(\ref{eq:th1-4}), we obtain
    \begin{equation}
    \begin{aligned}
        (\outerlrt - \frac{L\outerlrt^2}{2}) \left\| \gradw \outerloss(\hatatnext(\wt)) \right\|^2 \le \outerloss(\hatatnext(\wt)) - \outerloss(\hatatnextnext(\wtnext)) + \innerlrtnext \rho^2 + \frac{L}{2} \innerlrtnext^2 \rho^2.
    \end{aligned}
    \label{eq:th1-5}
    \end{equation}
    Then, we sum up both sides of~(\ref{eq:th1-5}),
    \begin{equation}
    \begin{aligned}
            & \sum_{t=1}^T (\outerlrt - \frac{L\outerlrt^2}{2}) \left\| \gradw \outerloss(\hatatnext(\wt)) \right\|^2 \\
        \le & \outerloss(\hat{\alpha}_2(\omega_1)) - \outerloss(\hat{\alpha}_{T+2}(\omega_{T+1})) + \sum_{t=1}^T (\innerlrtnext \rho^2 + \frac{L}{2} \innerlrtnext^2 \rho^2) \\
        \le & \outerloss(\hat{\alpha}_2(\omega_1)) + \sum_{t=1}^T (\innerlrtnext \rho^2 + \frac{L}{2} \innerlrtnext^2 \rho^2).
    \end{aligned}
    \label{eq:th1-6}
    \end{equation}
    Therefore,
    \begin{equation}
    \begin{aligned}
            & \min_{1 \le t \le T} \mathbb{E}\left[ \left\| \gradw \outerloss(\hatatnext(\wt)) \right\|^2 \right] \\
        \le & \frac{\sum_{t=1}^T (\outerlrt - \frac{L\outerlrt^2}{2}) \left\| \gradw \outerloss(\hatatnext(\wt)) \right\|^2}{\sum_{t=1}^T (\outerlrt - \frac{L\outerlrt^2}{2})} \\
        \le & \frac{1}{\sum_{t=1}^T (2\outerlrt - L\outerlrt^2)} \left[ 2\outerloss(\hat{\alpha}_2(\omega_1)) + \sum_{t=1}^T (2\innerlrtnext \rho^2 + L\innerlrtnext^2 \rho^2) \right] \\
        \le & \frac{1}{\sum_{t=1}^T \outerlrt} \left[ 2\outerloss(\hat{\alpha}_2(\omega_1)) + \sum_{t=1}^T \innerlrtnext \rho^2(2 + L\innerlrtnext) \right] \\
        \le & \frac{1}{T\outerlrt} \left[ 2\outerloss(\hat{\alpha}_2(\omega_1)) + T \innerlrtnext \rho^2(2 + L) \right] \\
        =   & \frac{2\outerloss(\hat{\alpha}_2(\omega_1))}{T\outerlrt} + \frac{\innerlrtnext \rho^2 (2 + L)}{\outerlrt} \\
        =   & \frac{2\outerloss(\hat{\alpha}_2(\omega_1))}{T}\max\{L, \frac{\sqrt{T}}{c_2}\} + \min\{1, \frac{c_1}{T}\} \max\{L, \frac{\sqrt{T}}{c_2}\} \rho^2 (2 + L) \\
        \le & \frac{2\outerloss(\hat{\alpha}_2(\omega_1))}{c_2\sqrt{T}} + \frac{c_1 \rho^2 (2 + L)}{c_2\sqrt{T}} \\
        =   &\mathcal{O}\left(\frac{1}{\sqrt{T}}\right),
    \end{aligned}
    \label{eq:th1-7}
    \end{equation}
    where the second inequality holds according to~(\ref{eq:th1-6}), the third inequality holds for $\sum_{t=1}^T \left( 2\outerlrt - L\outerlrt^2 \right) \ge \sum_{t=1}^T \outerlrt$.
\end{proof}

\subsection{Theorem 2: Convergence of the Inner Loss}
% Lemma 3
\begin{lemma} \label{lem:sequence_convergence} (Lemma A.5 in~\citet{mairal2013stochastic})
    Let $(a_n)_{n \ge 1}, (b_n)_{n \ge 1}$ be two non-negative real sequences such that the series $\sum_{n=1}^{\infty} a_n$ diverges, the series $\sum_{n=1}^{\infty} a_n b_n$ converges, and there exists $C>0$ such that $\left|b_{n+1} - b_n\right| \le C a_n$. Then, the sequence $(b_n)_{n \ge 1}$ converges to 0.
\end{lemma}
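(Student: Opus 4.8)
The plan is to argue by contradiction, isolating the two ways in which $b_n \to 0$ can fail and ruling each out with a different hypothesis. Since every $b_n \ge 0$, convergence to $0$ fails exactly when either $\liminf_n b_n > 0$, or $\liminf_n b_n = 0$ while $\limsup_n b_n > 0$. I would dispose of the first case immediately: if $\liminf_n b_n = c > 0$, then $b_n \ge c/2$ for all large $n$, so the tail of $\sum_n a_n b_n$ is at least $(c/2)$ times the tail of $\sum_n a_n$, which diverges by hypothesis, contradicting $\sum_n a_n b_n < \infty$. Hence $\liminf_n b_n = 0$, and the whole content lies in the second case.

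So suppose $\liminf_n b_n = 0$ but $\limsup_n b_n > 0$, and fix $\epsilon > 0$ with $2\epsilon < \limsup_n b_n$, so that $b_n \ge 2\epsilon$ for infinitely many $n$ while $b_n < \epsilon$ for infinitely many $n$. The strategy is to extract infinitely many pairwise disjoint index intervals on each of which $b$ is uniformly bounded below by $\epsilon$ while $a$ accumulates a definite amount of mass; each such interval then contributes a fixed positive amount to $\sum_n a_n b_n$, forcing that series to diverge. The bridge between ``$a$-mass'' and ``movement of $b$'' is hypothesis (3): telescoping $b_q - b_p = \sum_{n=p}^{q-1}(b_{n+1}-b_n)$ and bounding by $\sum_{n=p}^{q-1}|b_{n+1}-b_n| \le C\sum_{n=p}^{q-1} a_n$ shows that any net change of size at least $\epsilon$ in $b$ across an interval requires $\sum_{n=p}^{q-1} a_n \ge \epsilon/C$.

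The construction I would use is built on \emph{descents} rather than ascents, which is the crux of the whole argument. Greedily pick $q_1 < r_1 < q_2 < r_2 < \cdots$ where each $q_k$ satisfies $b_{q_k} \ge 2\epsilon$ and $r_k$ is the first index after $q_k$ with $b_{r_k} < \epsilon$; both always exist because $b \ge 2\epsilon$ and $b < \epsilon$ each occur infinitely often. By minimality of $r_k$, every $n$ with $q_k \le n < r_k$ has $b_n \ge \epsilon$, and the drop $b_{q_k} - b_{r_k} > \epsilon$ forces $\sum_{n=q_k}^{r_k-1} a_n \ge \epsilon/C$ via the telescoping bound. Since $b_n \ge \epsilon$ on the \emph{entire} summation range $[q_k, r_k-1]$, I obtain $\sum_{n=q_k}^{r_k-1} a_n b_n \ge \epsilon \sum_{n=q_k}^{r_k-1} a_n \ge \epsilon^2/C$. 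The intervals $[q_k, r_k-1]$ are disjoint, so summing over $k$ gives $\sum_n a_n b_n = \infty$, the desired contradiction; hence $\limsup_n b_n = 0$ and $b_n \to 0$.

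I expect the main obstacle to be precisely the choice of interval. The naive attempt, using \emph{ascents} from a point where $b \le \epsilon$ up to a point where $b \ge 2\epsilon$, leaves the low endpoint (where $b$ may be near $0$) inside the interval that must supply the $a$-mass; a single large jump can place essentially all the required mass at that low-$b$ index, and then the lower bound $\sum a_n b_n \ge \epsilon^2/C$ breaks. Phrasing the crossing as a descent repairs this, because the only index where $b$ is permitted to fall below $\epsilon$ is $r_k$, which lies one step \emph{outside} the summation range $[q_k, r_k-1]$: it enters only the telescoped difference, never the weighted-sum lower bound. Once this asymmetry is identified, the remaining steps are routine bookkeeping.
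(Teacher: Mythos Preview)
The paper does not supply its own proof of this lemma; it merely cites it as Lemma~A.5 of Mairal (2013) and invokes it in the proof of Theorem~2. There is therefore nothing in the paper to compare your argument against.

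On its own merits your proof is correct. The $\liminf$ case is handled by the obvious comparison, and in the oscillation case the descent-based extraction is exactly the right device: by taking $r_k$ to be the \emph{first} index after $q_k$ with $b_{r_k} < \epsilon$, you guarantee $b_n \ge \epsilon$ on all of $[q_k, r_k-1]$, which is precisely the range over which the telescoping bound forces $\sum a_n \ge \epsilon/C$. Your remark that the ascent version fails because the low-$b$ endpoint would then sit inside the summation range is the genuine subtlety here, and you have resolved it cleanly. The disjointness of the intervals and the resulting divergence of $\sum a_n b_n$ follow immediately, completing the contradiction.
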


% Theorem 2: Inner Loss Convergence
\begin{theorem}
    Suppose $\outerloss$ is Lipschitz-smooth with constant L, the gradient of $\outerloss$ and $\innerloss$ is bounded by $\rho$. Let the training iterations be $T$, the inner-level optimization learning rate $\innerlrt=\min\{1, \frac{c_1}{T}\}$ for some constant $c_1 > 0$ where $\frac{c_1}{T} < 1$. Let the outer-level optimization learning rate $\outerlrt=\min\{\frac{1}{L}, \frac{c_2}{\sqrt{T}}\}$ for some constant $c_2 > 0$ where $c_2 \le \frac{\sqrt{T}}{L}$, and $\sum_{t=1}^\infty\outerlrt \le \infty, \sum_{t=1}^\infty\outerlrt^2 \le \infty$. $\innerloss$ achieves
    \begin{equation}
        \lim_{t \to \infty} \E \left[ \left\| \grada \innerloss(\at; \wt) \right\|^2 \right] = 0.
    \end{equation}
    % \label{th:inner_loss_convergence}
\end{theorem}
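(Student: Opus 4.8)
The plan is to mirror the energy--descent argument already used for Theorem~\ref{th:outer_loss_convergence_rate}, but now treating the inner loss $\innerloss$ itself as the potential: the $\alpha$-update drives it down, while the $\omega$-update only perturbs it mildly. Having obtained a summability statement this way, I would then upgrade it from a $\liminf$-type bound to a genuine limit using the sequence lemma of \citet{mairal2013stochastic} (Lemma~\ref{lem:sequence_convergence}). Concretely, I would first write the one-step change $\innerloss(\atnext;\wtnext)-\innerloss(\at;\wt)$ and split it (mirroring the two-term split in the proof of Theorem~\ref{th:outer_loss_convergence_rate}) into the part caused by the inner step $\at\to\atnext$ with $\wt$ held fixed, and the part caused by the outer step $\wt\to\wtnext$ with $\atnext$ held fixed.

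For the inner step I would apply Lemma~\ref{lem:lipschitz_upper_bound} together with the descent update $\atnext-\at=-\innerlrt\grada\innerloss(\at;\wt)$ to extract the negative term $-(\innerlrt-\tfrac{L}{2}\innerlrt^2)\,\|\grada\innerloss(\at;\wt)\|^2$. For the outer step I would again invoke Lemma~\ref{lem:lipschitz_upper_bound} with $\|\wtnext-\wt\|=\outerlrt\|\gradw\outerloss\|\le\outerlrt\rho$, so that it contributes only a perturbation of size $\outerlrt\rho^2+\tfrac{L}{2}\outerlrt^2\rho^2$. Rearranging, taking expectations, summing over $t$, and using $\innerloss\ge 0$ (the importance weights are non-negative and $\KL\ge 0$, so the potential is bounded below and telescopes), I would reach $\sum_t(\innerlrt-\tfrac{L}{2}\innerlrt^2)\,\E[\|\grada\innerloss(\at;\wt)\|^2]<\infty$, provided the accumulated perturbation $\sum_t(\outerlrt\rho^2+\tfrac{L}{2}\outerlrt^2\rho^2)$ is finite.

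The final step is to apply Lemma~\ref{lem:sequence_convergence} with $a_t=\innerlrt-\tfrac{L}{2}\innerlrt^2$ and $b_t=\E[\|\grada\innerloss(\at;\wt)\|^2]$. The bound just derived is precisely its second hypothesis ($\sum_t a_t b_t<\infty$); its first hypothesis ($\sum_t a_t=\infty$) follows since $\sum_t\innerlrt=\infty$ while $\sum_t\innerlrt^2<\infty$ keeps $a_t\asymp\innerlrt$; and its third hypothesis $|b_{t+1}-b_t|\le C a_t$ I would verify through $|\,\|u\|^2-\|v\|^2|\le 2\rho\|u-v\|$ (gradients bounded by $\rho$) followed by joint Lipschitz-smoothness, giving $\|\grada\innerloss(\atnext;\wtnext)-\grada\innerloss(\at;\wt)\|\le L(\|\atnext-\at\|+\|\wtnext-\wt\|)\le L\rho(\innerlrt+\outerlrt)$, hence $|b_{t+1}-b_t|\le 2L\rho^2(\innerlrt+\outerlrt)$. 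Lemma~\ref{lem:sequence_convergence} then yields $b_t\to 0$, i.e. $\lim_{t\to\infty}\E[\|\grada\innerloss(\at;\wt)\|^2]=0$.

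I expect the crux to lie in controlling the coupling between the two update levels, which surfaces in two places that both reduce to one requirement: that the outer step is asymptotically dominated by the inner step, $\outerlrt\lesssim\innerlrt$. It is needed in the second step to keep $\sum_t\outerlrt$ (hence the accumulated perturbation) finite, and in the third hypothesis to collapse the bound $2L\rho^2(\innerlrt+\outerlrt)$ down to $C a_t\asymp C\innerlrt$. Since this domination does not follow from square-summability alone, I would pin it down using the explicit forms of the learning-rate schedules so that $\outerlrt/\innerlrt\to 0$; this is the single technical point on which the whole argument turns, and everything else is the routine descent-plus-telescoping bookkeeping already exhibited for Theorem~\ref{th:outer_loss_convergence_rate}.
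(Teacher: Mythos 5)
Your proposal follows the same route as the paper's proof: the identical two-term split of $\innerloss(\atnext;\wtnext)-\innerloss(\at;\wt)$ into an outer-step change at fixed $\atnext$ and an inner descent step at fixed $\wt$, the descent bound from Lemma~\ref{lem:lipschitz_upper_bound} on each part, a telescoping summation yielding $\sum_t \innerlrt\,\E\big[\|\grada\innerloss(\at;\wt)\|^2\big]<\infty$ (after absorbing the $\tfrac{L}{2}\innerlrt^2$ correction, as the paper also does), and the upgrade from summability to a true limit via Lemma~\ref{lem:sequence_convergence}. Your handling of the outer-step cross term — bounding $|\gradw\innerloss^\top(\wtnext-\wt)|\le\outerlrt\rho^2$ by Cauchy--Schwarz and pushing it to the right-hand side — is if anything cleaner than the paper's, which keeps the cross term on the left as $\outerlrt\,\E[\|\gradw\innerloss\|\cdot\|\gradw\outerloss\|]$ and disposes of it separately; both treatments rest on the theorem's standing hypothesis $\sum_{t}\outerlrt\le\infty$ (read as finiteness), so no rate comparison needs to be derived there at all.

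The genuine gap is in your final step, and it is a misdiagnosis rather than a missing estimate. You claim the whole argument turns on the domination $\outerlrt\lesssim\innerlrt$, to be extracted from the schedules via $\outerlrt/\innerlrt\to 0$. Under the stated schedules this is exactly backwards: for large $T$, $\innerlrt=c_1/T$ while $\outerlrt=c_2/\sqrt{T}$, so $\outerlrt/\innerlrt=(c_2/c_1)\sqrt{T}\to\infty$ — the outer rate dominates the inner one, and your verification of the third hypothesis of Lemma~\ref{lem:sequence_convergence} with $a_t\asymp\innerlrt$ collapses. Fortunately no relation between the two rates is needed: apply the lemma with $a_t=\innerlrt+\outerlrt$ and $b_t=\E\big[\|\grada\innerloss(\at;\wt)\|^2\big]$. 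Then $\sum_t a_t=\infty$ follows from $\sum_t\innerlrt=\infty$; the second hypothesis holds because $\sum_t a_t b_t=\sum_t\innerlrt b_t+\sum_t\outerlrt b_t$, where the first sum is finite by your telescoping bound and the second is finite since $b_t\le\rho^2$ and $\sum_t\outerlrt<\infty$; and your additive estimate $|b_{t+1}-b_t|\le 2L\rho^2(\innerlrt+\outerlrt)=2L\rho^2 a_t$ is precisely the third hypothesis. Incidentally, the paper sidesteps this point only through an invalid algebra step: it bounds $\|(\atnext-\at,\,\wtnext-\wt)\|$ by $\innerlrt\outerlrt\,\|(\grada\innerloss,\gradw\outerloss)\|$, whereas the correct factor is $\max\{\innerlrt,\outerlrt\}$, not the product; your additive bound is the correct version of that step, so with the choice of $a_t$ above your argument closes more soundly than the printed one.
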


\begin{proof}
    First,
    \begin{equation}
    \begin{aligned}
          & \innerloss(\atnext; \wtnext) - \innerloss(\at; \wt) \\
        = & \left\{ \innerloss(\atnext; \wtnext) - \innerloss(\atnext; \wt) \right\} + \left\{ \innerloss(\atnext; \wt) - \innerloss(\at; \wt) \right\}.
    \end{aligned}
    \label{eq:th2-1}
    \end{equation}
    For the first term in~(\ref{eq:th2-1}),
    \begin{equation}
    \begin{aligned}
            & \innerloss(\atnext; \wtnext) - \innerloss(\atnext; \wt) \\
        \le & \gradw\innerloss(\atnext; \wt)^\top (\wtnext - \wt) + \frac{L}{2} \left\| \wtnext - \wt \right\|^2 \\
        =   & - \outerlrt \gradw\innerloss(\atnext; \wt)^\top \gradw\outerloss(\atnext(\wt)) + \frac{L\outerlrt^2}{2} \left\| \gradw\outerloss(\atnext(\wt)) \right\|^2.
    \end{aligned}
    \label{eq:th2-2}
    \end{equation}
    where $\wtnext - \wt = -\outerlrt \gradw\outerloss(\atnext(\wt))$, and the first inequality holds according to Lemma~\ref{lem:lipschitz_upper_bound}.
    For the second term in~(\ref{eq:th2-1}),
    \begin{equation}
    \begin{aligned}
            & \innerloss(\atnext; \wt) - \innerloss(\at; \wt) \\
        \le & \grada\innerloss(\at; \wt)^\top (\atnext - \at) + \frac{L}{2} \left\| \atnext - \at \right\|^2 \\
        =   & - \innerlrt \grada\innerloss(\at; \wt)^\top \grada\innerloss(\at; \wt) + \frac{L\innerlrt^2}{2} \left\| \grada\innerloss(\at; \wt) \right\|^2 \\
        =   & - (\innerlrt - \frac{L\innerlrt^2}{2}) \left\| \grada\innerloss(\at; \wt) \right\|^2.
    \end{aligned}
    \label{eq:th2-3}
    \end{equation}
    where $\atnext - \at = -\innerlrt \grada\innerloss(\at; \wt)$, and the first inequality holds according to Lemma~(\ref{lem:lipschitz_upper_bound}).
    Therefore,~(\ref{eq:th2-1}) becomes
    \begin{equation}
    \begin{aligned}
            & \innerloss(\atnext; \wtnext) - \innerloss(\at; \wt) \\
        \le & - \outerlrt \gradw\innerloss(\atnext; \wt)^\top \gradw\outerloss(\atnext(\wt)) + \frac{L\outerlrt^2}{2} \left\| \gradw\outerloss(\atnext(\wt)) \right\|^2 \\
            & - (\innerlrt - \frac{L\innerlrt^2}{2}) \left\| \grada\innerloss(\at; \wt) \right\|^2.
    \end{aligned}
    \label{eq:th2-4}
    \end{equation}
    Taking expectation of both sides of~(\ref{eq:th2-4}) and rearranging the terms, we obtain
    \begin{equation}
    \begin{aligned}
            & \innerlrt \expectation{\squaredltwonorm{\grada\innerloss(\at; \wt)}} + \outerlrt \expectation{\ltwonorm{\gradw\innerloss(\atnext; \wt)} \cdot \ltwonorm{\gradw\outerloss(\atnext(\wt))}} \\
        \le & \expectation{\innerloss(\at; \wt)} - \expectation{\innerloss(\atnext; \wtnext)} + \frac{L\outerlrt^2}{2} \expectation{\squaredltwonorm{\gradw\outerloss(\atnext(\wt))}} \\
            & + \frac{L\innerlrt^2}{2} \expectation{\squaredltwonorm{\grada\innerloss(\at; \wt)}}.
    \end{aligned}
    \label{eq:th2-5}
    \end{equation}
    Summing up both sides of~(\ref{eq:th2-5}) from $t=1$ to $\infty$,
    \begin{equation}
    \begin{aligned}
            & \sum_{t=1}^\infty \innerlrt \expectation{\squaredltwonorm{\grada\innerloss(\at; \wt)}} + \sum_{t=1}^\infty \outerlrt \expectation{\ltwonorm{\gradw\innerloss(\atnext; \wt)} \cdot \ltwonorm{\gradw\outerloss(\atnext(\wt))}} \\
        \le & \expectation{\innerloss(\alpha_1; \omega_1)} - \lim_{t\to\infty}\expectation{\innerloss(\atnext; \wtnext)} + \sum_{t=1}^\infty \frac{L\outerlrt^2}{2} \expectation{\squaredltwonorm{\gradw\outerloss(\atnext(\wt))}} \\
            & + \sum_{t=1}^\infty \frac{L\innerlrt^2}{2} \expectation{\squaredltwonorm{\grada\innerloss(\at; \wt)}} \\
        \le & \sum_{t=1}^\infty \frac{L(\innerlrt^2 + \outerlrt^2)\rho^2}{2} + \expectation{\innerloss(\alpha_1; \omega_1)} \le \infty,
    \end{aligned}
    \label{eq:th2-6}
    \end{equation}
    where the second inequality holds for $\sum_{t=1}^\infty \innerlrt^2 \le \infty$, $\sum_{t=1}^\infty \outerlrt^2 \le \infty$, $\ltwonorm{\grada\innerloss(\at; \wt)} \le \rho$, $\ltwonorm{\gradw\outerloss(\atnext(\wt))} \le \rho$.
    Since
    \begin{equation}
        \sum_{t=1}^\infty \outerlrt \expectation{\ltwonorm{\gradw\innerloss(\atnext; \wt)} \cdot \ltwonorm{\gradw\outerloss(\atnext(\wt))}} \le L\rho \sum_{t=1}^\infty \outerlrt \le \infty.
    \label{eq:th2-7}
    \end{equation}
    Therefore, we have
    \begin{equation}
        \sum_{t=1}^\infty \innerlrt \expectation{\squaredltwonorm{\grada\innerloss(\at; \wt)}} < \infty.
    \label{eq:th2-8}
    \end{equation}
    Since $\left|(\|a\|+\|b\|)(\|a\|-\|b\|)\right| \le \|a+b\|\|a-b\|$, we can derive that
    \begin{equation}
    \begin{aligned}
            & \left| \expectation{\squaredltwonorm{\grada\innerloss(\atnext; \wtnext)}} - \expectation{\squaredltwonorm{\grada\innerloss(\at; \wt)}} \right| \\
        =   & \Big| \Bigexpectation{\big( \ltwonorm{\grada\innerloss(\atnext; \wtnext)} + \ltwonorm{\grada\innerloss(\at; \wt)} \big) + \big( \ltwonorm{\grada\innerloss(\atnext; \wtnext)} - \ltwonorm{\grada\innerloss(\at; \wt)} \big)} \Big| \\
        \le & \Bigexpectation{\Big| \ltwonorm{\grada\innerloss(\atnext; \wtnext)} + \ltwonorm{\grada\innerloss(\at; \wt)} \Big| \Big| \ltwonorm{\grada\innerloss(\atnext; \wtnext)} - \ltwonorm{\grada\innerloss(\at; \wt)} \Big|} \\
        \le & \Bigexpectation{\ltwonorm{\grada\innerloss(\atnext; \wtnext) + \grada\innerloss(\at; \wt)} \cdot \ltwonorm{\grada\innerloss(\atnext; \wtnext) - \grada\innerloss(\at; \wt)}} \\
        \le & \Bigexpectation{\big( \ltwonorm{\grada\innerloss(\atnext; \wtnext)} + \ltwonorm{\grada\innerloss(\at; \wt)} \big) \ltwonorm{\grada\innerloss(\atnext; \wtnext) - \grada\innerloss(\at; \wt)}} \\
        \le & 2L\rho \Bigexpectation{\ltwonorm{\left(\atnext, \wtnext\right) - \left(\at, \wt\right)}} \\
        \le & 2L\rho\innerlrt\outerlrt \Bigexpectation{\ltwonorm{\left( \grada\innerloss(\at; \wt), \gradw\outerloss(\atnext(\wt)) \right)}} \\
        \le & 2L\rho\innerlrt\outerlrt \sqrt{\expectation{\squaredltwonorm{\grada\innerloss(\at; \wt)}} + \expectation{\squaredltwonorm{\gradw\outerloss(\atnext(\wt))}}} \\
        \le & 2L\rho\innerlrt\outerlrt \sqrt{2\rho^2} \\
        \le & 2\sqrt{2}L\rho^2\innerlrt\outerlrt.
    \end{aligned}
    \label{eq:th2-9}
    \end{equation}
    Since $\sum_{t=1}^\infty \innerlrt = \infty$, according to Lemma~\ref{lem:sequence_convergence}, we have
    \begin{equation}
        \lim_{t \to \infty} \E \left[ \left\| \grada \innerloss(\at; \wt) \right\|^2 \right] = 0.
    \label{eq:th2-10}
    \end{equation}
\end{proof}

% ==============================================================
\section{Experimental Details}\label{appendix:exp_detail}
In this section, we provide a concrete description of our experiments and detailed hyper-parameters of RAT. For each run of experiments, we run on a single Nvidia Tesla V100 GPUs and 16 CPU cores (Intel Xeon Gold 6230 CPU @ 2.10GHz) for training.

\subsection{Tasks}
In phase one of our experiments, we evaluate our method on eight robotic manipulation tasks obtained from Meta-world~\citep{yu2020meta}. These tasks serve as a representative set for testing the effectiveness of our approach. In phase two, we further assess our method on two locomotion tasks sourced from Mujoco~\citep{6386109}. By including tasks from both domains, we aim to demonstrate the versatility and generalizability of our approach across different task types. The specific tasks we utilize in our experiments are as follows:

\noindent \textbf{Meta-world}
\begin{itemize} [leftmargin=10pt]
    \item Door Lock: An agent controls a simulated Sawyer arm to lock the door.
    \item Door Unlock: An agent controls a simulated Sawyer arm to unlock the door.
    \item Drawer Open: An agent controls a simulated Sawyer arm to open the drawer to a target position.
    \item Drawer Close: An agent controls a simulated Sawyer arm to close the drawer to a target position.
    \item Faucet Open: An agent controls a simulated Sawyer arm to open the faucet to a target position.
    \item Faucet Close: An agent controls a simulated Sawyer arm to close the faucet to a target position.
    \item Window Open: An agent controls a simulated Sawyer arm to open the window to a target position.
    \item Window Close: An agent controls a simulated Sawyer arm to close the window to a target position.
\end{itemize}

\noindent \textbf{Mujoco}
\begin{itemize} [leftmargin=10pt]
    \item Half Cheetah: A 2d robot with nine links and eight joints aims to learn to run forward (right) as fast as possible.
    \item Walker: A 2d two-legged robot aims to move in the forward (right).
\end{itemize}

\subsection{Hyper-parameters Setting}
In our experiments, we adopt the PEBBLE~\cite{lee2021pebble} as our baseline approach for reward learning from human feedback. It is worth to emphasizes that the PA-AD (oracle)~\citep{zhang2021robust} and SA-RL (oracle)~\citep{sun2022who} use the truth victim reward function. To ensure a fair comparison, All methods employ the same neural network structure and keep the same parameter settings as described in their work. The specific hyper-parameters for SA-RL are provided in Table~\ref{tab:hyper_sarl}.

\begin{table}[!htbp]
\begin{center}
{
\caption{Hyper-parameters of RAT for adversary training.}
\label{tab:hyper_BATTLE}
\begin{tabular}{ll|ll}
\toprule
\textbf{Hyper-parameter} & \textbf{Value} & \textbf{Hyper-parameter} & \textbf{Value} \\
\midrule
Number of layers           & $3$        & Hidden units of each layer & $256$ \\
Learning rate              & $0.0003$   & Batch size                 & $1024$  \\
Length of segment          & $50$       & Number of reward functions  & $3$ \\
Frequency of feedback      & $5000$     & Feedback batch size & $128$ \\
Adversarial budget         & $0.1$      & $(\beta_1, \beta_2)$ & $(0.9, 0.999)$ \\
\bottomrule
\end{tabular}
}
\end{center}
\end{table}

% table for SA-RL
\begin{table}[!htbp]
\begin{center}
{
\caption{Hyper-parameters of SA-RL for adversary training.}
\label{tab:hyper_sarl}
\begin{tabular}{ll|ll}
\toprule
\textbf{Hyper-parameter} & \textbf{Value} & \textbf{Hyper-parameter} & \textbf{Value} \\
\midrule
Number of layers           & $3$        & Hidden units of each layer  & $256$   \\
Learning rate              & $0.00005$  & Mini-Batch size             & $32$    \\
Length of segment          & $50$       & Number of reward functions  & $3$     \\
Frequency of feedback      & $5000$     & Feedback batch size         & $128$   \\
Adversarial budget         & $0.1$      & Entropy coefficient         & $0.0$   \\
Clipping parameter         & $0.2$      & Discount $\gamma$           & $0.99$  \\ 
GAE lambda                 & $0.95$     & KL divergence target       & $0.01$   \\
\bottomrule
\end{tabular}
}
\end{center}
\end{table}

\subsection{Victim Agents Settings}\label{appendix:victim_agents_settings}
Our experiment is divided into two phases. In the first phase, we conduct experiments using a variety of simulated robotic manipulation tasks from the Meta-world environment. In the second phase, we shift our focus to two continuous control environments from the OpenAI Gym MuJoCo suite.

\begin{figure}[!ht]
    \centering
    \includegraphics[width=0.7\linewidth]{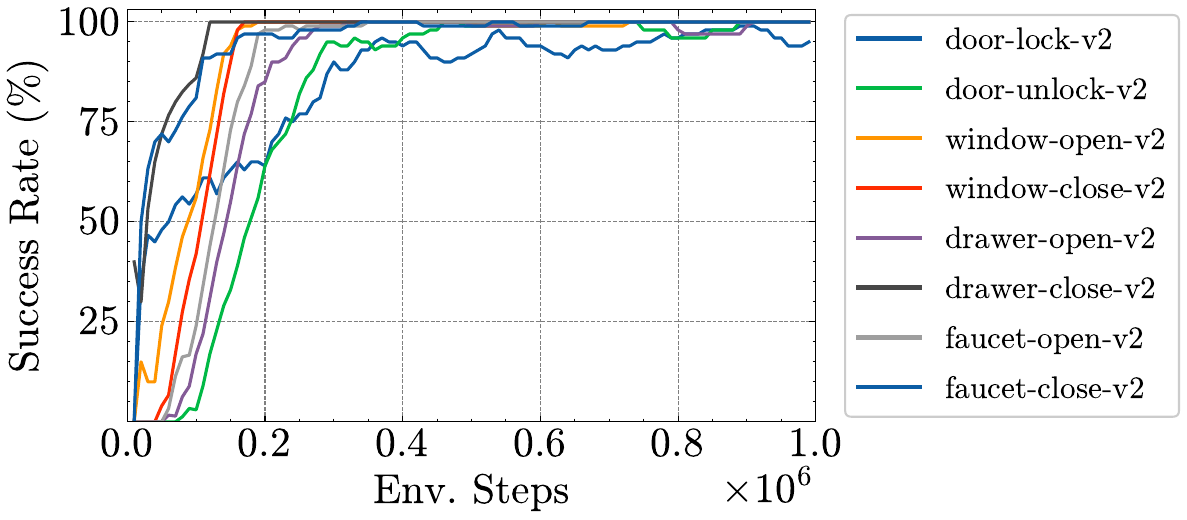}
    \caption{The evaluation curves for the training of victim agents are measured based on their success rate well-designed in Meta-world~\cite{yu2020meta}.}
    \label{fig:victim_success_rate}
\end{figure}

\noindent \textbf{Meta-world.} 
The victim models for Meta-world tasks are trained using the Soft Actor-Critic (SAC) algorithm, as introduced by~\citet{haarnoja2018soft}. Implementation is based on the open-source repository available at~\footnote{\url{https://github.com/denisyarats/pytorch_sac}}. 
In each agent's training, fully connected neural networks are utilized both as the policy network and for the double Q networks. The specific hyperparameters employed in our experiments are detailed in Table~\ref{tab:hyper_sac}. As depicted in Figure~\ref{fig:victim_success_rate}, each victim agent has been thoroughly trained to master a specific manipulation skill.

\begin{table}[!htbp]
\begin{center}
{
\caption{Hyper-parameters of SAC for victim training.}
\label{tab:hyper_sac}
\begin{tabular}{ll|ll}
\toprule
\textbf{Hyper-parameter} & \textbf{Value} & \textbf{Hyper-parameter} & \textbf{Value} \\
\midrule
Total training steps       & ${10}^6$   & Replay buffer capacity & ${10}^6$ \\ 
Number of layers           & $3$        & Initial temperature & $0.1$ \\ 
Hidden units of each layer & $256$      & Optimizer & Adam \\
Learning rate              & $0.0001$   & Critic target update freq & $2$ \\
Discount $\gamma$          & $0.99$     & Critic EMA $\tau$ & $0.005$ \\
Batch size                 & $1024$     & $(\beta_1, \beta_2)$ & $(0.9, 0.999)$ \\
Random steps               & $5000$     & Agent update frequency & $1$ \\ 
\bottomrule
\end{tabular}
}
\end{center}
\end{table}

\noindent \textbf{Mujoco.}
To demonstrate the vulnerability of the Decision Transformer, we employ well-trained models of expert-level proficiency. Specifically, we utilize the Cheetah agent\footnote{\url{https://huggingface.co/edbeeching/decision-transformer-gym-halfcheetah-expert}} and the Walker agent\footnote{\url{https://huggingface.co/edbeeching/decision-transformer-gym-walker2d-expert}}, both of which are based on Decision Transformer~\cite{chen2021decision} models. These models have been trained on expert trajectories sampled from the Gym environment.

\subsection{Scenario Design}\label{appendix:scenario_design}
To assess the efficacy of our method, we meticulously crafted two experimental setups: the Manipulation Scenario and the Opposite Behavior Scenario.

\noindent \textbf{Scenario Description.} In both scenarios, the victim agent is a proficiently trained policy in robotic tasks, as detailed in ~\ref{appendix:victim_agents_settings}. In the Manipulation Scenario, the adversary's aim is to alter the agent's behavior via targeted adversarial attacks, compelling the agent to grasp objects distant from the initially intended target location. The successful completion of these grasping actions signifies the effectiveness of the adversarial attack. Conversely, in the Opposite Behavior Scenario, the victim policy is a well-established policy in simulated robotic manipulation tasks. Here, the adversary's objective is to manipulate the agent's behavior to perform actions contrary to its original purpose. For example, if the policy is originally designed to open windows, the attacker endeavors to deceive the agent into closing them instead.

\begin{table}[!h]
\centering
\caption{Success metrics for the Meta-world tasks are quantified in meters. The metrics for the first four rows are sourced from~\cite{yu2020meta}, and we utilize the built-in functions provided therein without any alterations. The Manipulation Scenario metric, devised by us, is applied across all tasks within the Manipulation Scenario.}
\label{tab:task_metrics}
% \vspace{0.2cm}
\begin{tabular}{ll|ll}
\toprule
\footnotesize \textbf{Task} & \textbf{Success Metric} & \textbf{Task} & \textbf{Success Metric} \\
\midrule
door-lock   & $\I_{\|o - t\|_2 < 0.02}$  & door-unlock & $\I_{\|o - t\|_2 < 0.02}$   \\
drawer-open & $\I_{\|o - t\|_2 < 0.03}$  & drawer-close & $\I_{\|o - t\|_2 < 0.055}$ \\
faucet-open &  $\I_{\|o - t\|_2 < 0.07}$ & faucet-close & $\I_{\|o - t\|_2 < 0.07}$  \\
window-open & $\I_{\|o - t\|_2 < 0.05}$  & window-close & $\I_{\|o - t\|_2 < 0.05}$  \\
\midrule
Manipulation Scenario & $\I_{\|o - t\|_2 < 0.05}$  &    &                            \\
\bottomrule
\end{tabular}
\end{table}

\noindent \textbf{Evaluation Metric.} The success metric for all our tasks revolves around the proximity between the task-relevant object and its final goal position, denoted as $\I_{\|o - t\|_2 < \epsilon}$, where $\epsilon$ is a minimal distance threshold, such as 5 cm. In the Manipulation Scenario, we set $\epsilon=0.05$ (5cm). For the Opposite Behavior Scenario, we apply the success metrics and thresholds specified for each task by Meta-world~\citep{yu2020meta}. We summarize the all success metrics in our experiments in the Table~\ref{tab:task_metrics}.

% ==============================================================
\newpage
\section{Full Experiments}\label{appendix:extensive_exp}
\subsection{Full Experiment Results}\label{appendix:full_experiment_results}

%%%%%%%%%%  manipulation scenario      %%%%%%%%%% 
\begin{figure*}[!ht]
\vspace{-0.5em}
\centering
\begin{center}
\includegraphics[width=0.9\linewidth]{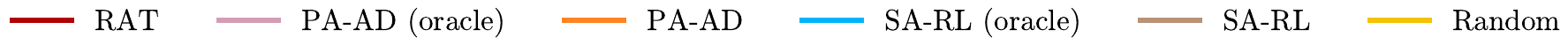}
\vspace{-1em}
\end{center}
\begin{tabular}{cccc}
\hspace*{-0.5em} \subfloat[Door Lock]{\includegraphics[width=0.25\linewidth]{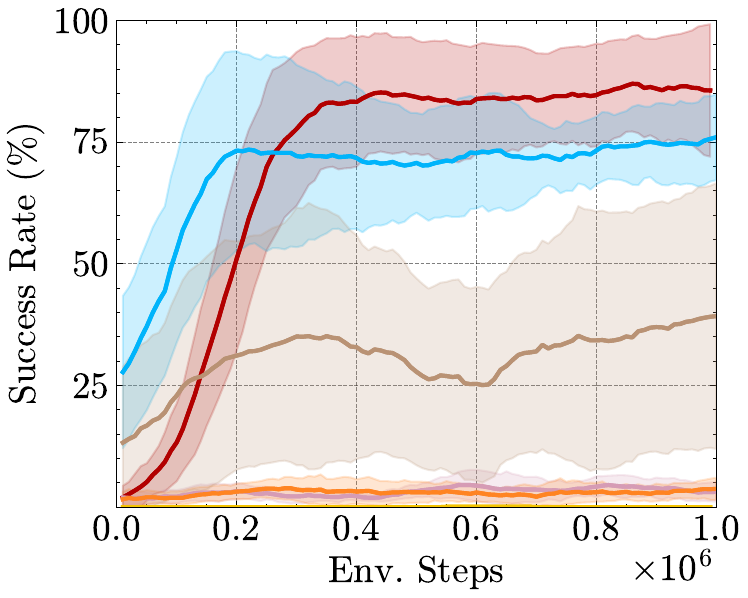}}
& \hspace*{-1.6em} \subfloat[Drawer Open]{\includegraphics[width=0.25\linewidth]{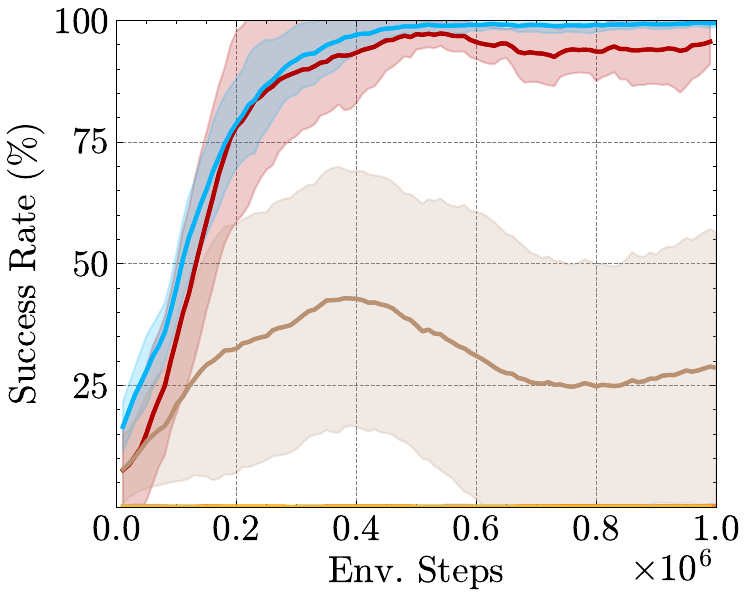}}
& \hspace*{-1.6em} \subfloat[Faucet Open]{\includegraphics[width=0.25\linewidth]{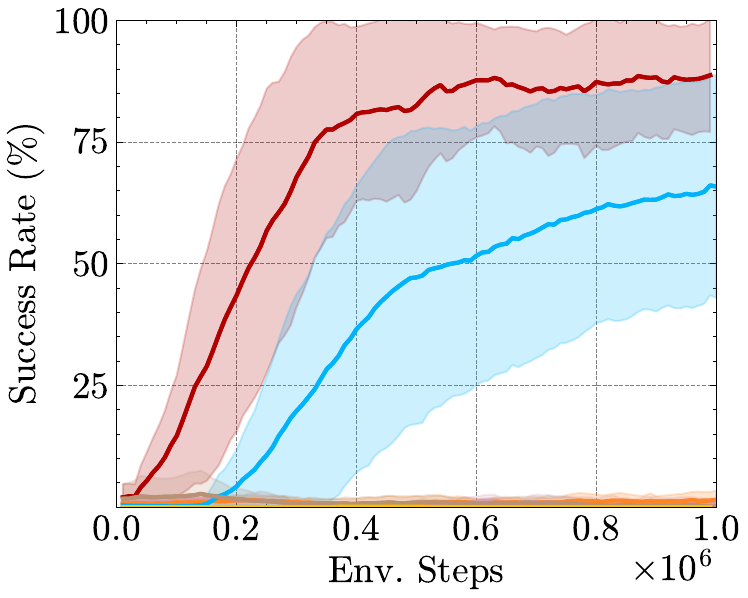}}
& \hspace*{-1.6em} \subfloat[Window Open]{\includegraphics[width=0.25\linewidth]{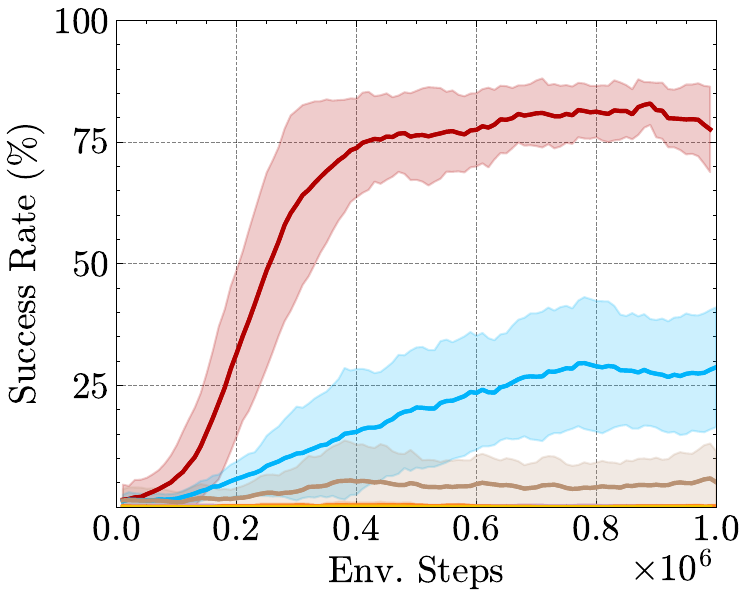}}
\vspace*{-0.7em}
\\
\hspace*{-0.5em} \subfloat[Door Unlock]{\includegraphics[width=0.25\linewidth]{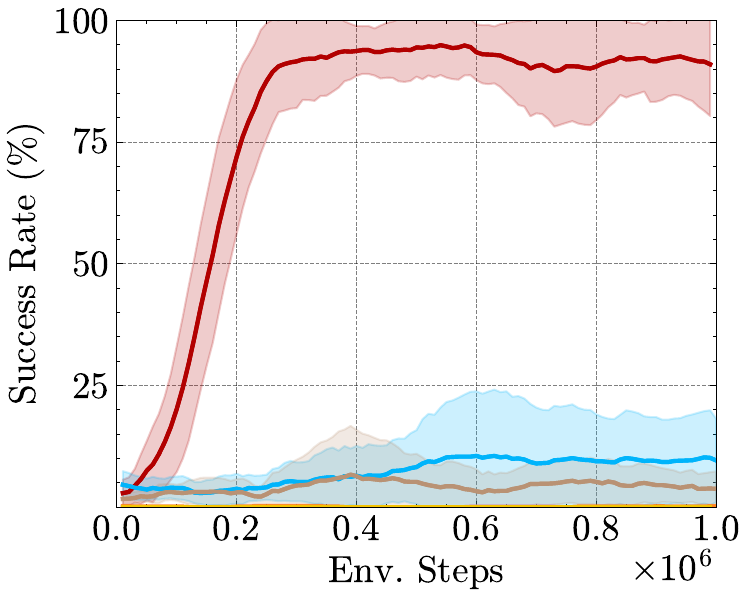}}
& \hspace*{-1.6em} \subfloat[Drawer Close]{\includegraphics[width=0.25\linewidth]{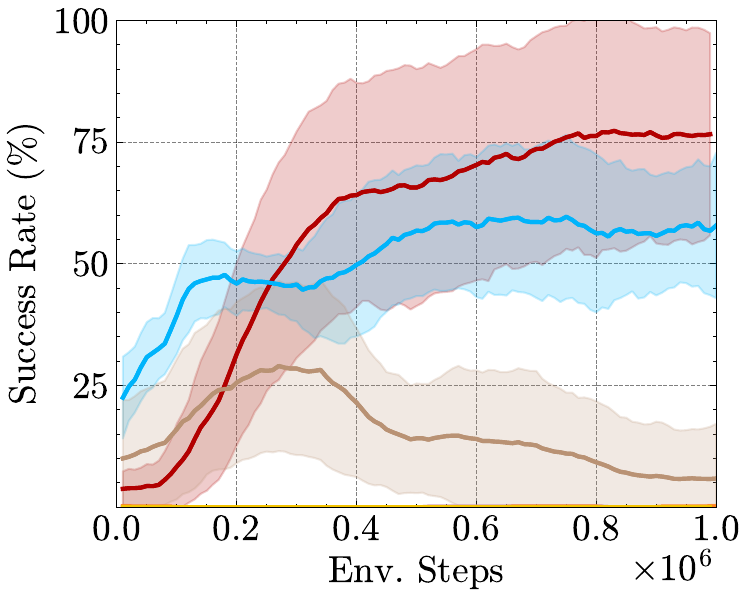}}
& \hspace*{-1.6em} \subfloat[Faucet Close]{\includegraphics[width=0.25\linewidth]{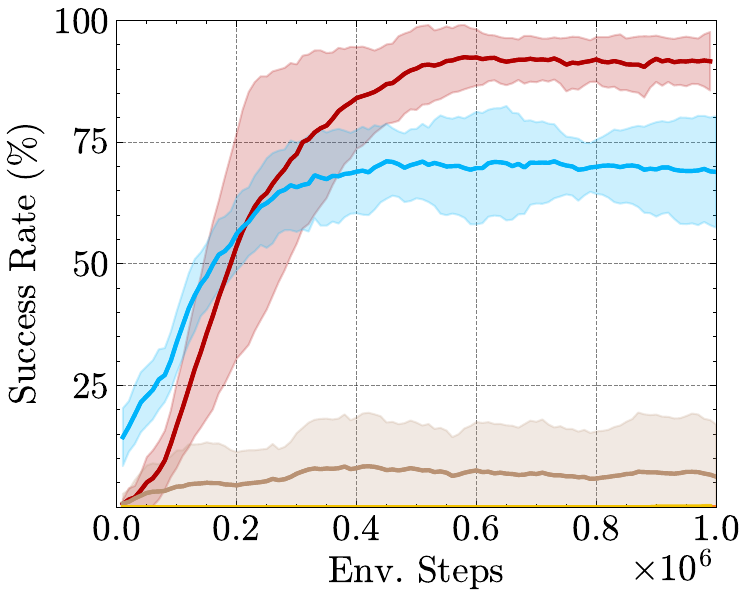}}
& \hspace*{-1.6em} \subfloat[Window Close]{\includegraphics[width=0.25\linewidth]{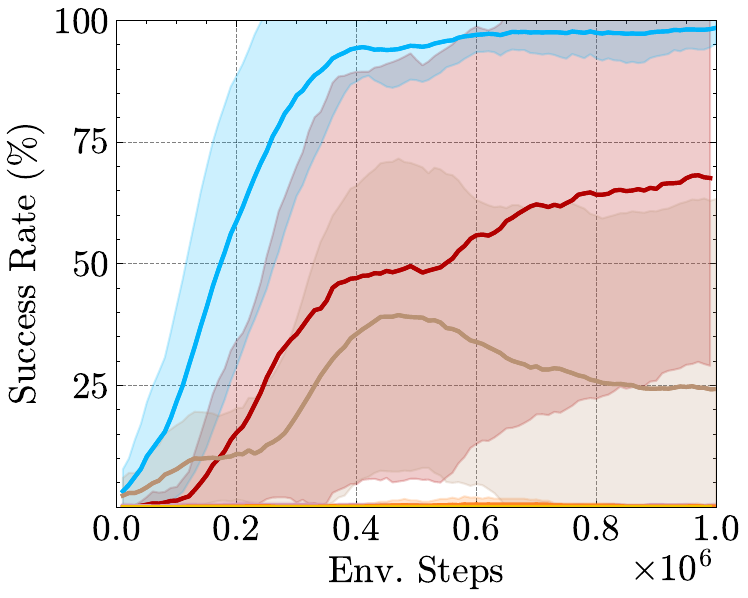}}
\end{tabular}
% \vspace*{-0.5em}
\caption{Training curves of different methods on various tasks in the manipulation scenario. The solid line and shaded area denote the mean and the standard deviation of success rate, respectively, over ten runs. The red line (our method) outperforms all the baselines in PbRL setting and even exceeds most baselines in oracle setting.}
\label{fig:main_adv}
% \vspace{-1.0em}
\end{figure*}

%%%%%%%%%%  opposite behavior scenario %%%%%%%%%% 
\begin{figure*}[!ht]
\centering
\begin{center}
\includegraphics[width=0.9\linewidth]{main_header.pdf}
\vspace{-1em}
\end{center}
\begin{tabular}{cccc}
\hspace*{-0.5em} \subfloat[Door Lock]{\includegraphics[width=0.25\linewidth]{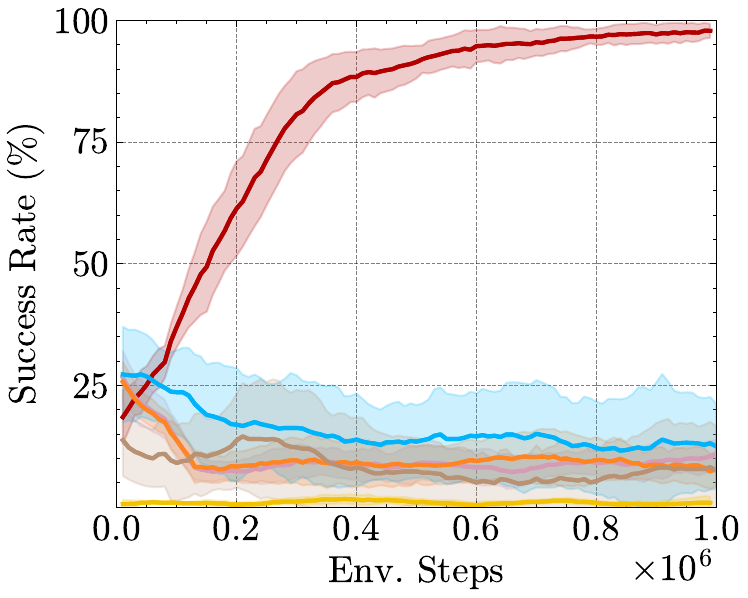}}
& \hspace*{-1.6em} \subfloat[Drawer Open]{\includegraphics[width=0.25\linewidth]{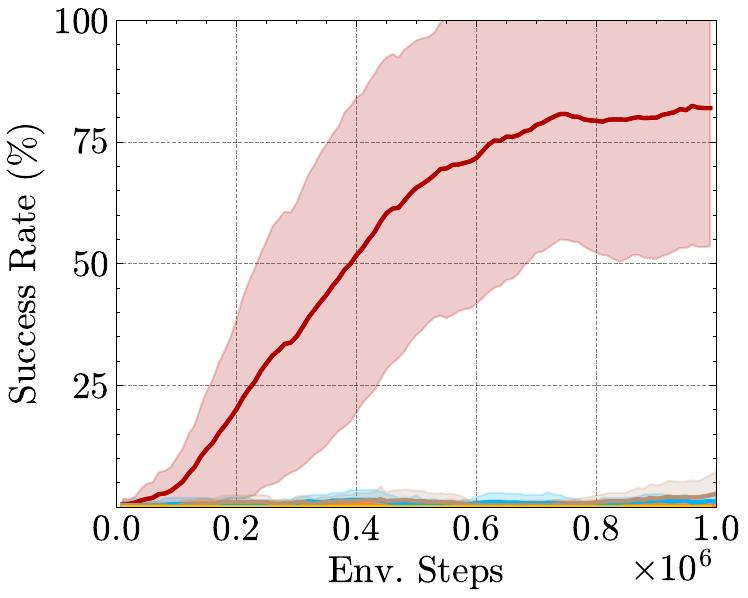}}
& \hspace*{-1.6em} \subfloat[Faucet Open]{\includegraphics[width=0.25\linewidth]{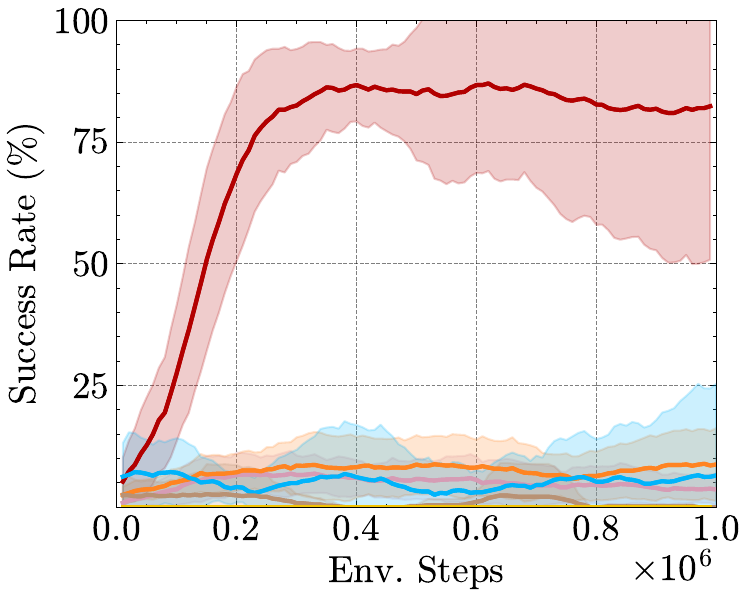}}
& \hspace*{-1.6em} \subfloat[Window Open]{\includegraphics[width=0.25\linewidth]{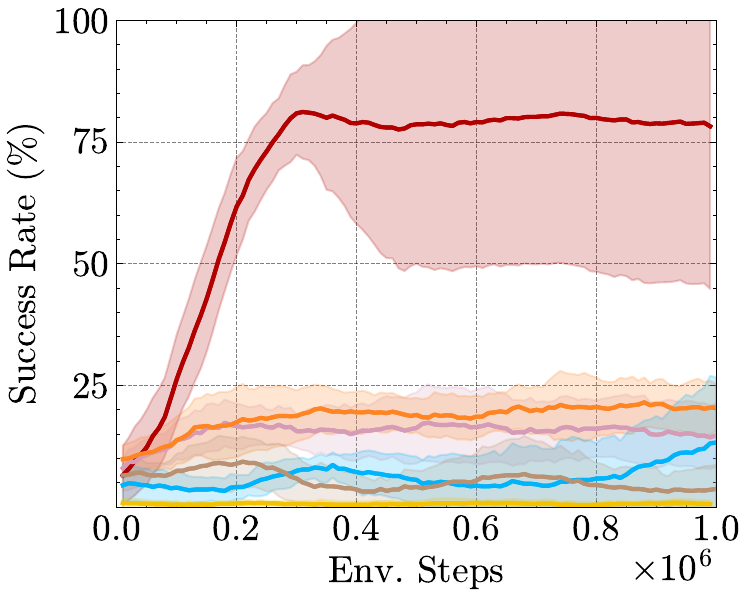}} 
\vspace*{-0.7em} 
\\
\hspace*{-0.5em} \subfloat[Door Unlock]{\includegraphics[width=0.25\linewidth]{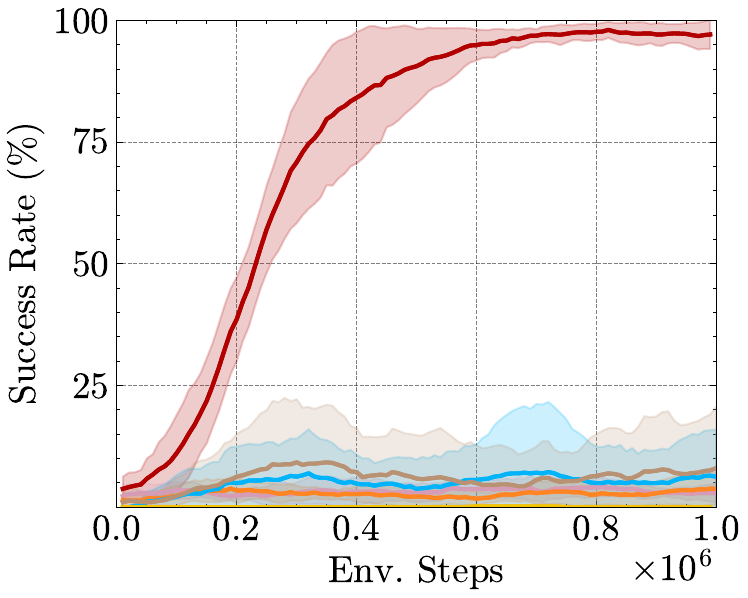}}
& \hspace*{-1.6em} \subfloat[Drawer Close]{\includegraphics[width=0.25\linewidth]{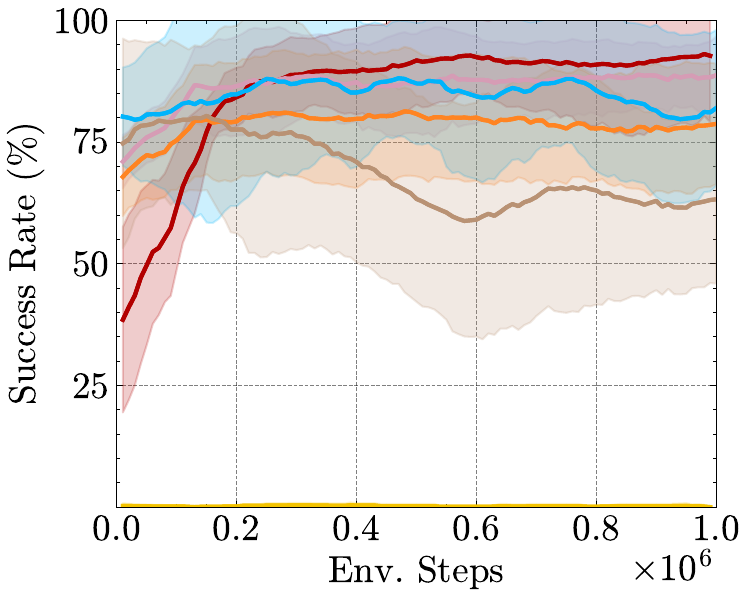}}
& \hspace*{-1.6em} \subfloat[Faucet Close]{\includegraphics[width=0.25\linewidth]{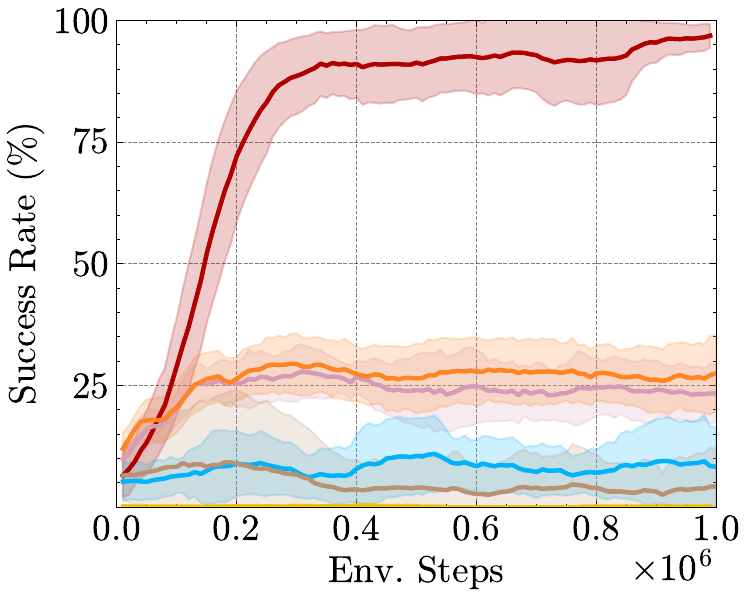}}
& \hspace*{-1.6em} \subfloat[Window Close]{\includegraphics[width=0.25\linewidth]{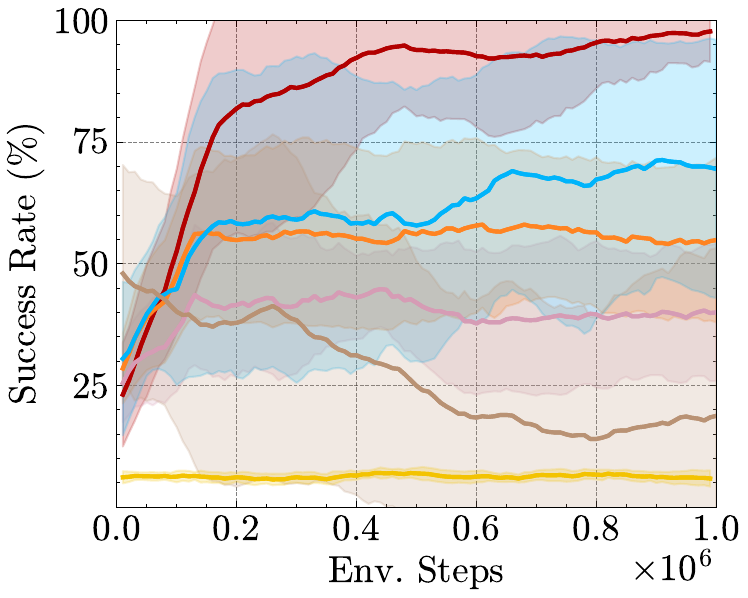}}
\end{tabular}
% \vspace*{-0.5em}
\caption{Training curves of all methods on various tasks in the opposite behaviors scenario. The solid line and shaded area denote the mean and the standard deviation of success rate over ten runs. In this scenario, the red line (our method) outperforms all the baselines in both PbRL setting and oracle setting, which demonstrates the effectiveness of \ourmethod.}
\label{fig:main_oppo}
\vspace{-1.0em}
\end{figure*}

\subsection{Ablation studies}
\noindent \textbf{Impact of Feedback Amount.}
We evaluate the performance of \ourmethod using different numbers of preference labels. Table~\ref{tab:abla_feedback} presents the results across varying numbers of labels: ${3000, 5000, 7000, 9000}$ for the Drawer Open task in the manipulation scenario and ${1000, 3000, 5000, 7000}$ for the Faucet Close task in the opposite behavior scenario. The experimental results demonstrate that increasing the number of human feedback labels significantly improves the performance of \ourmethod, leading to a stronger adversary and a more stable attack success rate. For instance, in the Drawer Open task, the attack success rate increases by 47.6\% when the number of labels rises from 3000 to 9000, demonstrating the importance of adequate feedback for effective adversary learning. In contrast, SA-RL and PA-AD exhibit poor performance even with sufficient feedback, with PA-AD failing entirely in the manipulation scenario. This is likely due to the limited exploration space in these methods, constrained by the fixed victim policy. In contrast, \ourmethod enables better exploration by incorporating an intention policy, allowing for more dynamic interactions and improved performance in complex tasks.

\begin{table}[th!]
    \centering
    \caption{Success rate of different methods with varying numbers of preference labels on the Drawer Open task in the manipulation scenario and the Faucet Close task in the opposite behavior scenario. The success rate is reported as the mean and standard deviation over 30 episodes.}
    \vspace{-0.5em}
    \resizebox{0.75\columnwidth}{!}{
    \begin{tabular}{*{4}{c}c}
        \toprule
        \textbf{Environment}  &\textbf{Feedback}  &\textbf{\ourmethod (ours)} &\textbf{PA-AD}  &\textbf{SA-RL} \\
        \midrule
        \multirow{5}{*}{\specialcell{\textbf{Drawer Open} \\ (manipulation)}}
            & $3000$
            & $65.7\% \pm 37.1\% $    
            & $0.0\%   \pm 0.0\% $ 
            & $8.3\%   \pm 13.2\% $\\
            \cmidrule(l){2-5}
            & $5000$
            & $86.7\%  \pm 18.1\% $    
            & $0.0\%   \pm 0.0\% $ 
            & $21.3\%  \pm 18.9\% $\\
            \cmidrule(l){2-5}
            & $7000$
            & $95.7\%  \pm 13.6\% $    
            & $0.0\%   \pm 0.0\% $ 
            & $28.0\%  \pm 28.1\% $\\
            \cmidrule(l){2-5}
            & $9000$
            & $97.0\%  \pm 6.9\% $    
            & $0.0\%   \pm 0.0\% $ 
            & $13.0\%  \pm 18.5\% $\\
        \midrule\midrule
        \multirow{5}{*}{\specialcell{\textbf{Faucet Close} \\ (opposite behavior)}}
            & $1000$
            & $69.7\%  \pm 35.2\% $    
            & $16.7\%  \pm 9.4\% $ 
            & $2.0\%  \pm 6.0\% $\\
            \cmidrule(l){2-5}
            & $3000$
            & $79.0\%  \pm 16.2\% $    
            & $29.0\%  \pm 14.0\% $ 
            & $6.0\%  \pm 11.7\% $\\
            \cmidrule(l){2-5}
            & $5000$
            & $95.3\%  \pm 9.2\% $    
            & $21.3\%  \pm 12.8\% $ 
            & $3.3\%  \pm 12.7\% $\\
            \cmidrule(l){2-5}
            & $7000$
            & $95.3\%  \pm 7.6\% $    
            & $22.7\%  \pm 12.4\% $ 
            & $4.0\%  \pm 7.1\% $\\
        \bottomrule
    \end{tabular}
    }
    \label{tab:abla_feedback}
\end{table}

\begin{figure*}[!h]
\centering
\begin{center}
\includegraphics[width=0.4\linewidth]{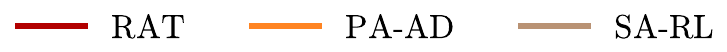}
\vspace{-1em}
\end{center}
\begin{tabular}{ccccc}
\hspace*{-0.8em}
\rotatebox{90}{\qquad Drawer Open}
& \hspace*{-1.2em} \subfloat[$\epsilon=0.05$]{\includegraphics[width=0.25\linewidth]{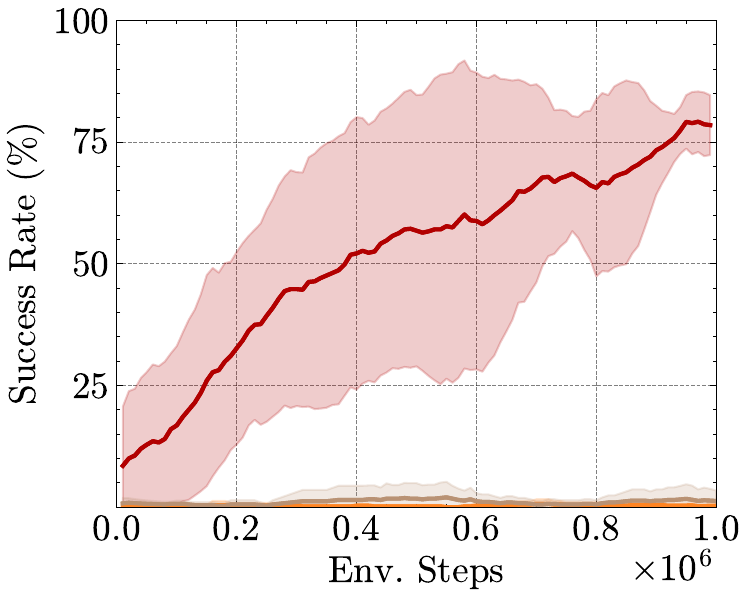}}
& \hspace*{-1.65em} \subfloat[$\epsilon=0.075$]{\includegraphics[width=0.25\linewidth]{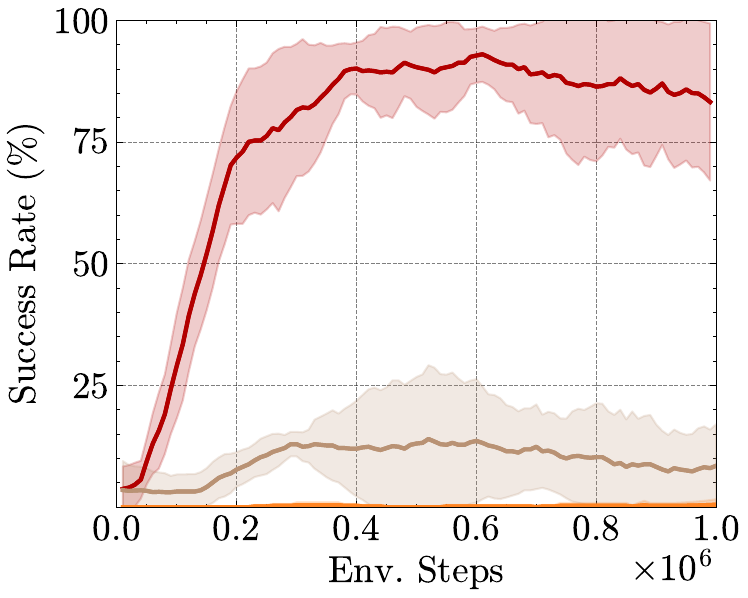}}
& \hspace*{-1.65em} \subfloat[$\epsilon=0.1$]{\includegraphics[width=0.25\linewidth]{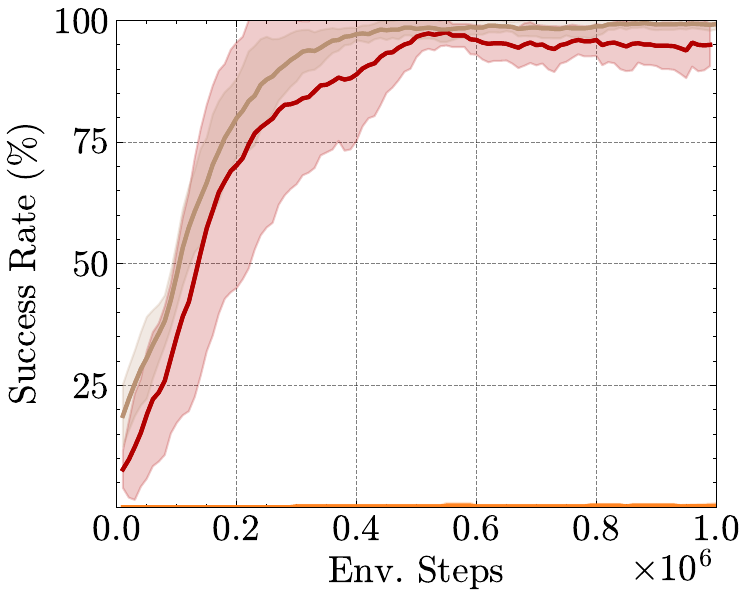}}
& \hspace*{-1.65em} \subfloat[$\epsilon=0.15$]{\includegraphics[width=0.25\linewidth]{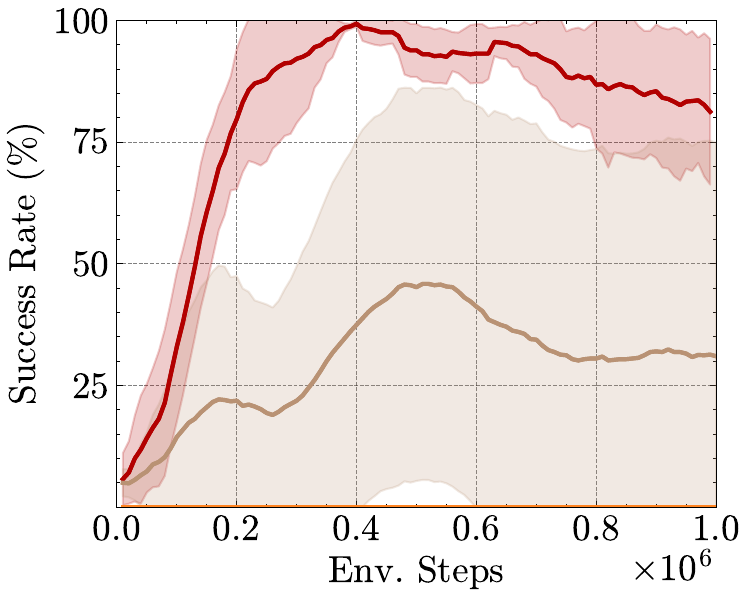}}
\vspace*{-0.7em} 
\\
\hspace*{-0.8em}
\rotatebox{90}{\qquad Faucet Close}
& \hspace*{-1.25em} \subfloat[$\epsilon=0.02$]{\includegraphics[width=0.25\linewidth]{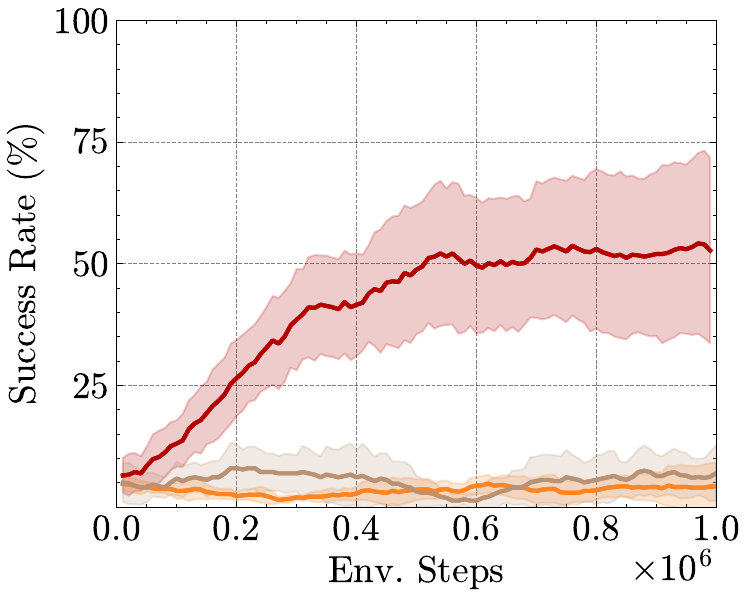}}
& \hspace*{-1.65em} \subfloat[$\epsilon=0.05$]{\includegraphics[width=0.25\linewidth]{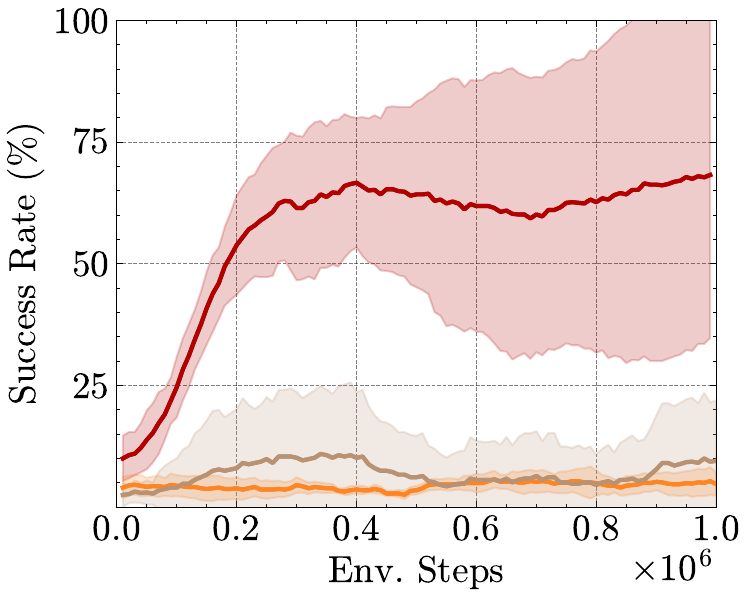}}
& \hspace*{-1.65em} \subfloat[$\epsilon=0.075$]{\includegraphics[width=0.25\linewidth]{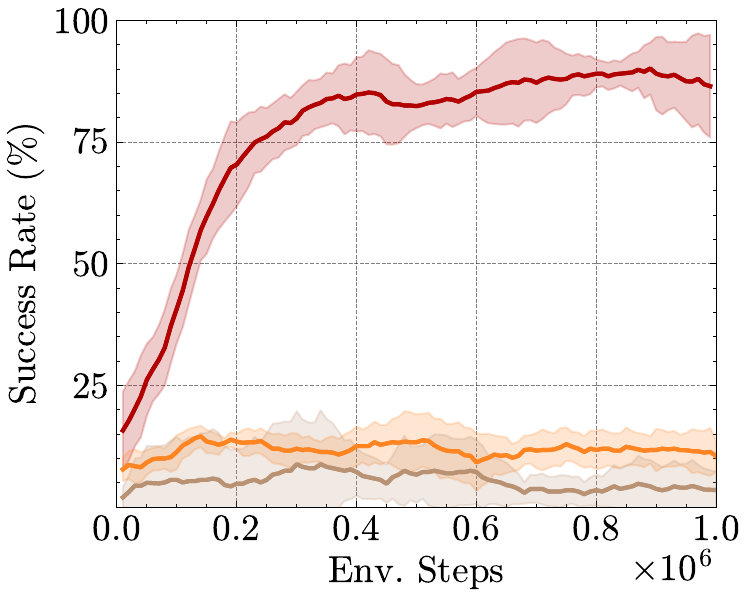}}
& \hspace*{-1.65em} \subfloat[$\epsilon=0.1$]{\includegraphics[width=0.25\linewidth]{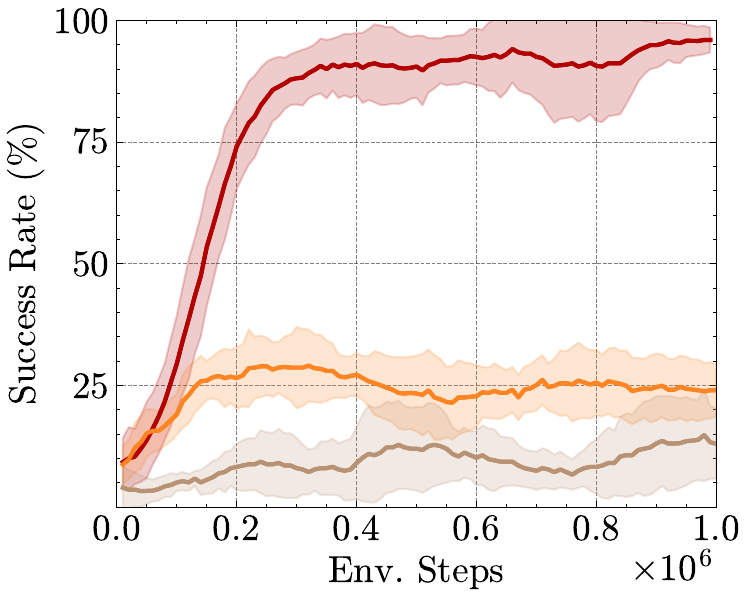}}
\end{tabular}
\caption{Training curves of success rate with different adversarial budgets on Drawer Open for the manipulation scenario and Faucet Close for the opposite behavior scenario. The solid line and shaded area denote the mean and the standard deviation of the success rate across five runs.}
\label{fig:abla_budget}
\end{figure*}

\noindent \textbf{Impact of Different Attack Budgets.}
We also investigate the impact of the attack budget on the performance. To gain further insights, we conduct additional experiments with different attack budgets: ${0.05,0.075,0.1,0.15}$ for the Drawer Open task and ${0.02,0.05,0.075,0.1}$ for the Faucet Close task in the respective scenarios. In Figure~\ref{fig:abla_budget}, we present the performance of the baseline method and \ourmethod with different attack budgets. The experimental results demonstrate that the performance of all methods improves with an increase in the attack budget.

\begin{figure*}[!htp]
\centering
\begin{tabular}{ccccc}
% \hspace*{-0.8em}
& \hspace*{-1.5em} \subfloat[Faucet Open]{\includegraphics[width=0.4\linewidth, height=0.24\linewidth]{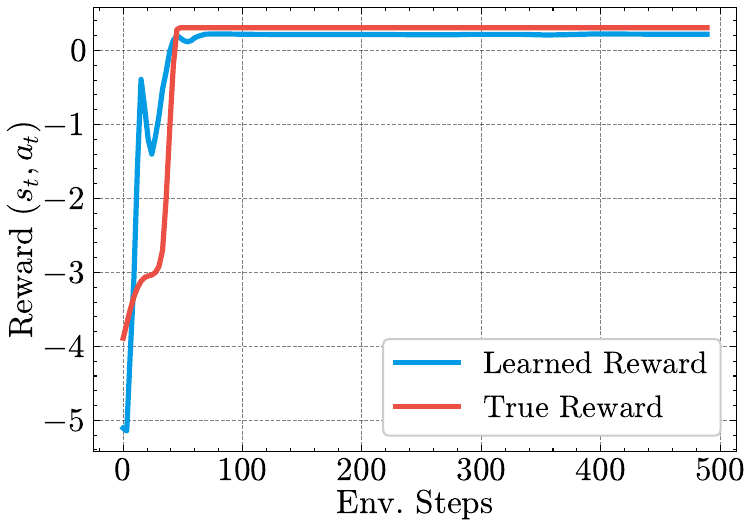}}
& \hspace*{-1.5em} \subfloat[Faucet Close]{\includegraphics[width=0.4\linewidth, height=0.24\linewidth]{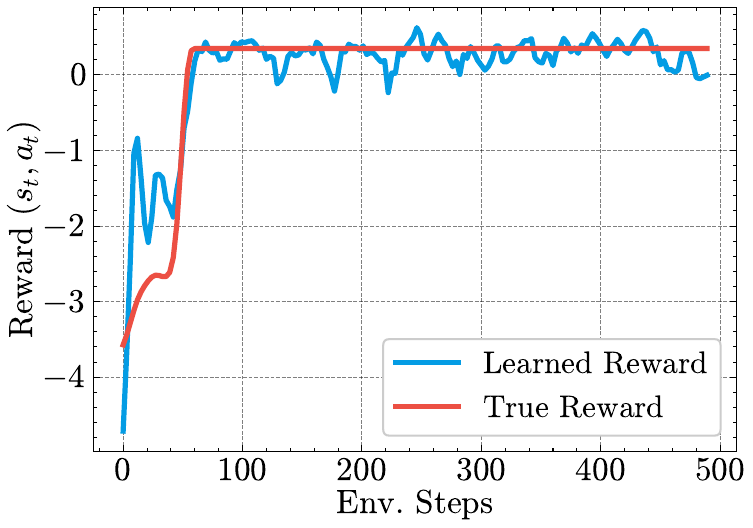}}
\vspace*{-0.7em} 
\\
% \hspace*{-0.8em}
& \hspace*{-1.5em} \subfloat[Drawer Open]{\includegraphics[width=0.4\linewidth, height=0.24\linewidth]{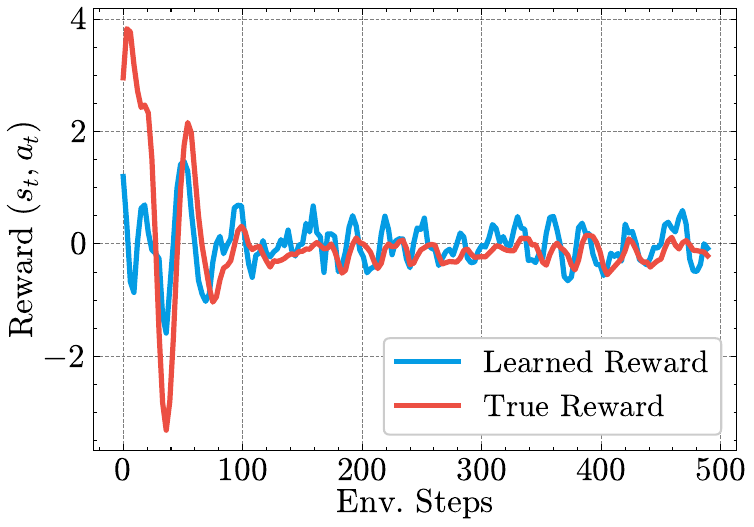}}
& \hspace*{-1.5em} \subfloat[Drawer Close]{\includegraphics[width=0.4\linewidth, height=0.24\linewidth]{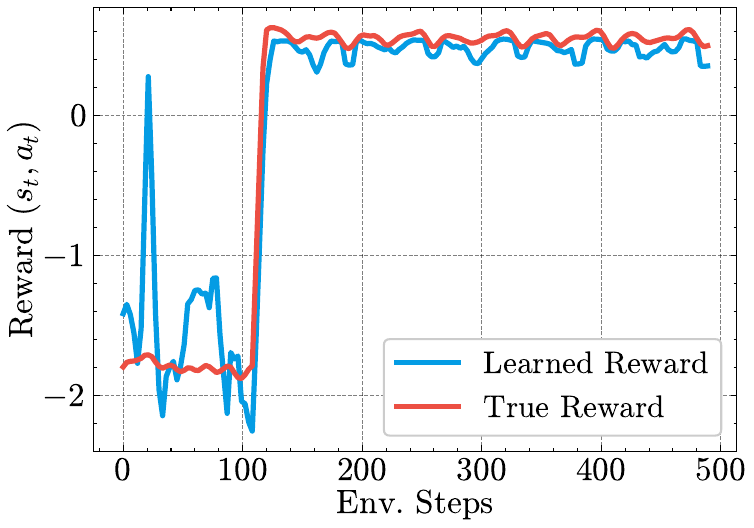}}
\end{tabular}
\vspace{-0.7em}
\caption{\textbf{Quality of learned reward.} Time series of the normalized learned reward (blue) and the ground truth reward (orange). These rewards are obtained from rollouts generated by a policy optimized using \ourmethod.}
\label{fig:reward_quality}
\end{figure*}

\noindent \textbf{Quality of learned reward functions.} 
We further analyze the quality of the reward functions learned by \ourmethod compared to the true reward function. In Figure~\ref{fig:reward_quality}, we present four time series plots that depict the normalized learned reward (blue) and the ground truth reward (orange). These plots represent two scenarios: opposite behaviors and manipulation tasks. The results indicate that the learned reward function aligns well with the true reward function derived from human feedback. This alignment is achieved by capturing various human intentions through the preference data.

\section{Discussion}
In this work, we propose \ourmethod, a novel adversarial attack framework targeting deep reinforcement learning (DRL) agents for inducing specific behaviors. \ourmethod integrates three core components: an intention policy, an adversary, and a weighting function, all trained simultaneously. Unlike prior approaches that rely on predefined target policies, \ourmethod dynamically trains an intention policy aligned with human preferences, offering a flexible and adaptive behavioral target for the adversary. Leveraging advancements in preference-based reinforcement learning (PbRL), the intention policy effectively captures human intent during training. The adversary perturbs the victim agent’s observations, steering the agent toward behaviors specified by the intention policy. To enhance attack efficacy, the weighting function adjusts the state occupancy measure, optimizing the distribution of states encountered during training. This optimization improves both the effectiveness and efficiency of the attack. Through iterative refinement, \ourmethod achieves superior precision in directing the victim agent toward human-desired behaviors compared to existing adversarial attack methods.

An important future direction is the extension of targeted adversarial attacks to LLM-based agents and Vision-Language-Action (VLA) models, which have become increasingly impactful in various practical applications driven by advancements in large-scale models~\cite{zhu2024critical, zhu-etal-2024-pad,jin-etal-2023-parameter,zhu2024utilize,10711229}. Investigating the vulnerabilities of these models to targeted adversarial attacks is crucial for identifying security risks and improving their robustness. These studies can provide valuable insights for designing more resilient architectures and effective defense mechanisms. Additionally, analyzing the behavior of these models under adversarial perturbations in complex real-world scenarios is critical to ensuring their reliability and safety in practical deployments.

\end{document}